\newcommand{\blue}[1]{\color{blue}{#1}}
\newcommand{\sz}{\fontsize{9pt}{\lineskip}\selectfont}
\newcommand*{\QEDB}{\hfill\ensuremath{\square}}  %
 \journalname{    }
\begin{document}

\title{Adaptive Importance Learning for Improving Lightweight Image Super-resolution Network}

\titlerunning{Adaptive Importance Learning for Improving Lightweight Image Super-resolution Network}        %
\author{
Lei Zhang$^{\textbf{1},\textbf{2}}$\and
Peng Wang$^\textbf{1}$\and
Chunhua Shen$^{\textbf{1}*}$ \and
Lingqiao Liu$^\textbf{1}$\and
Wei Wei$^\textbf{2}$\and
Yanning Zhang$^{\textbf{2}}$\and
Anton van den Hengel$^{\textbf{1}}$       
}

\authorrunning{Zhang et al.} %

\institute{
$^*$ Corresponding author.\\
$^{\textbf{1}}$ School of Computer Science, The University of Adelaide, Australia\\
$^{\textbf{2}}$ School of Computer Science, Northwestern Polytechnical University, Xi’an, China\\
}

\date{Received: date / Accepted: date}

\maketitle

\begin{abstract}
Deep neural networks have achieved remarkable success in single image super-resolution (SISR). The computing and memory requirements of these methods have hindered their application to broad classes of real devices with limited computing power, however. One approach to this problem has been lightweight network architectures that balance the super-resolution performance and the computation burden. In this study, we revisit this problem from an orthogonal view, and propose a novel learning strategy to maximize the pixel-wise fitting capacity of a given lightweight network architecture. Considering that the initial capacity of the lightweight network is very limited, we present an adaptive importance learning scheme for SISR that trains the network with an easy-to-complex paradigm by dynamically updating the importance of image pixels on the basis of the training loss. Specifically, we formulate the network training and the importance learning into a joint optimization problem. With a carefully designed importance penalty function, the importance of individual pixels can be gradually increased through solving a convex optimization problem. The training process thus begins with pixels that are easy to reconstruct, and gradually proceeds to more complex pixels as fitting improves. Furthermore, the proposed learning scheme is able to seamlessly assimilate knowledge from a more powerful teacher network in the form of importance initialization, thus obtaining better initial capacity in the network. Through learning the network parameters, and updating pixel importance, the proposed learning scheme enables smaller, lightweight, networks to achieve better performance than has previously been possible. Extensive experiments on four benchmark datasets demonstrate the potential benefits of the proposed learning strategy in lightweight SISR network enhancement. In some cases, our learned network with only $25\%$ of the parameters and computational complexity can produce comparable or even better results than the corresponding full-parameter network.

\keywords{Important learning, single image super-resolution, lightweight network enhancement}
\end{abstract}

\section{Introduction}
There are a wide variety of applications where the ability to increase the resolution of an image adds to the user experience, from from surveillance and public security~\cite{zhang2017beyond}, business and entertainment~\cite{liu2017compositional} to remote sensing~\cite{wei2017structured}. Single-image super resolution (SISR), the process of increasing the resolution of an image without additional information, has received significant attention~\citep{yang2014single,huang2015single,kim2016accurate} as a result.

Most early SISR methods focus on exploiting pixel statistics~\citep{kim2010single,efrat2013accurate} or the internal patch recurrence~\citep{huang2015single,glasner2009super} of HR images as priors. These methods typically do not generalise well, because even a small divergence between the properties of the real low-resolution image and the prior embodied in the heuristic causes visible artifacts in the reconstructed HR image. Recently, deep convolution neural network (DCNN) based learning methods~\citep{wang2015deep,kim2016accurate,kim2016deeply,ledig2017photo,tai2017image}, have shown remarkable success in SISR, especially on some specific scaling factors (e.g., 2-4). Nevertheless, due to their very deep structures, these methods often exhibit significant memory and computing requirements, which necessitates powerful computational units (e.g., GPUs) thus limiting their application to the many real devices with limited computing power (and particularly hand-held devices including phones).

To address this problem, some efforts~\citep{dong2016accelerating,shi2016real} dedicate to customize specific lightweight network architectures. In this study, we revisit this problem in an orthogonal view and propose to develop an novel learning strategy to maximize the pixel-wise fitting capacity of a given lightweight architecture. To this end, we revisit the traditional training procedure for a SISR network, which seeks the optimal network parameters to minimize the average loss over all pixels in training images. Moreover, pixels of diffident reconstruction difficulty are mixed together to fed into the network for training. However, by doing this, complex pixels that are difficult to reconstruct will mislead the training procedure, which renders the network even failing to handle pixels that are easy to reconstruct, since the initial capacity of the lightweight network is very limited and vulnerable. This is similar to the cognitive process of human which is prone to be confused when starts with a compound of complex and easy tasks and considers them equally. For example, when receiving a compound of easy and hard words one time, a pupil may fail to remember those easy ones that should be well mastered. Alternatively, if he starts with some easy words and gradually attempts to remember more and more hard ones when these easy words have been well mastered, more words will be remembered. Therefore, the basic pattern of human cognitive process is to learn from easy to complex and gradually enhance the capacity of human. Recently, it has been empirically demonstrated that learning as such a paradigm can avoid bad local minima and generalize better~\citep{khan2011humans,basu2013teaching}. Therefore, it is promising to enhance the capacity of the lightweight SISR network with an appropriate easy-to-complex learning paradigm.

Inspired by this, we present an adaptive importance learning scheme for SISR, which assigns importance (i.e., the probability of participating training and zero importance denotes removing the pixel during training) to each image pixel and dynamically updates the importance to control the network training following an easy-to-complex paradigm. To this end, we formulate the network training as well as the pixel-wise importance learning into a bi-convex optimization problem. With introducing a carefully designed importance penalty function, the importance of image pixels can be adaptively updated by solving a convex optimization problem. As a result, the importance is gradually increased according to the network reconstruction error on these pixels. By doing this, the network will start with pixels that are easy to reconstruct for training, and gradually be exposed to more and more complex pixels when its fitting capacity is enhanced. Furthermore, with the proposed importance learning scheme, the network can seamlessly assimilate the knowledge from a more powerful teacher network in the form of pixel importance initialization, which enables the network to generalize better. Through learning the network parameters and updating the pixel importance in an alternative way until convergence, the proposed learning scheme can obviously enhance the network capacity. With extensive experiments on four benchmark datasets and two seminal DCNN architectures for SISR, we demonstrate that the proposed adaptive importance learning scheme is able to enhance the performance of different scales of lightweight networks obviously. Moreover, due to not designing specific lightweight network architecture, it can be conveniently applied to any lightweight SISR networks for enhancement.

In summary, this study mainly contributes in the following four aspects.
\begin{itemize}[noitemsep]
\item We propose to develop an easy-to-complex learning para\-digm to maximize the fitting capacity of a given lightwei\-ght network architecture for SISR. To the best of our knowledge, this is the first attempt to do this in SISR.
\item We present an adaptive importance learning scheme to train the lightweight SISR network for enhancement.
\item We propose to distil knowledge from a more powerful teacher network for better importance initialization.
\item We demonstrate the pleasing potential of the proposed learning scheme in extensive experiments.
\end{itemize}

\section{Related work}
In this section, we briefly review the following three aspects of works related to this study.

{\textbf{Single image super-resolution}}. In early stage, SISR are addressed by exploiting the statistical characteristics of HR image as priors. For example, Sun et al. in~\citep{sun2008image} learn a gradient profile prior from extensive natural images and then apply it for SISR. In~\citep{kim2010single}, Kim et al. employ a modification of the natural image prior to refine the detailed structure along edges. Different from these methods, Glasner et al.~\citep{glasner2009super} propose to exploit the internal patch recurrence for super-resolution. Huang et al.~\citep{huang2015single} further introduce the geometric variation in searching recurrent patches. Recently, inspired by the success of deep neural networks, especially DCNN, some literatures commence at learning more powerful SISR models with DCNN from extensive LR-HR paris. For example, Dong et al.~\citep{dong2016image} construct a 3-layer DCNN for SISR which outperforms most of previous non-learning methods. With introducing residual learning, Kim et al.~\citep{kim2016accurate} develop a much deeper (e.g., 20 layers) DCNN based SISR model. Tai et al.~\citep{tai2017image} further introduce a recursive block into the global residual structure and gains the state-of-the-art performance. In~\citep{ledig2017photo}, Ledig et al. present a generative adversarial network to obtain photo-realistic HR images. Although those deep models achieve satisfactory SISR results, most of them are computational expensive to deploy on real devices. Currently, a few literatures have commenced at handling this problem by developing lightweight network architecture. For example, a compact hourglass-shape DCNN structure and a subpixel convolution structure are designed in~\citep{dong2016accelerating,shi2016real}, respectively. In this study, we solve this problem in an orthogonal view and propose to maximize the capacity of a given lightweight network with a new learning strategy. In addition, due to not involving network architecture, the proposed scheme can be directly integrated into any lightweight SISR networks for enhancement.

{\textbf{Knowledge distillation}}. This line of research aims at distilling knowledge from a complicated (or an ensemble of models) teacher model into a compact (or single) alternative without performance drop. Hinton et al.~\citep{hinton2015distilling} propose to distil knowledge by matching the soften output (e.g., logits) of teacher models. Romero et al.~\citep{romero2014fitnets} further match the intermediate features (e.g., hints) of teacher models. Zhang et al.~\citep{zhang2017deep} integrate the knowledge distillation into a mutual learning framework. Different from matching the output of teacher models, we propose to learn the pixel-wise importance of each example to training loss from a teacher model. 
 
{\textbf{Curriculum and self-paced learning}}. Similar as this study, these two paradigms learn a model gradually including from easy to complex examples in training phase. In curriculum learning~\citep{bengio2009curriculum}, the curriculum (i.e., learning sequence) is often derived by predetermined heuristics. For example, in~\citep{bengio2009curriculum}, the curriculum is derived based on the variability in shape to enable shapes with less variability being learned earlier. In~\citep{khan2011humans}, the common sense of participants are employed to determine the learning sequence of graspability to object. In self-paced learning, the curriculum design is often integrated into the learning objective as a regularization. For example, Jiang et al.~\citep{jiang2014easy} jointly optimize the learning objective as well as a binary weight vector which controls the learning pace. In contrast, the proposed adaptive importance learning scheme learns a pixel-wise curriculum based on the reconstruction error of the network and aims at enhancing the capacity of a given lightweight SISR network. Moreover, it enables the network to seamlessly assimilate the knowledge from a more powerful teacher network.

\section{The proposed learning paradigm}\label{sec:proposed}
In general, with $n$ LR-HR image pairs $\{\mathbf{x}_i, \mathbf{y}_i\}^n_{i=1}$, we can learn a lightweight network $\mathcal{S}(\cdot,\mathbf{\theta})$ as follows
\begin{equation}\label{eq:eq1}
\begin{aligned}
\min\limits_{\mathbf{\theta}} \mathbb{E}(\mathbf{\theta})=\frac{1}{n}\sum\limits^n_{i=1}l\left(\mathbf{y}_i, \mathcal{S}(\mathbf{x}_i, \mathbf{\theta})\right)
\end{aligned}
\end{equation}
where $\mathbf{\theta}$ denotes the network parameters and $l$ indicates the loss function (e.g., MSE loss or $\ell_1$ loss). In the training phase, the optimal $\mathbf{\theta}$ seeks to minimize the expectation $\mathbb{E}(\mathbf{\theta})$ where all pixel with different reconstruction difficulties are fed together into $\mathcal{S}$ for training. To maximize the pixel-wise fitting capacity of $\mathcal{S}$, we propose to train $\mathcal{S}$ with an adaptive importance learning scheme as
\begin{equation}\label{eq:eq2}
\begin{aligned}
&\min\limits_{\mathbf{\theta}, W} \mathbb{E}(\mathbf{\theta}, W)=\frac{1}{n}\sum\limits^n_{i=1}\left[l\left(\mathbf{y}_i \odot \mathbf{w}_i, \mathcal{S}(\mathbf{x}_i, \mathbf{\theta}) \odot \mathbf{w}_i\right) + h(\mathbf{w}_i)\right],\\
&{\rm{s.t.}}{\kern 4pt}\forall i, 0 \preceq \mathbf{w}_i \preceq 1,\\
\end{aligned}
\end{equation}
where $\mathbf{w}_i$ indicates the pixel-wise importance vector for each training pair and $W = \{\mathbf{w}_i\}^n_{i=1}$ collects all importance vectors. Since $0 \preceq \mathbf{w}_i \preceq 1$, the pixel-wise importance can be viewed as the probability of each pixel participating the training procedure as Eq.~\eqref{eq:eq2}, e.g., when the importance is zero, the corresponding pixel will removed from training the network. $\odot$ denotes point-wise multiplication. $h(\mathbf{w}_i)$ represents a penalty function over $\mathbf{w}_i$, which controls the importance learning strategy as well as avoiding trivial solutions of $\mathbf{w}_i$ (e.g., $\mathbf{w}_i=\mathbf{0}$).

In the adaptive importance learning scheme, the network parameter $\mathbf{\theta}$ and the importance $W$ are jointly optimized. To solve this problem, we can adopt the alternative minimization scheme~\citep{zhang2018cluster}, which reduces this problem into a $\mathbf{\theta}$-subproblem and a $W$-subproblem, and then alternatively optimizes each subproblem until convergence. Different from the traditional learning scheme in Eq.~\eqref{eq:eq1} which only trains the network once, the proposed learning scheme will train the network in several rounds. More importantly, with an appropriate $h(\mathbf{w}_i)$, the importance of image pixels can be assigned to any value expected, with which a specific group of pixels can be picked out from all training examples to optimize for the network parameter $\mathbf{\theta}$ in the next iteration. Through optimizing the network parameter $\mathbf{\theta}$ and dynamically updating the importance in an alternative way, the proposed learning scheme is able to train the network $\mathcal{S}$ with a specific learning paradigms. In addition, when $h(\mathbf{w}_i)$ is given as the following indicator function,
\begin{equation}\label{eq:eq4.1}
\begin{aligned}
h(\mathbf{w}_i) = \mathcal{I}(\mathbf{w}_i, \mathbf{1}) = \left\{  
             \begin{array}{lr}  
             \infty, & \mathbf{w}_i \neq \mathbf{1}\\  
             0, & \mathbf{w}_i = \mathbf{1}\\  
             \end{array},  
\right.
\end{aligned}
\end{equation} 
the proposed learning scheme will degenerate to the traditional learning scheme in Eq.~\eqref{eq:eq1}. Therefore, the proposed adaptive importance learning scheme is a general learning framework for SISR.

In this study, we employ the proposed learning scheme in Eq.~\eqref{eq:eq2} with a carefully designed $h(\mathbf{w}_i)$ to train a given lightweight SISR network $\mathcal{S}$ with an easy-to-complex paradigm for capacity enhancement. To this end, the importance produced by the designed $h(\mathbf{w}_i)$ are required to conform with the following requirements. At beginning, the importance of complex pixels that are difficult to reconstruct will be suppressed (i.e., assigned to a small value close to zero) while the importance of pixels that are easy to reconstruct will be highlighted (i.e., assigned to a large value close to one). By doing this, $\mathcal{S}$ is encouraged to focus on learning to reconstruct easy pixels when its initial capacity is limited. Given the learned $\mathcal{S}$, importance $W$ will be gradually increased to expose $\mathcal{S}$ to more complex pixels for the next round of training, and thus the capacity of $\mathcal{S}$ will be enhanced. When the alternative minimization converges, the capacity of $\mathcal{S}$ can be maximized. In the following, we will introduce a carefully designed $h$ to update the importance $W$ as expected.

\begin{figure}[htp]
\setlength{\abovecaptionskip}{0pt}
\begin{center}
\includegraphics[height=2.2in,width=2.6in,angle=0]{./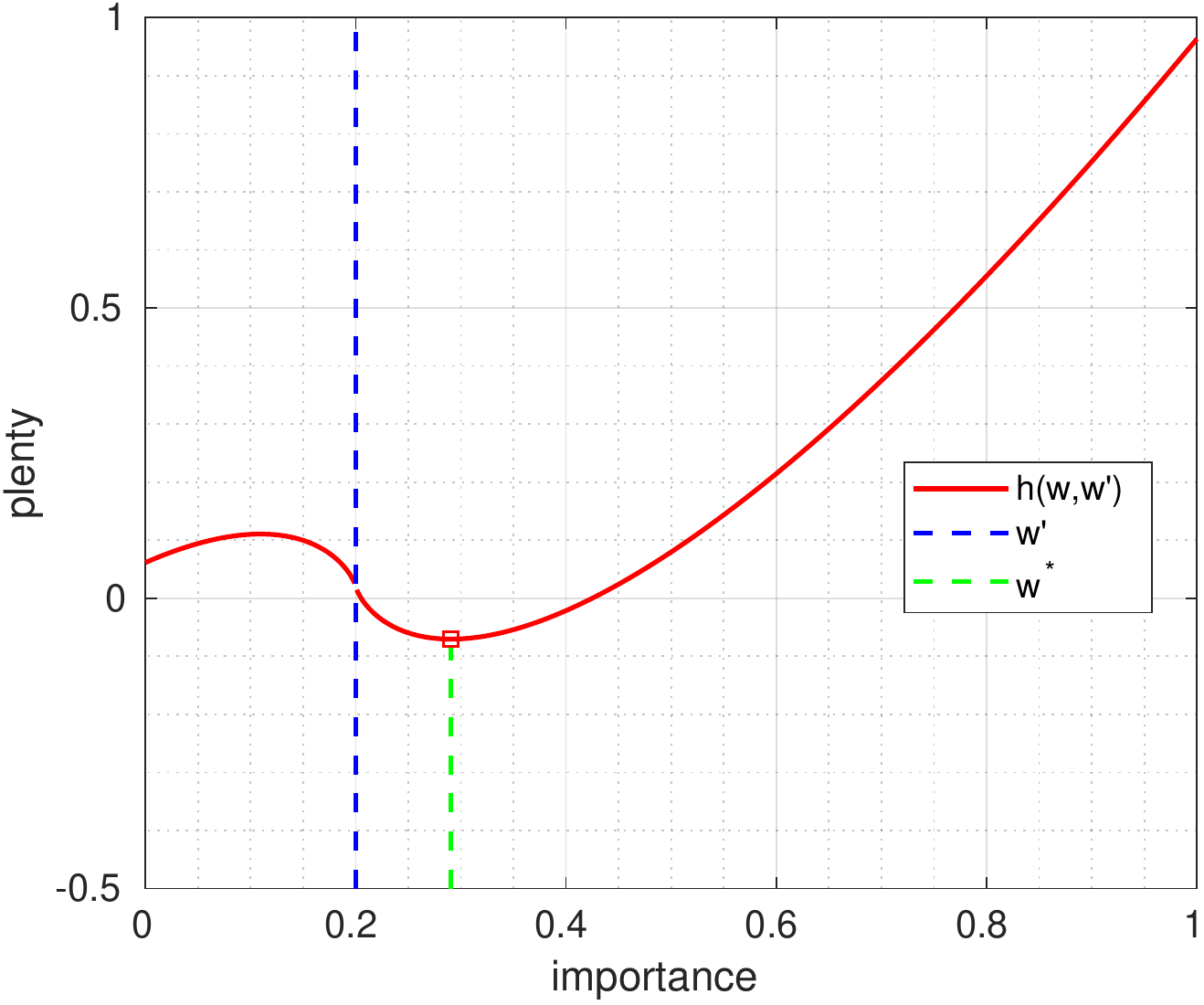}
\end{center}
\caption{The designed importance penalty function $h(\mathbf{w},\mathbf{w}')$ (e.g., $d=0.1$, $\lambda=0.1$).}
\label{fig:penalty}
\end{figure}

\begin{figure*}[htp]
\setlength{\abovecaptionskip}{0pt}
\begin{center}
\subfigure[Importance function] {\includegraphics[height=1.4in,width=1.75in,angle=0]{./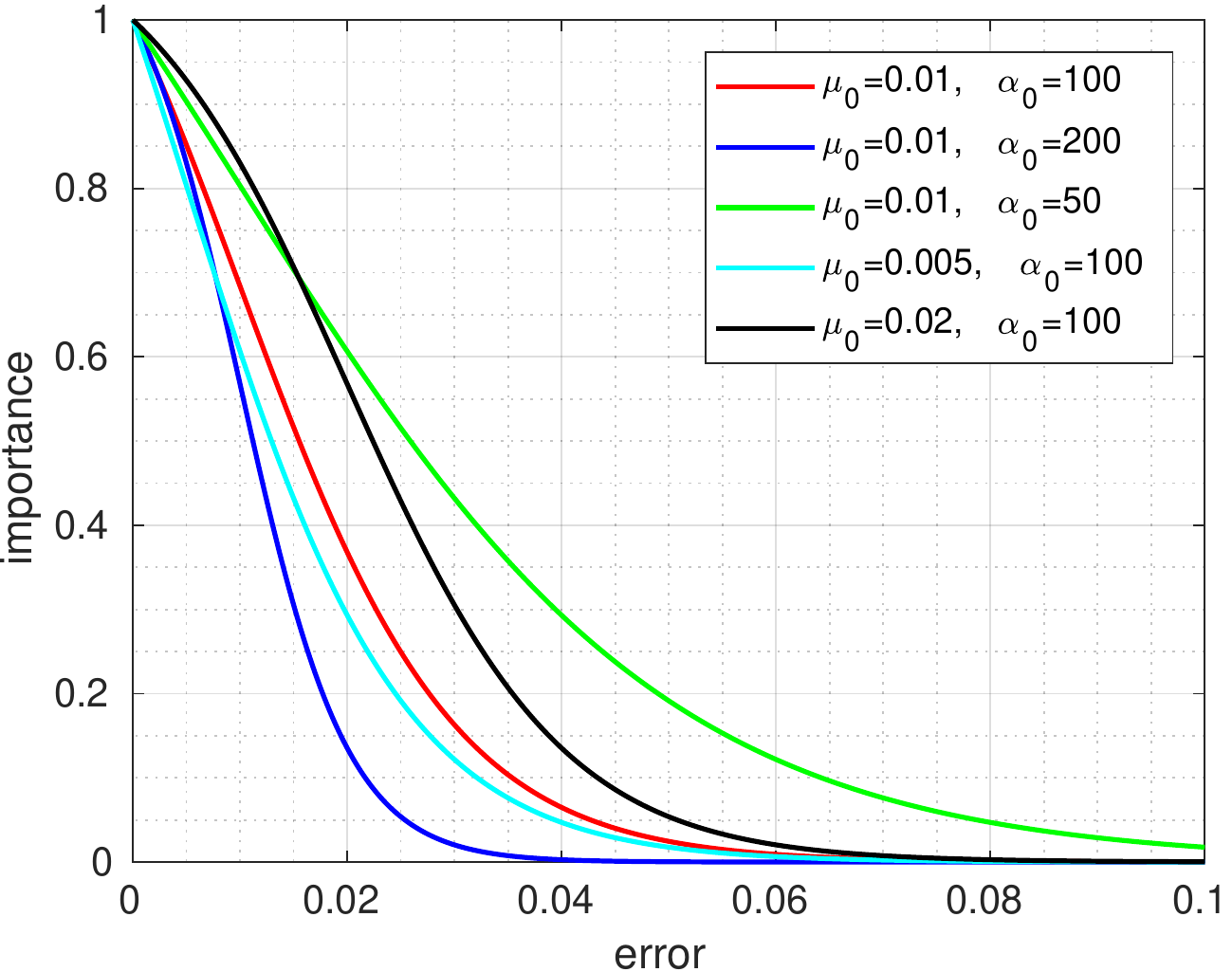}}
\hspace{-0.1cm}
\subfigure[Image] {\includegraphics[height=1.4in,width=1.4in,angle=0]{./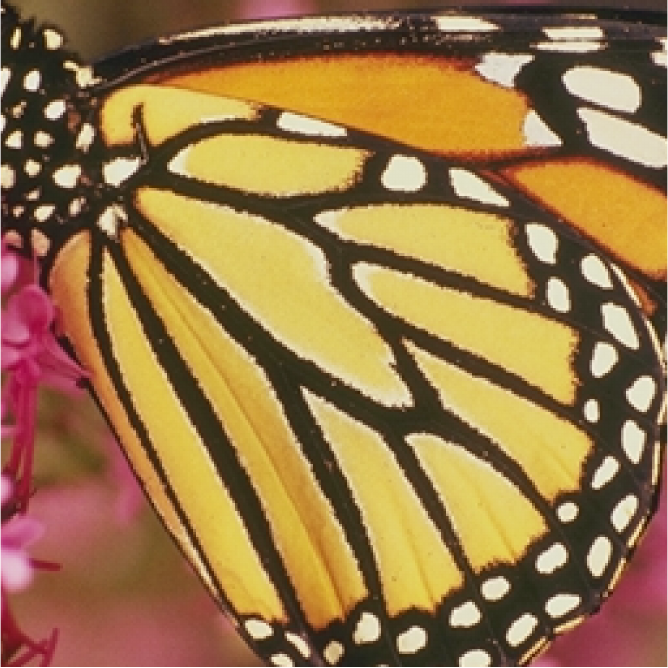}}
\hspace{-0.1cm}
\subfigure[Prediction error] {\includegraphics[height=1.4in,width=1.4in,angle=0]{./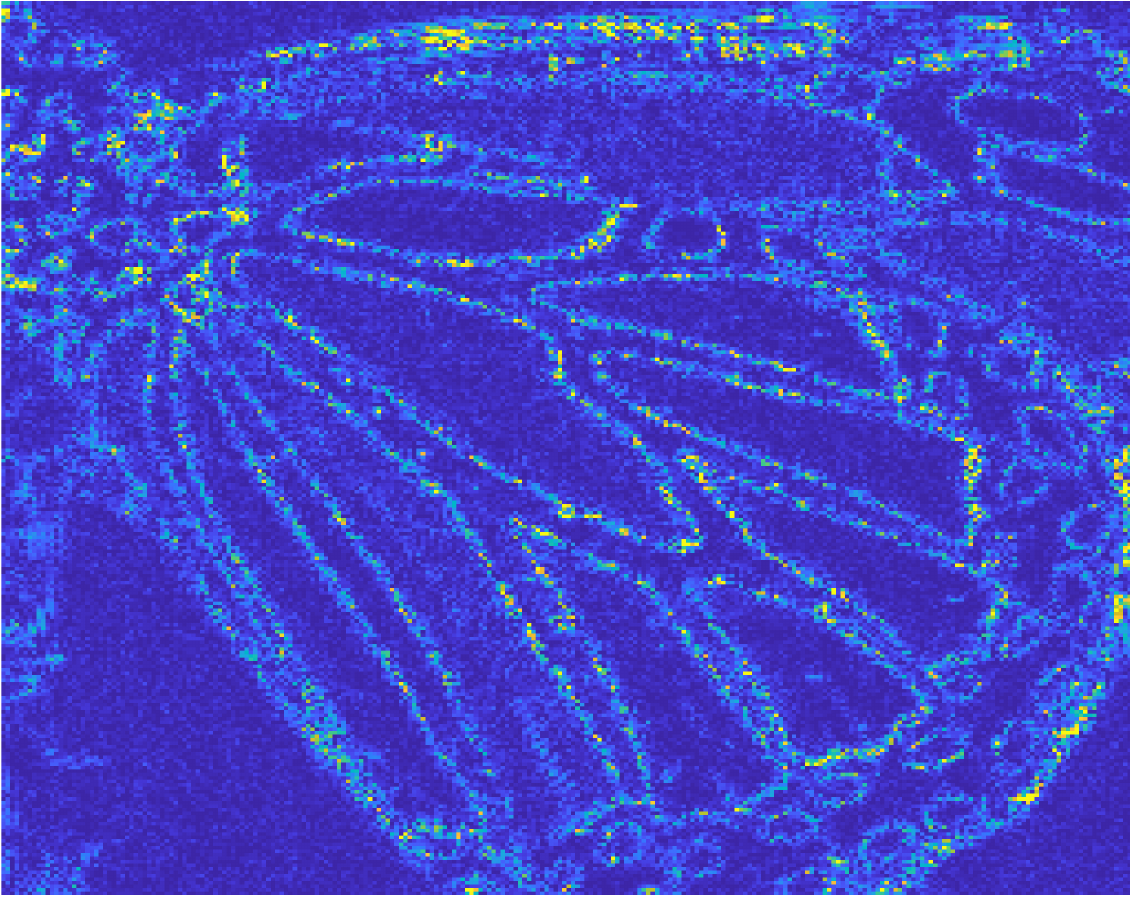}\includegraphics[height=1.4in,width=0.2in,angle=0]{./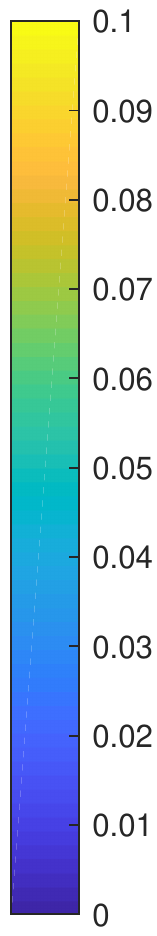}}
\hspace{-0.1cm}
\subfigure[Importance map] {\includegraphics[height=1.4in,width=1.4in,angle=0]{./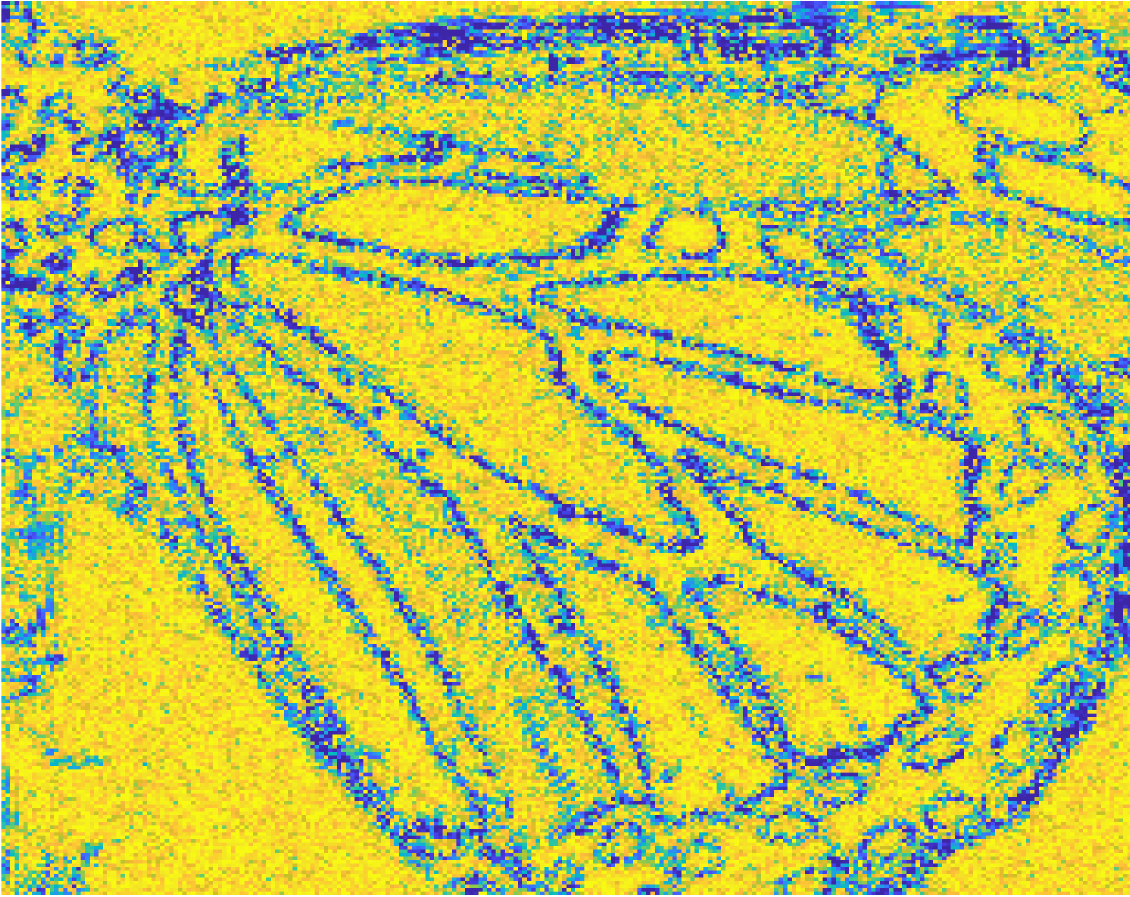}\includegraphics[height=1.4in,width=0.2in,angle=0]{./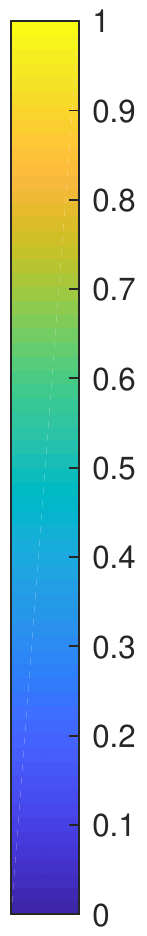}}
\end{center}
\caption{Importance initialization from a teacher network (e.g., VDSR~\citep{kim2016accurate}). (a) Importance function in Eq.~\eqref{eq:eq3} with different parameters. (b) Example image. (c) Prediction error (e.g., $\ell_2$ norm) from the teacher network. (d) Importance map for the example image with parameter $\mu_0=0.01,\alpha_0=100$.}
\label{fig:importance}
\end{figure*}

\subsection{Adaptive importance learning}\label{subsec:AIL}
According to the discussion above, we find that a basic principle for importance updating is to gradually increase the importance to feed $\mathcal{S}$ with more complex pixels in the next round of training. Moreover, the increment to importance should be determined by a decreasing function over the reconstruction difficulty of image pixels to guarantee the easy-to-complex learning paradigm. However, it is difficult to determine the reconstruction difficulty of pixels given an image. Intuitively, pixels lying on image details or within complex structures often are more difficult to reconstruct than those on flat areas. To quantitatively indicate the reconstruction difficult, we adopt the reconstruction error of the learned network $\mathcal{S}$ on pixels as a rough measure. This is inspired by the observation that most SISR methods can better reconstruct pixels on flat areas than those on image details. In addition, the reconstruction error of network $\mathcal{S}$ on all pixels can be directly indicated by the loss $l$ in Eq.~\eqref{eq:eq2}. Thus, the key for importance learning is to design an appropriate importance penalty function $h(\mathbf{w}_i)$. 

To comply with the importance learning principle mentioned above, we carefully design a penalty function $h$ and reformulate the learning scheme in Eq.~\eqref{eq:eq2} as follows
\begin{equation}\label{eq:eq5}
\begin{aligned}
&\min\limits_{\mathbf{\theta},W} \mathbb{E}(\mathbf{\theta}, W)=\frac{1}{n}\sum\limits^n_{i=1}\left[l(\mathbf{y}_i \odot \mathbf{w}_i, \mathcal{S}(\mathbf{x}_i, \mathbf{\theta}) \odot \mathbf{w}_i) + h(\mathbf{w}_i, \mathbf{w}'_i)\right],\\
&{\rm{s.t.}}{\kern 4pt}\forall i, \mathbf{w}'_i\leq \mathbf{w}_i \leq 1,
\end{aligned}
\end{equation}
where $\mathbf{w}'_i$ denotes the importance vector in previous iteration and $h(\mathbf{w}_i, \mathbf{w}'_i)$ is given as
\begin{equation}\label{eq:eq6}
\begin{aligned}
h(\mathbf{w}_i, \mathbf{w}'_i) = \sum\limits_{j}\left(w_{ji} - w'_{ji}\right)\left(\ln\frac{w_{ji} - w'_{ji}}{\lambda} - 1\right).
\end{aligned}
\end{equation}
In Eq.~\eqref{eq:eq6}, $w_{ji}$ and $w'_{ji}$ denote the $j$-th element in $\mathbf{w}_i$ and $\mathbf{w}'_i$, respectively. $\lambda$ is a predefined positive scalar. In the following, we will discuss the benefits of $h(\mathbf{w}_i, \mathbf{w}'_i)$ in details.

Similar as solving Eq.~\eqref{eq:eq2}, we adopt the alternative minimizing scheme to alternatively optimize $\theta$ and $W$ in Eq.~\eqref{eq:eq6}. Specifically, when the importance vectors $W$ are given, the learning problem for $\mathbf{\theta}$ can be well addressed by the back-propagation algorithm. When $\mathbf{\theta}$ is fixed, the learning problem for $W$ can be simplified as
\begin{equation}\label{eq:eq7}
\begin{aligned}
&\min\limits_{w} dw + (w - w')\left(\ln\frac{w - w'}{\lambda} - 1\right), \\
&{\rm s.t.}{\kern 4pt} w'\leq w \leq 1,
\end{aligned}
\end{equation}
where $w$ denotes the importance of a specific pixel in training samples (e.g., an element from $\mathbf{w}_i$) and $w'$ denotes the corresponding importance value in previous iteration (e.g., the corresponding element from $\mathbf{w}'$). $d$ denotes the reconstruction loss of the learned network $\mathcal{S}$ on the considered pixel. To solve the problem in Eq.~\eqref{eq:eq7}, we introduce the following result.

\begin{theorem}\label{theorem:theorem1}
Considering the constraint $w'\leq w \leq 1$, function $f(w)= dw + (w - w')\left(\ln\frac{w - w'}{\lambda} - 1\right)$ is a convex function and $f(w^*)$ reaches the minima when 
\begin{equation}\label{eq:eq8}
\begin{aligned}
w^* = w' + \lambda \cdot e^{-d}.
\end{aligned}
\end{equation}
\end{theorem}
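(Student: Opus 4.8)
The plan is to collapse $f$ to a one‑dimensional convex function by an affine substitution, read convexity off the second derivative, and obtain the minimiser from the first‑order condition.

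First I would substitute $u = w - w'$, so that the constraint $w' \le w \le 1$ becomes $0 \le u \le 1 - w'$ and
\[
f(w) \;=\; g(u) \;:=\; d\,u + d\,w' + u\bigl(\ln(u/\lambda) - 1\bigr),
\]
where $d\,w'$ is a constant independent of $u$. Since $w \mapsto u$ is affine, $f$ is convex in $w$ if and only if $g$ is convex in $u$, and they share the same minimiser up to the shift by $w'$. Next I would differentiate, using $\frac{d}{du}\bigl[u\ln(u/\lambda)\bigr] = \ln(u/\lambda) + 1$, which gives $g'(u) = d + \ln(u/\lambda)$ and $g''(u) = 1/u$. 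On the relevant domain $u > 0$ we have $g''(u) > 0$, so $g$ — and hence $f$ — is strictly convex. Setting $g'(u) = 0$ yields $\ln(u/\lambda) = -d$, i.e.\ $u = \lambda e^{-d}$, which translates back to $w^* = w' + \lambda e^{-d}$.

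It then remains to confirm this stationary point is indeed the constrained minimiser. Because $g'$ is strictly increasing with $g'(u) \to -\infty$ as $u \to 0^+$ and $g'(\lambda e^{-d}) = 0$, the function $g$ is strictly decreasing on $(0,\lambda e^{-d})$; in particular the left endpoint $w = w'$ (where, using $\lim_{u\to 0^+} u(\ln(u/\lambda)-1)=0$, one has $f(w') = d\,w'$) is not optimal. Provided $\lambda e^{-d} \le 1 - w'$ — which holds in our regime since $d \ge 0$ and $\lambda$ is chosen small, and which in any event is enforced by the box constraint via clipping — the unconstrained minimiser $w^*$ lies in $[w',1]$ and is therefore the minimiser of $f$ over the feasible interval.

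The only delicate point is the boundary behaviour at $u = 0$: the term $u(\ln(u/\lambda) - 1)$ is a $0\cdot(-\infty)$ indeterminate form, so one must invoke $\lim_{u\to 0^+} u\ln u = 0$ to see that $f$ extends continuously to $w = w'$ and that $g'(u) \to -\infty$ there; everything else is a routine derivative computation. I would also remark that the formula $w^* = w' + \lambda e^{-d}$ makes the intended easy‑to‑complex behaviour transparent — the importance increment $\lambda e^{-d}$ is a decreasing function of the reconstruction loss $d$, so small‑loss (easy) pixels receive a larger boost than large‑loss (hard) pixels.
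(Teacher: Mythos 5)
Your proof is correct and follows essentially the same route as the paper: show convexity via a positive second derivative (your $g''(u)=1/u>0$ matches the paper's computation, which in fact miswrites the constant as $\lambda/(w-w')$ though positivity is unaffected) and then solve the first-order condition to get $w^* = w' + \lambda e^{-d}$. Your additional checks — the $u\to 0^+$ boundary behaviour and the requirement $\lambda e^{-d} \le 1 - w'$ for the stationary point to be feasible — are points the paper silently skips, so they strengthen rather than change the argument.
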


\begin{proof}
Given $f(w)$ and the constraint $w'\leq w \leq 1$, we have $\frac{\partial^2 f(w)}{\partial w^2} = \frac{\partial}{\partial w}\left(d + \ln\frac{w - w'}{\lambda}\right) = \frac{\lambda}{w-w'}>0$. Thus, with the constraint $w'\leq w \leq 1$, $f(w)$ is a convex function, and the minima is reached when $ \frac{\partial f(w)}{\partial w}|_{w=w^*}=0$. We have

$$d + \ln\frac{w^* - w'}{\lambda} =0 \Rightarrow w^* = w' + \lambda \cdot e^{-d}$$. 

To further illustrate this point, a visual example can be found in Figure~\ref{fig:penalty}.
\QEDB
\end{proof}

According to Theorem~\ref{theorem:theorem1}, the problem in Eq.~\eqref{eq:eq7} has a closed-form solution as Eq.~\eqref{eq:eq8}. In Eq.~\eqref{eq:eq8}, the importance $w^*$ is updated by adding an increment to importance value $w'$ in the previous iteration. Since $\lambda \cdot e^{-d}\geq 0$, such a update rule enables to gradually increase the importance in each iteration. Moreover, the increment is determined by an decreasing function over the reconstruction loss of the pre-learned model $\mathcal{S}$ on the corresponding pixel, viz., a small increment is given when the reconstruction loss is large. Both aspects of principle for importance learning mentioned at the beginning of this subsection are satisfied. Therefore, the learning scheme in Eq.~\eqref{eq:eq5} with the penalty function $h(\mathbf{w},\mathbf{w}')$ is able to feed more and more complex pixels into $\mathcal{S}$ for training with an easy-to-complex paradigm through adaptively updating the importance vector as Eq.~\eqref{eq:eq8}. Furthermore, the proposed learning scheme enables the network to seamlessly assimilate the knowledge from a more powerful teacher network in the form of pixel importance initialization. This will be introduced in details in the following subsection.

\subsection{Importance initialization from the teacher}
In Eq.~\eqref{eq:eq5}, the proposed adaptive importance learning scheme depends on the the importance vectors $w'$ in previous iteration. This brings an intuitive problem in initializing the importance at beginning. According to the discussion at the beginning of Section~\ref{sec:proposed}, it is necessary to determine the importance of image pixels according to their reconstruction difficulty and complex pixels are expected to be assigned to smaller importance than that to easy pixels. Since $\mathcal{S}$ is unknown at beginning, it is infeasible to indicate the pixel importance according to the reconstruction error of $\mathcal{S}$ as Section~\ref{subsec:AIL}. To address this problem, we propose to learn important $W$ from a given more powerful teacher network $\mathcal{T}$. Similar as the learned $\mathcal{S}$, $\mathcal{T}$ will produce larger reconstruction error on complex pixels than those easy ones. Then, a decreasing function over the reconstruction error is employed to produce the importance. To well suppressing the complex pixel as well as highlight the easy ones at the beginning, we establish the following importance function
\begin{equation}\label{eq:eq3}
\begin{aligned}
g(x) = \frac{z}{1 + e^{(x - \mu_0)\alpha_0}},
\end{aligned}
\end{equation}
where $x$ denotes the reconstruction error (e.g, $\ell_2$ norm) of the teacher network $\mathcal{T}$ on a specific pixel and $g(x)$ is the corresponding importance value. $\mu_0$ and $\alpha_0$ denote the bias and scale parameters in this function. $z = (1 + e^{- \mu_0\alpha_0})$ is a normalization factor which scales the importance into $[0,1]$. To demonstrate the effectiveness of the importance function in Eq.~\eqref{eq:eq3}, we plot the profiles of $g(x)$ with different parameters as well as the estimated importance map on an example image in Figure~\ref{fig:importance}. It can be seen that $g(x)$ will produce a small importance when the reconstruction error is large, vice versa. On the example image, we can find that pixels lying on image details (i.e., exhibiting complex structures) are assigned to low importance, while pixels on flat areas are assigned to high importance. This complies with the intuition that pixels on image details are more difficult to reconstruct than those on flat areas.

Given the teacher network $\mathcal{T}$ and the importance function $g$, we can train the network $\mathcal{S}$ by solving the following problem
\begin{equation}\label{eq:eq4}
\begin{aligned}
&\min\limits_{\mathbf{\theta}} \mathbb{E}(\mathbf{\theta};W)=\frac{1}{n}\sum\limits^n_{i=1}l\left(\mathbf{y}_i \odot \mathbf{w}_i, \mathcal{S}(\mathbf{x}_i, \mathbf{\theta}) \odot \mathbf{w}_i\right),\\
&{\rm{s.t.}}{\kern 4pt}\forall i, \mathbf{w}_i = g(\mathcal{T}(\mathbf{x}_i)),\\
\end{aligned}
\end{equation}
where, for a concise formulation, we employ $g(\mathcal{T}(\mathbf{x}_i))$ to denote applying $g$ to the reconstruction error of $\mathcal{T}$ on each pixel in $\mathbf{x}_i$. In this learning scheme, the knowledge from the teacher network is distilled to guide training the network with the easy-complex paradigm.

\textbf{Relation to focal loss}
The proposed learning scheme in Eq.~\eqref{eq:eq4} is similar to the focal loss based learning scheme~\citep{lin2017focal}. Both of them dynamically reweight samples during the training procedure to enhance the capacity of network. However, they totally differ in the following three aspects. 1) With the proposed scheme, the learned model is forced to focus on easy cases, whereas focal loss encourages the network to focus on complex cases. 2) In Eq.~\eqref{eq:eq4}, the weights to training examples are determined by the prediction error of the given teacher model, while focal loss determines those weights based on the training error of the learned model. 3) Focal loss is proposed for training a more robust classifier or detector, while the proposed scheme aims at learning a more powerful compact SISR model.

\subsection{Algorithm}\label{sec:opt}
With the alternative minimizing scheme, the overall optimization procedure for the prosed adaptive importance learning scheme in Eq.~\eqref{eq:eq2} can be summarized into Algorithm~\ref{alg:bpl}. At the beginning, the network $\mathcal{S}$ is trained with the importance vectors $W$ initialized by the given teacher network $\mathcal{T}$ as Eq.~\eqref{eq:eq4}. Then, the learning scheme in Eq.~\eqref{eq:eq5} is carried out in $T$ iterations to gradually enhance the capacity of $\mathcal{S}$.

\begin{algorithm}
\caption{Adaptive importance learning (AIL)}
\label{alg:bpl}
\KwIn{Input HR-LR training pairs $\{\mathbf{x}_i, \mathbf{y}_i\}^n_{i=1}$, pre-trained teacher model $\mathcal{T}$, importance function $g$, penalty function $h$ and $\lambda$.}
\textit{1. Importance initialization from teacher}:\\
${\kern 4pt}$ (1) Learn importance $W$ as Eq.~\eqref{eq:eq4};\\
${\kern 4pt}$ (2) Update model parameter \\
${\kern 25pt}$ $\mathbf{\theta}^{*} = \arg\min_{\mathbf{\theta}} \mathbb{E}(\mathbf{\theta};W)$ as Eq.~\eqref{eq:eq5};\\
\textit{2. Adaptive importance learning}:\\
\textbf{For} {$t\leftarrow 1$ {\textbf{to}} $T$}
{\\
${\kern 6pt}$(1) Update $W^* = \arg\min_{W}\mathbb{E}(\mathbf{\theta}^*, W)$ as Eq.~\eqref{eq:eq8};\\
${\kern 6pt}$(2) Update $\mathbf{\theta}^* = \arg\min_{\mathbf{\theta}}\mathbb{E}(\mathbf{\theta}, W^*)$ as Eq.~\eqref{eq:eq6};\\
}
\textbf{End for}\\
\KwOut{$\mathbf{\theta}$-parameterized model $\mathcal{S}$.}
\end{algorithm}

It is noticeable that in theory Algorithm~\ref{alg:bpl} can well converge. Specifically, according to Eq.~\eqref{eq:eq8}, the importance vectors $W$ are gradually increased with the proceeding of iterations. When all elements in $W$ increase to $1$, the importance $W$ will be unchanged in the following iterations and Algorithm~\ref{alg:bpl} will converge, since no novel information will be provided by the training examples. More experimental evidence will be provided in Section~\ref{subsec:ablation}. 

In addition, different from previous methods~\citep{dong2016accelerating,shi2016real} that design new lightweight network architectures to deploy deep SISR methods onto real devices, the proposed adaptive importance learning scheme only focuses on how to enhance the capacity of network with a new training paradigm, and thus it can be directly applied to any given lightweight SISR network architecture. Experimental evidence will provided in Section~\ref{sec:exper}.

\section{Customizing lightweight SISR model}\label{sec:light}
Most of state-of-the-art SISR models~\citep{dong2016image,kim2016accurate,tai2017image,ledig2017photo} are inspired by the DCNN framework where the basic modules are convolution layer. To obtain a lightweight network, previous literatures~\citep{dong2016accelerating,shi2016real} propose to design new architectures (e.g., introducing a hourglass-shape structure or a sub-pixel convolution structure), which, however, cannot be conveniently applied to other DCNNs for SISR, especially when different scales of lightweight networks are required to fit various real devices. In this study, given a teacher network, we customize the lightweight network by directly reducing filters in each convolution layer to reduce the amount of output feature maps by a fixed ratio (e.g., $0<\rho<1$). By doing this, we can obtain different scales of lightweight networks with different $\rho$s. Given a fixed $\rho$, each convolution layer (i.e., except the input and output layer) in the obtained lightweight network reduces $1 - (1 -\rho)^2\%$ parameters as well as computational complexity, compared with that in the teacher network. The parameters and computational complexity of some lightweight networks are provided in Table~\ref{table:AIL_Complexity}. 

It is noticeable that the comparison between different ways of customizing lightweight network architecture is beyond the scope of this study. Our aim of adopting the way of reducing filters is to make it convenient to verify the effectiveness of the proposed learning scheme in enhancing different scales of lightweight networks.  

\section{Experimental results and analysis}\label{sec:exper}
In this section, we conduct extensive experiments to demonstrate the effectiveness of the proposed learning scheme in enhancing a given lightweight SISR network architecture.
\subsection{Dataset}
{\textbf{Training datasets}} Current SISR methods often adopt different training datasets. For example, the very large ImageNet dataset is adopted by~\citep{dong2016image}, while literatures~\citep{kim2016accurate,tai2017image} aggregate $91$ images from~\citep{yang2010image} and another $200$ images from the Berkeley Segmentation Dataset~\citep{martin2001database} together for training. In this study, we adopt the dataset utilized in~\citep{kim2016accurate} with $291$ images as benchmark to train all networks for fair comparison. In addition, rotation (e.g., with angle $90^\circ$, $180^\circ$, $270^\circ$), flip and downsampling (e.g., with ratio $0.5$, $0.7$, $1.0$) are further employed for data augmentation.

{\noindent\textbf{Test datasets}} Similar as ~\citep{huang2015single,kim2016accurate,tai2017image}, we adopt four benchmark datasets for performance evaluation, namely Set5~\citep{bevilacqua2012low}, Set14~\citep{zeyde2010single}, BSD100~\citep{timofte2014a+} and Urban100~\citep{huang2015single}, which contain $5$, $14$, $100$ and $100$ indoor and outdoor natural images, respectively.

\subsection{Teacher SISR networks}
In this study, we adopt two seminal DCNN architectures for SISR to customize the lightweight network as well as initializing importance for Algorithm~\ref{alg:bpl}, including {\texttt{VDSR}}~\citep{kim2016accurate} and {\texttt{DRRN}}~\citep{tai2017image}. Currently, the network architectures of most state-of-the-art SISR methods~\citep{mao2016image,kim2016deeply,lai2017deep} are inspired by these two models. In VDSR, $20$ fully convolution layers with global residual structure are employed to learn a deep mapping from a given LR input to an HR output. This is the first attempt to introduce the global residual structure into SISR, which enables a much deeper model than previous works~\citep{dong2016image} and improves the SISR performance obviously. According to~\citep{kim2016accurate}, $64$ feature maps are adopted for {\texttt{VDSR}} in this study. Recently, {\texttt{DRRN}} advances replacing the convolution layers in {\texttt{VDSR}} with a recursive block, which further improves the SISR performance as well as reducing the model parameters. As suggested in~\citep{tai2017image}, the recursive number and amount of feature maps are set $25$ and $128$, respectively.

\subsection{Training and testing setup}
For network training, we follow the standard protocol utilized in~\citep{kim2016accurate}. Specifically, we implement these two teacher networks mentioned above as well as the corresponding lightweight networks based on the codes released online~\footnote{VDSR: https://github.com/twtygqyy/pytorch-vdsr\\ DRRN: https://github.com/jt827859032/DRRN-pytorch}. With introducing the mean squared error (MSE) loss as $l$ into Eq.~\eqref{eq:eq1} and Eq.~\eqref{eq:eq2}, we train each network in $50$ epochs with batch size $128$ in the Pytorch framework~\citep{paszke2017pytorch}. Learning rate is initially set as $0.1$ and then decayed by a factor $10$ every $10$ epochs. Model parameters are learned by the SGD optimizer with momentum parameter $0.9$, weight decay parameter $1e^{-4}$ and gradient clip parameter $0.4$. In Algorithm~\ref{alg:bpl}, we set the pre-defined parameter $\lambda=0.15$ and maximum iterations $T = 10$. For the importance function $g$, the parameter $\alpha_0 = 0.01$ and $\mu_0=100$ are fixed in the following experiments.

In testing phase, we employ each learned network to improve the resolution of a given LR image with three different scaling factors $2,3,4$. To quantitatively evaluate the performance of each network, we adopt two standard criteria, namely peak signal-to-noise ratio (PSNR) and structured similarity (SSIM) to measure their super-resolution results.

\begin{figure*}[htp]
\setlength{\abovecaptionskip}{0pt}
\begin{center}
\includegraphics[height=1.6in,width=2in,angle=0]{./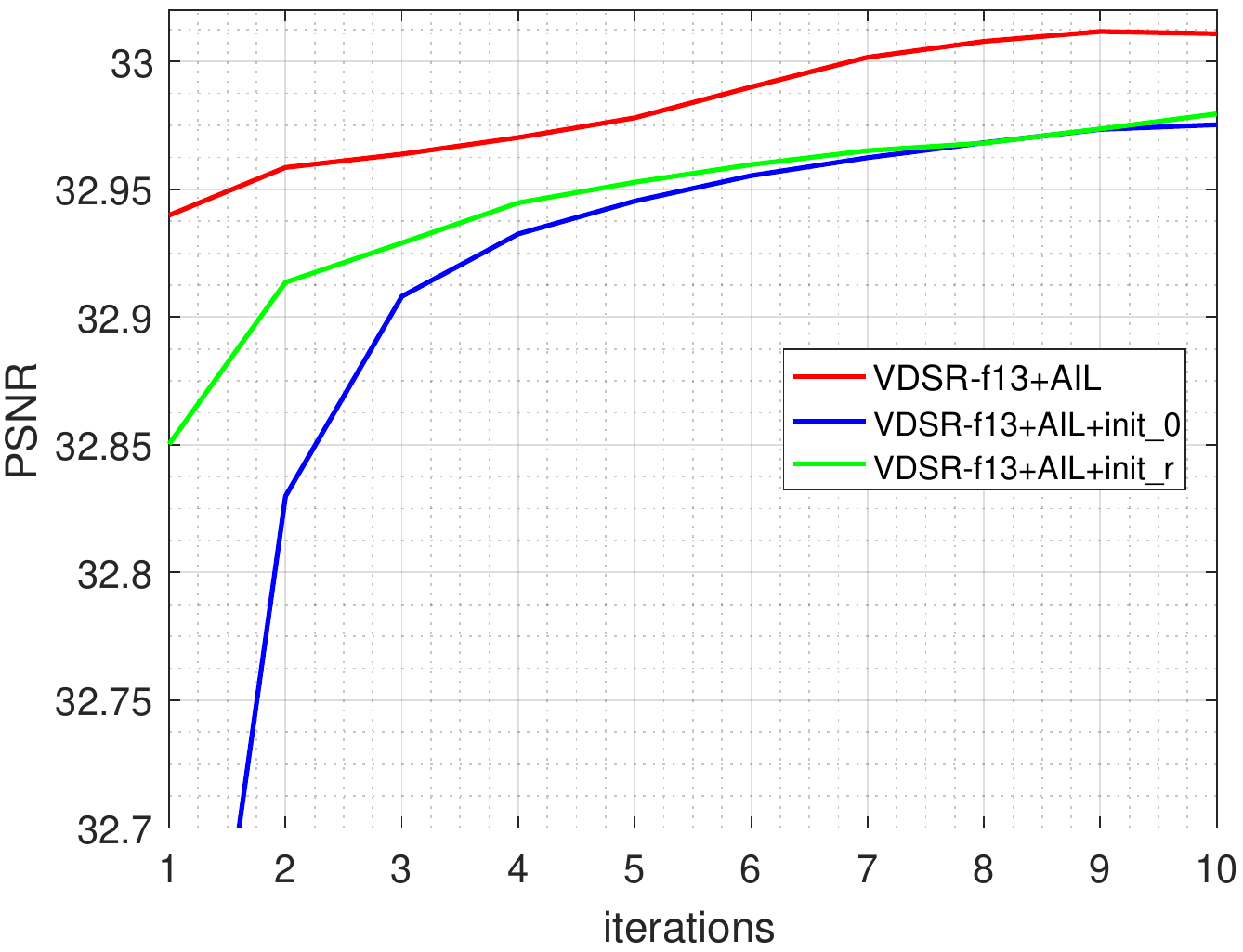}{\hspace{-0.05cm}}
\includegraphics[height=1.6in,width=2in,angle=0]{./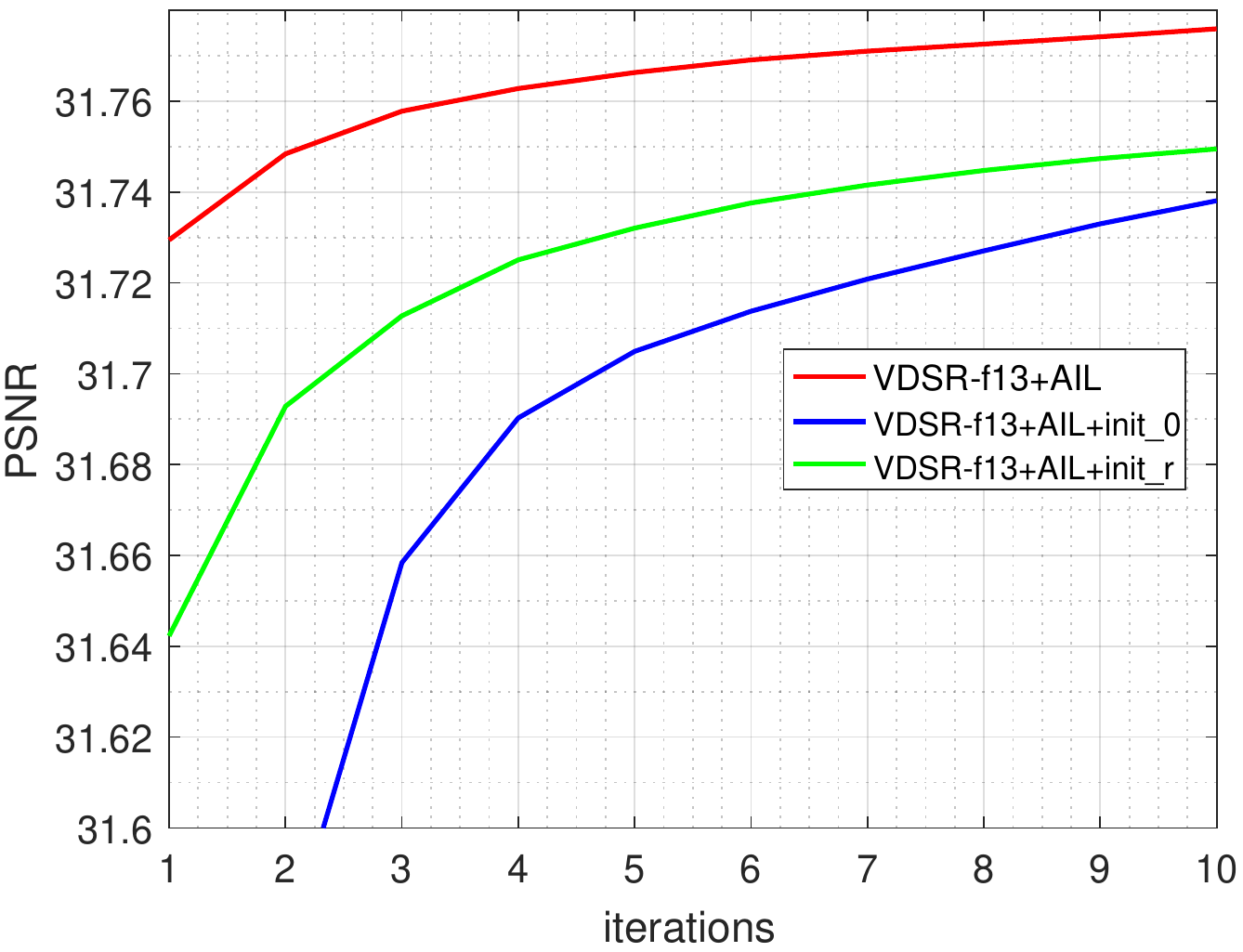}{\hspace{-0.05cm}}
\includegraphics[height=1.6in,width=2in,angle=0]{./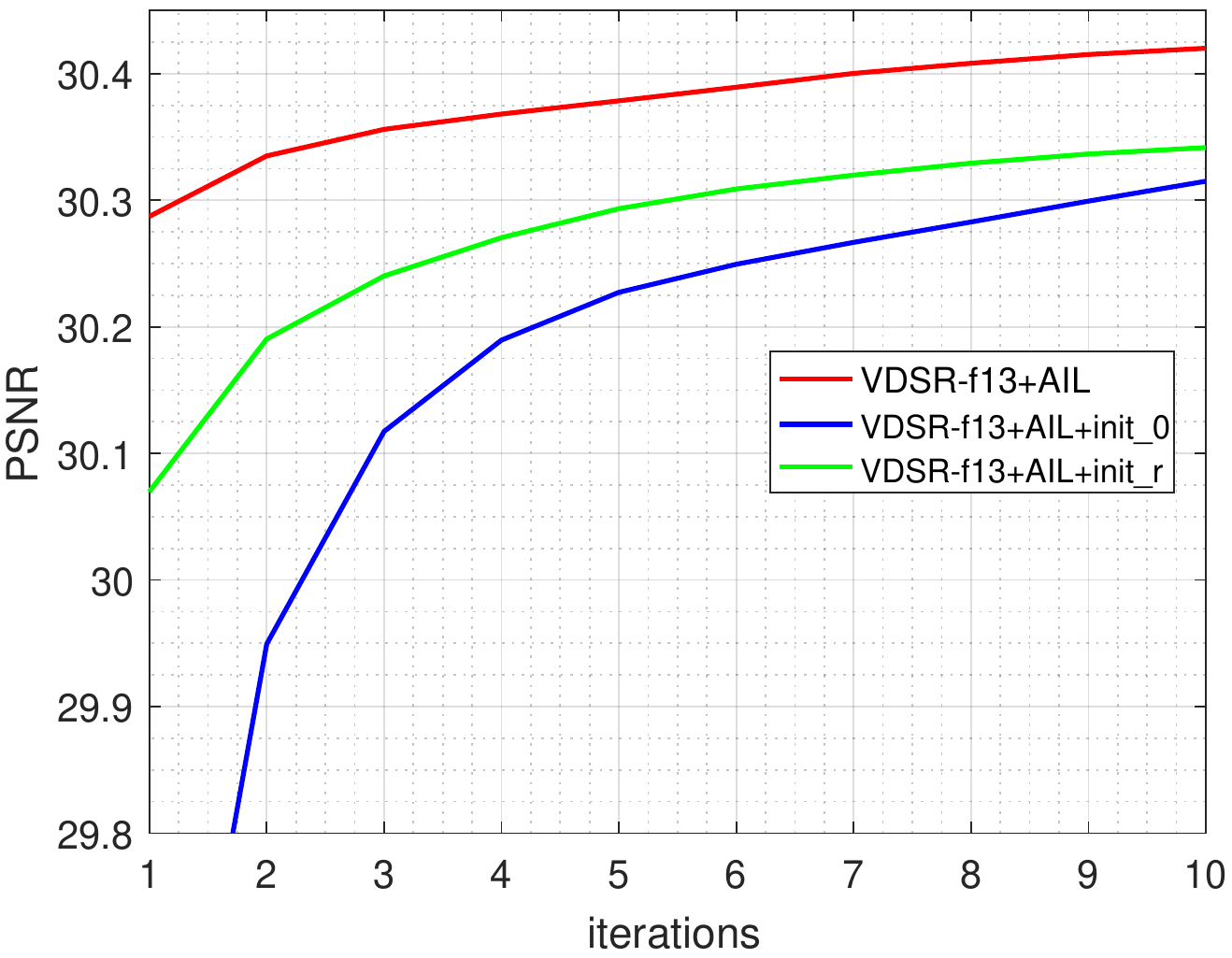}
\\
\subfigure[Set14]{\includegraphics[height=1.6in,width=2in,angle=0]{./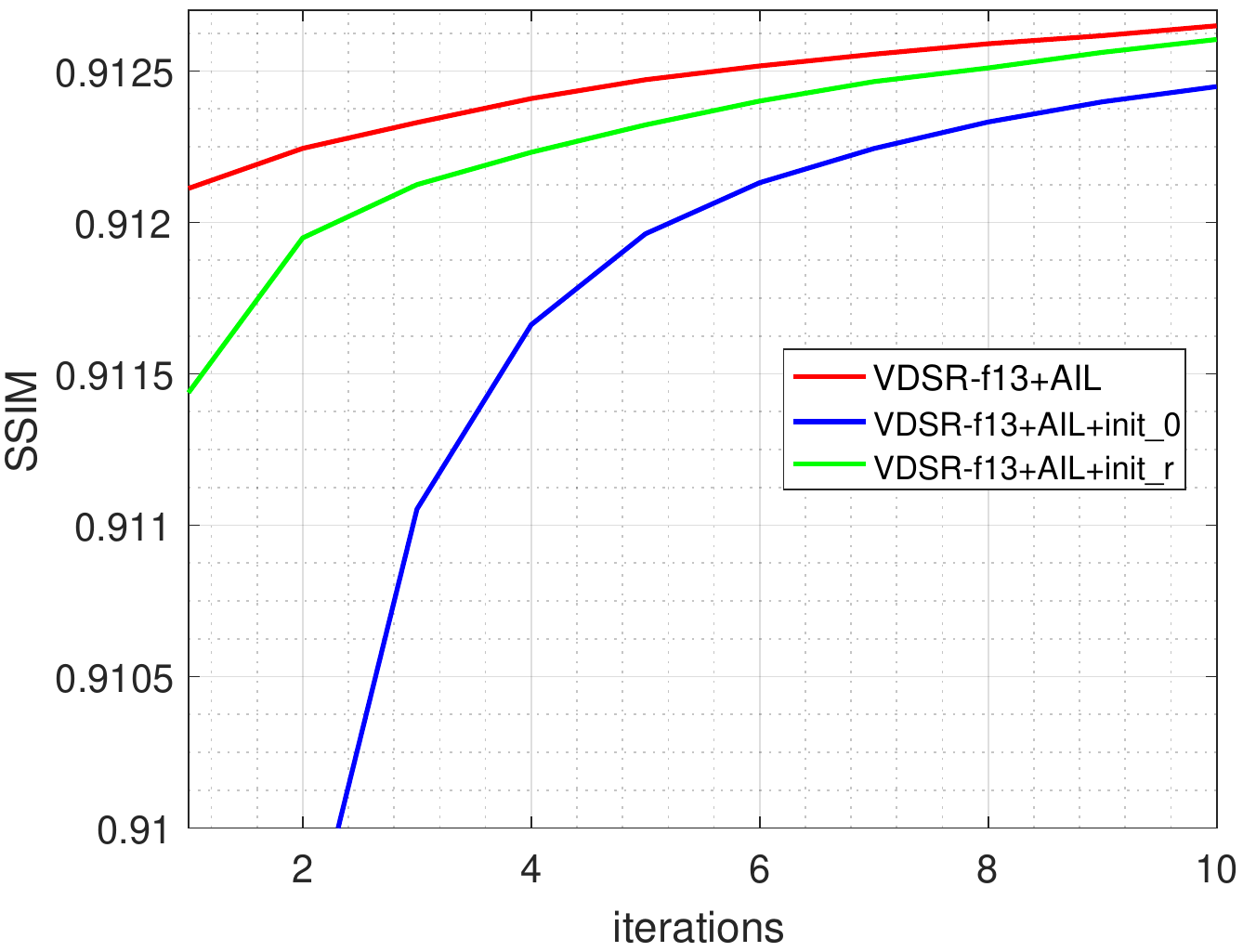}}{\hspace{-0.05cm}}
\subfigure[BSD100]{\includegraphics[height=1.6in,width=2in,angle=0]{./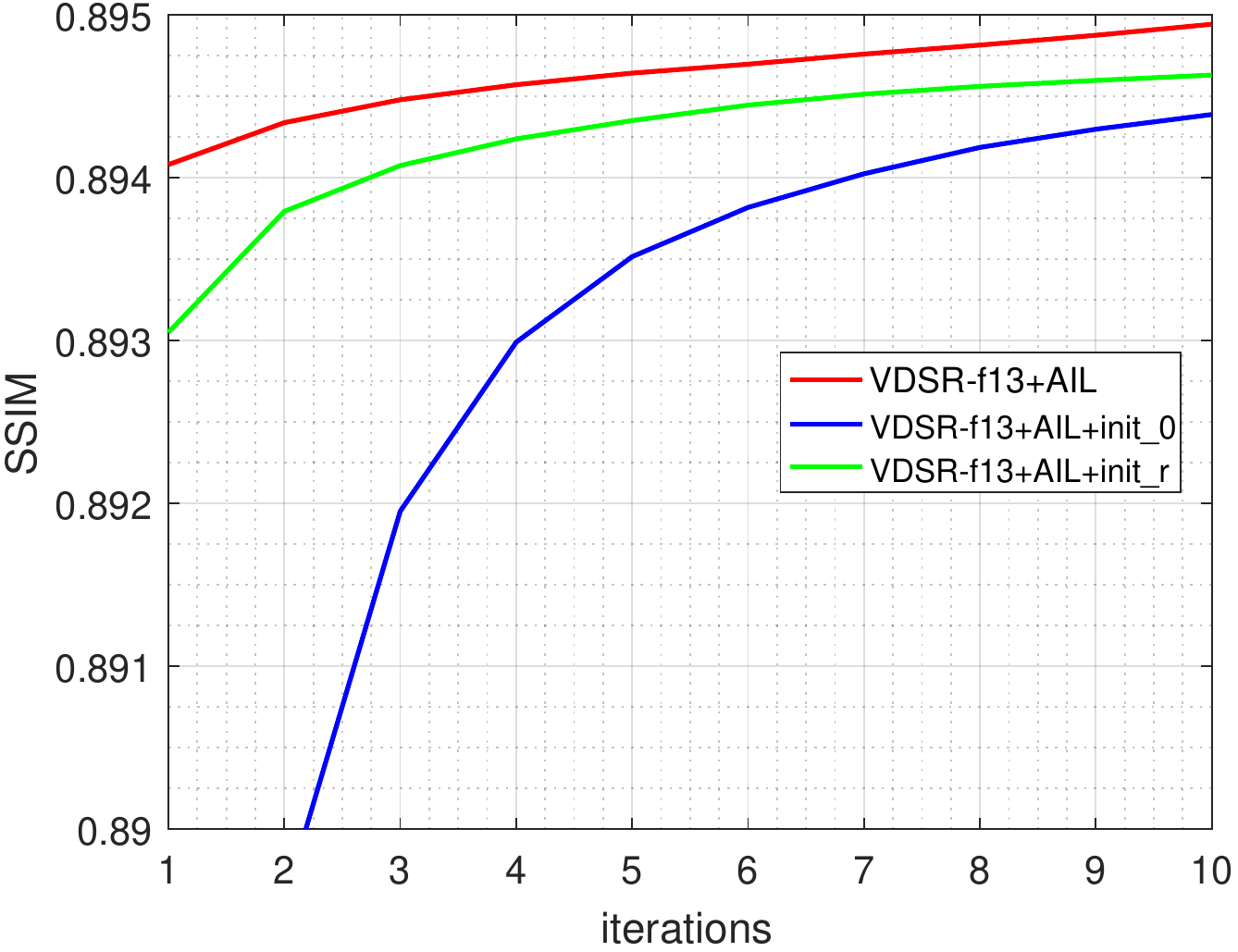}}{\hspace{-0.05cm}}
\subfigure[Urban100]{\includegraphics[height=1.6in,width=2in,angle=0]{./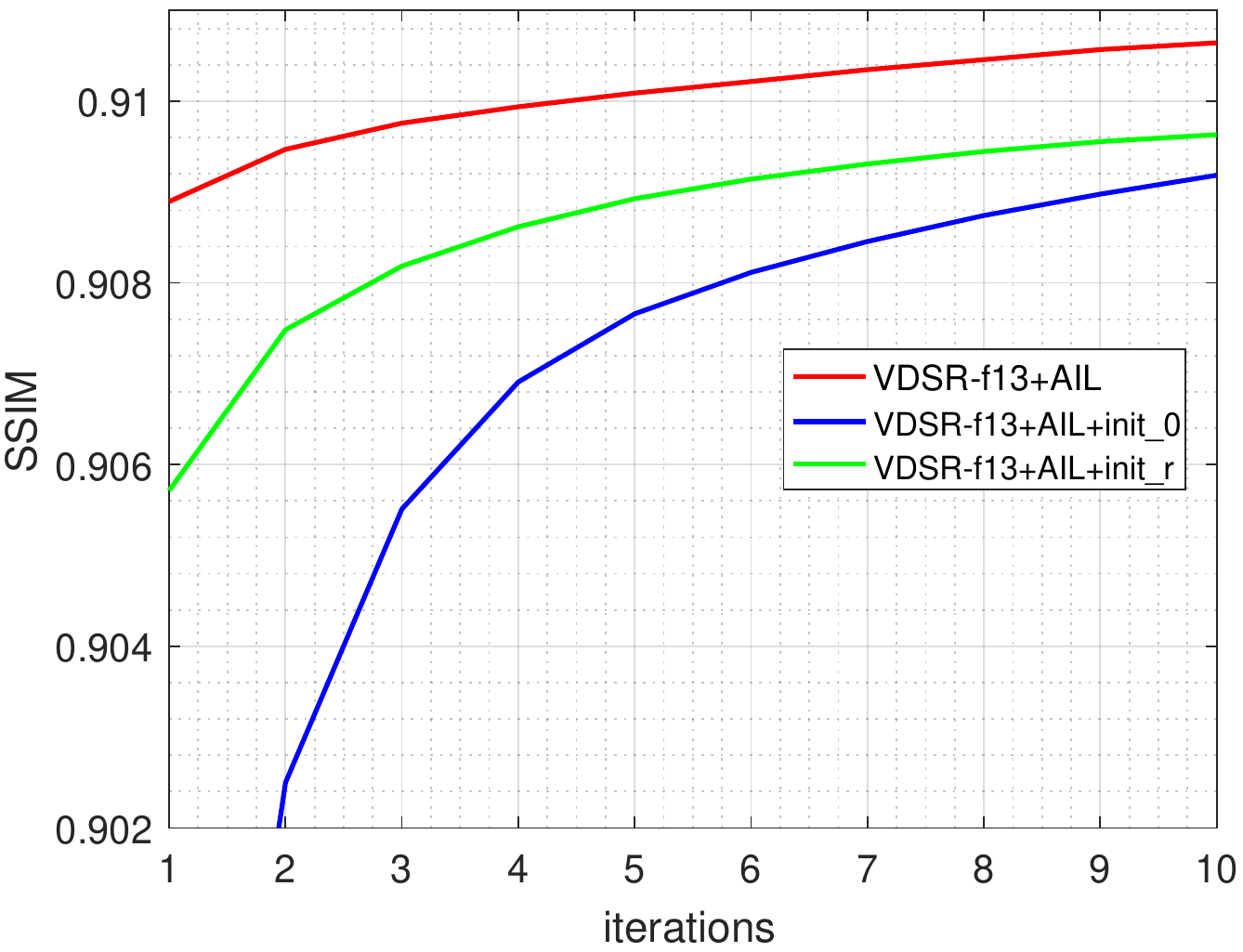}}
\end{center}
\caption{The performance (e.g., PSNR and SSIM) curves of {\texttt{VDSR-f13+AIL}}, {\texttt{VDSR-f13+AIL+init\_0}} and {\texttt{VDSR-f13+AIL+init\_r}}} within $10$ iterations on three test datasets.
\label{fig:convergence}
\end{figure*}

\begin{table*}\footnotesize%
\caption{Average PSNR/SSIM of {\texttt{VDSR-f13}}, {\texttt{VDSR-f13+ILT}} and {\texttt{VDSR-f13+AIL}} on four test datasets. The best results are in bold. {\blue{$\uparrow$PSNR/SSIM}} and {{$\downarrow$PSNR/SSIM}} denote the performance increase and decrease over {\texttt{VDSR-f13}}, respectively.}
\renewcommand{\arraystretch}{1.2}
\begin{center}
\begin{tabular}{l|c|c|cc|cc|cc}
\hline
Dataset & scale & VDSR-f13 & \multicolumn{2}{c|}{VDSR-f13+AIL+init\_0} & \multicolumn{2}{c|}{VDSR-f13+AIL+init\_r} & \multicolumn{2}{c}{VDSR-f13+AIL}\\
\hline
\multirow{3}{*}{Set5} & $\times$2 & 37.18/0.9580 & 37.39/0.9589 & {\blue{$\uparrow$0.21}}/{\blue{$\uparrow$0.0009}} & 37.41/0.9590 & {\blue{$\uparrow$0.23}}/{\blue{$\uparrow$0.0010}} & {\textbf{37.43/0.9591}} & {\blue{$\uparrow$0.25}}/{\blue{$\uparrow$0.0011}}\\
& $\times$3 & 33.07/0.9155 & 33.26/0.9184 & {\blue{$\uparrow$0.19}}/{\blue{$\uparrow$0.0029}} & 33.27/0.9183 & {\blue{$\uparrow$0.21}}/{\blue{$\uparrow$0.0028}} & {\textbf{33.36/0.9196}} & {\blue{$\uparrow$0.29}}/{\blue{$\uparrow$0.0041}}\\
& $\times$4 & 30.74/0.8724 & 30.87/0.8764 & {\blue{$\uparrow$0.13}}/{\blue{$\uparrow$0.0040}} & 30.90/0.8769 & {\blue{$\uparrow$0.16}}/{\blue{$\uparrow$0.0045}} & {\textbf{30.99/0.8788}} & {\blue{$\uparrow$0.25}}/{\blue{$\uparrow$0.0065}}\\
\hline
\multirow{3}{*}{Set14} & $\times$2 & 32.84/0.9115 & 32.98/0.9124 & {\blue{$\uparrow$0.13}}/{\blue{$\uparrow$0.0009}} & 32.98/0.9126 & {\blue{$\uparrow$0.14}}/{\blue{$\uparrow$0.0011}} & {\textbf{33.01/0.9126}} & {\blue{$\uparrow$0.17}}/{\blue{$\uparrow$0.0011}}\\
& $\times$3 & 29.53/0.8269 & 29.65/0.8295 & {\blue{$\uparrow$0.11}}/{\blue{$\uparrow$0.0025}} & 29.68/0.8300 & {\blue{$\uparrow$0.15}}/{\blue{$\uparrow$0.0030}} & {\textbf{29.73/0.8309}} & {\blue{$\uparrow$0.20}}/{\blue{$\uparrow$0.0040}}\\
& $\times$4 & 27.75/0.7600 & 27.84/0.7633 & {\blue{$\uparrow$0.10}}/{\blue{$\uparrow$0.0033}} & 27.87/0.7636 & {\blue{$\uparrow$0.12}}/{\blue{$\uparrow$0.0036}} & {\textbf{27.93/0.7654}} & {\blue{$\uparrow$0.18}}/{\blue{$\uparrow$0.0054}}\\
\hline
\multirow{3}{*}{BSD100} & $\times$2 & 31.63/0.8930 & 31.74/0.8944 & {\blue{$\uparrow$0.11}}/{\blue{$\uparrow$0.0014}} & 31.75/0.8946 & {\blue{$\uparrow$0.12}}/{\blue{$\uparrow$0.0017}} & {\textbf{31.78/0.8949}} & {\blue{$\uparrow$0.15}}/{\blue{$\uparrow$0.0020}}\\
& $\times$3 & 28.54/0.7906 & 28.62/0.7932 & {\blue{$\uparrow$0.08}}/{\blue{$\uparrow$0.0026}} & 28.65/0.7935 & {\blue{$\uparrow$0.11}}/{\blue{$\uparrow$0.0030}} & {\textbf{28.69/0.7946}} & {\blue{$\uparrow$0.14}}/{\blue{$\uparrow$0.0041}}\\
& $\times$4 & 27.03/0.7169 & 27.08/0.7193 & {\blue{$\uparrow$0.05}}/{\blue{$\uparrow$0.0024}} & 27.11/0.7199 & {\blue{$\uparrow$0.08}}/{\blue{$\uparrow$0.0030}} & {\textbf{27.14/0.7212}} & {\blue{$\uparrow$0.11}}/{\blue{$\uparrow$0.0043}}\\
\hline
\multirow{3}{*}{Urban100} & $\times$2 & 30.06/0.9056 & 30.32/0.9092 & {\blue{$\uparrow$0.25}}/{\blue{$\uparrow$0.0036}} & 30.34/0.9096 & {\blue{$\uparrow$0.28}}/{\blue{$\uparrow$0.0041}} & {\textbf{30.42/0.9106}} & {\blue{$\uparrow$0.36}}/{\blue{$\uparrow$0.0051}}\\
& $\times$3 & 26.42/0.8081 & 26.60/0.8146 & {\blue{$\uparrow$0.18}}/{\blue{$\uparrow$0.0065}} & 26.66/0.8158 & {\blue{$\uparrow$0.24}}/{\blue{$\uparrow$0.0077}} & {\textbf{26.76/0.8189}} & {\blue{$\uparrow$0.34}}/{\blue{$\uparrow$0.0108}}\\
& $\times$4 & 24.64/0.7312 & 24.74/0.7369 & {\blue{$\uparrow$0.10}}/{\blue{$\uparrow$0.0057}} & 24.79/0.7379 & {\blue{$\uparrow$0.15}}/{\blue{$\uparrow$0.0067}} & {\textbf{24.87/0.7416}} & {\blue{$\uparrow$0.22}}/{\blue{$\uparrow$0.0105}}\\
\hline
\end{tabular}
\end{center}
\label{table:Effect_Initial}
\end{table*}

\begin{table*}\footnotesize%
\caption{Average PSNR/SSIM of {\texttt{VDSR-f13}}, {\texttt{VDSR-f13+Distil}} and {\texttt{VDSR-f13+AIL}} on four test datasets. The best results are in bold. {\blue{$\uparrow$PSNR/SSIM}} and {{$\downarrow$PSNR/SSIM}} denote the performance increase and decrease over {\texttt{VDSR-f13}}, respectively.}
\renewcommand{\arraystretch}{1.2}
\begin{center}
\begin{tabular}{l|c|c|cc|cc}
\hline
Dataset & scale & VDSR-f13 & \multicolumn{2}{c|}{VDSR-f13+Distil} & \multicolumn{2}{c}{VDSR-f13+AIL}\\
\hline
\multirow{3}{*}{Set5} & $\times$2 & 37.18/0.9580 & 37.21/0.9581 & {\blue{$\uparrow$0.03}}/{\blue{$\uparrow$0.0001}} & {\textbf{37.43/0.9591}} & {\blue{$\uparrow$0.25}}/{\blue{$\uparrow$0.0011}}\\
& $\times$3 & 33.07/0.9155 & 33.09/0.9158 & {\blue{$\uparrow$0.02}}/{\blue{$\uparrow$0.0003}} & {\textbf{33.36/0.9196}} & {\blue{$\uparrow$0.29}}/{\blue{$\uparrow$0.0041}}\\
& $\times$4 & 30.74/0.8724 & 30.68/0.8705 & $\downarrow$0.06/$\downarrow$0.0019 & {\textbf{30.99/0.8788}} & {\blue{$\uparrow$0.25}}/{\blue{$\uparrow$0.0065}}\\
\hline
\multirow{3}{*}{Set14} & $\times$2 & 32.84/0.9115 & 32.88/0.9115 & {\blue{$\uparrow$0.04}}/{\blue{$\uparrow$0.0000}} & {\textbf{33.01/0.9126}} & {\blue{$\uparrow$0.17}}/{\blue{$\uparrow$0.0011}}\\
& $\times$3 & 29.53/0.8269 & 29.55/0.8272 & {\blue{$\uparrow$0.01}}/{\blue{$\uparrow$0.0003}} & {\textbf{29.73/0.8309}} & {\blue{$\uparrow$0.20}}/{\blue{$\uparrow$0.0040}}\\
& $\times$4 & 27.75/0.7600 & 27.71/0.7587 & $\downarrow$0.03/$\downarrow$0.0013 & {\textbf{27.93/0.7654}} & {\blue{$\uparrow$0.18}}/{\blue{$\uparrow$0.0054}}\\
\hline
\multirow{3}{*}{BSD100} & $\times$2 & 31.63/0.8930 & 31.65/0.8932 & {\blue{$\uparrow$0.02}}/{\blue{$\uparrow$0.0002}} & {\textbf{31.78/0.8949}} & {\blue{$\uparrow$0.15}}/{\blue{$\uparrow$0.0020}}\\
& $\times$3 & 28.54/0.7906 & 28.56/0.7911 & {\blue{$\uparrow$0.02}}/{\blue{$\uparrow$0.0005}} & {\textbf{28.69/0.7946}} & {\blue{$\uparrow$0.14}}/{\blue{$\uparrow$0.0041}}\\
& $\times$4 & 27.03/0.7169 & 27.01/0.7160 & $\downarrow$0.02/$\downarrow$0.0009 & {\textbf{27.14/0.7212}} & {\blue{$\uparrow$0.11}}/{\blue{$\uparrow$0.0043}}\\
\hline
\multirow{3}{*}{Urban100} & $\times$2 & 30.06/0.9056 & 30.07/0.9058 & {\blue{$\uparrow$0.01}}/{\blue{$\uparrow$0.0003}} & {\textbf{30.42/0.9106}} & {\blue{$\uparrow$0.36}}/{\blue{$\uparrow$0.0051}}\\
& $\times$3 & 26.42/0.8081 & 26.47/0.8097 & {\blue{$\uparrow$0.04}}/{\blue{$\uparrow$0.0015}} & {\textbf{26.76/0.8189}} & {\blue{$\uparrow$0.34}}/{\blue{$\uparrow$0.0108}}\\
& $\times$4 & 24.64/0.7312 & 24.59/0.7290 & $\downarrow$0.05/$\downarrow$0.0022 & {\textbf{24.87/0.7416}} & {\blue{$\uparrow$0.22}}/{\blue{$\uparrow$0.0105}}\\
\hline
\end{tabular}
\end{center}
\label{table:Effect_Distil}
\end{table*}

\begin{figure*}[htp]
\setlength{\abovecaptionskip}{0pt}
\begin{center}
\begin{tabu} to 1\textwidth{ccccc}
Ground truth & VDSR-f13 & VDSR-f13+Distil & VDSR-f13+AIL & VDSR\\
\includegraphics[height=1.9in,width=1.4in,angle=0]{./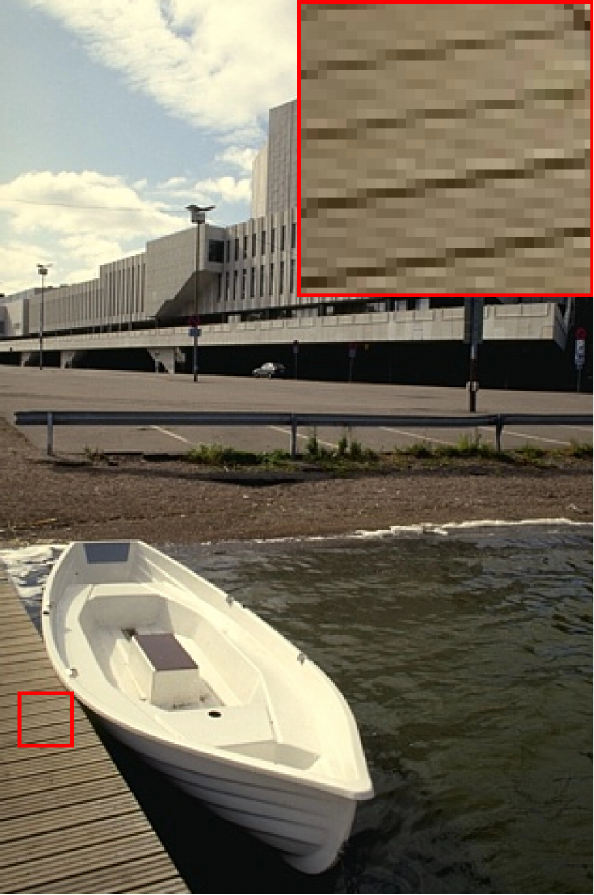}\hspace{-0.32cm} &
\includegraphics[height=1.9in,width=1.4in,angle=0]{./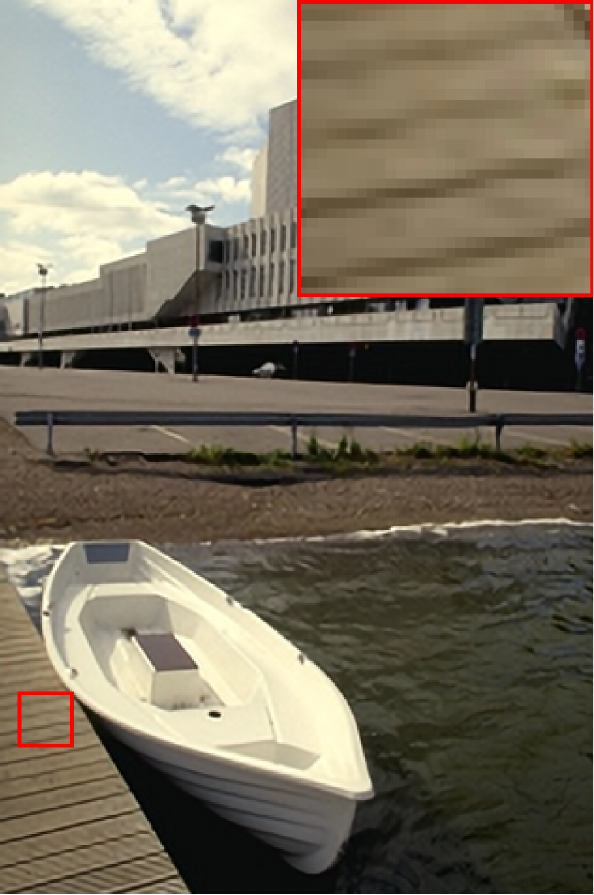} \hspace{-0.43cm} &
\includegraphics[height=1.9in,width=1.4in,angle=0]{./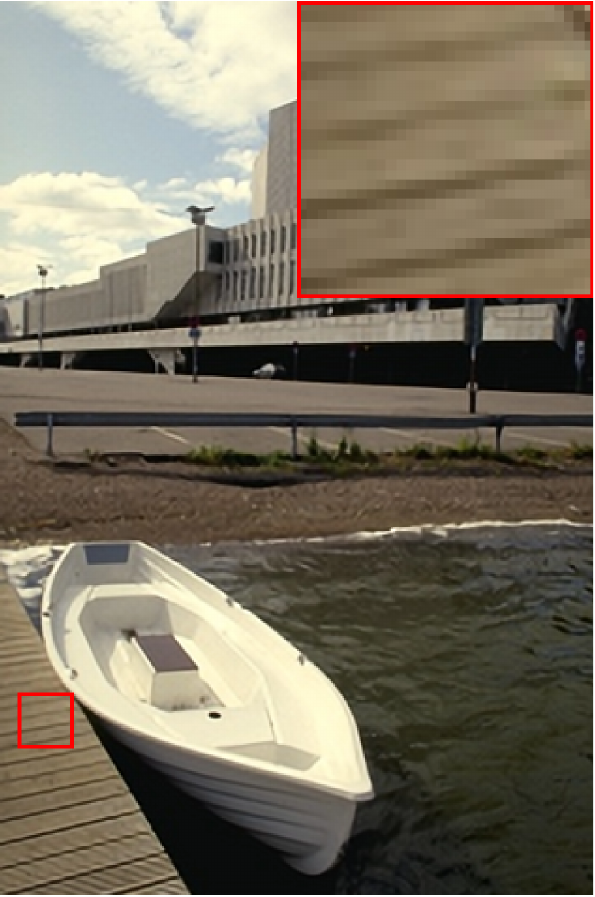}
\hspace{-0.43cm} &
\includegraphics[height=1.9in,width=1.4in,angle=0]{./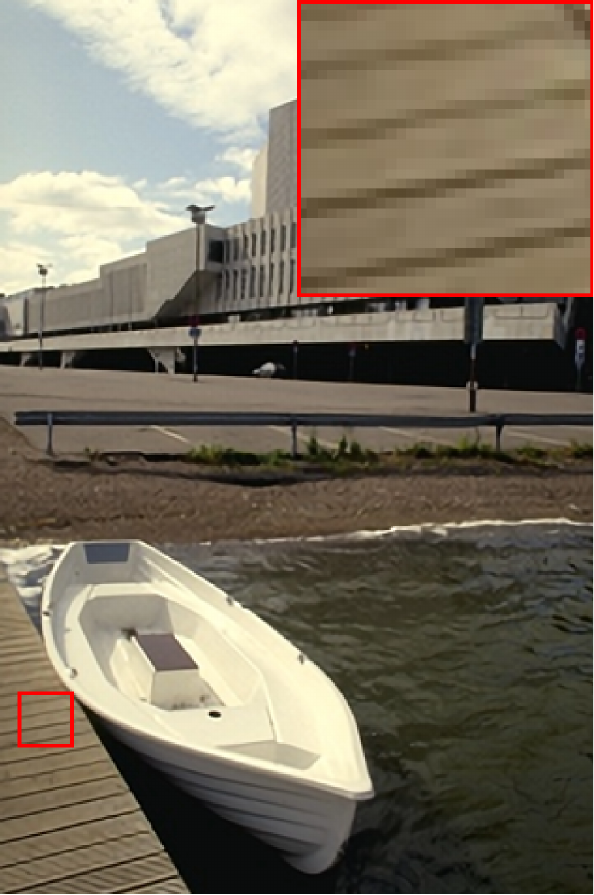}
\hspace{-0.43cm} &
\includegraphics[height=1.9in,width=1.4in,angle=0]{./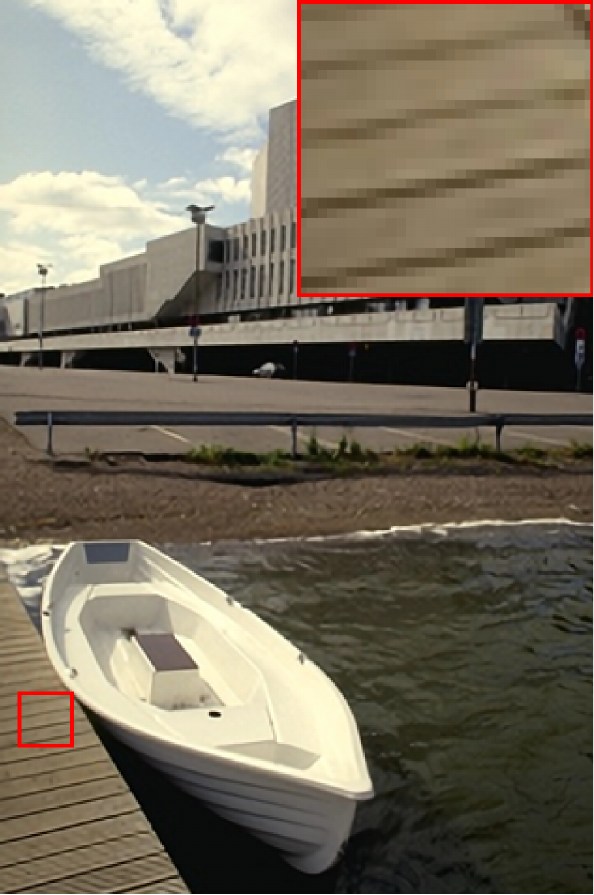}
\vspace{-0.07cm}
\\
{\sz{(PSNR/SSIM)}} & {\sz{(29.93/0.8858)}} & {\sz{(30.05/0.8874)}} & {\sz{(30.34/0.8928)}} & {\sz{(30.50/0.8958)}}\\
\includegraphics[height=1.9in,width=1.4in,angle=0]{./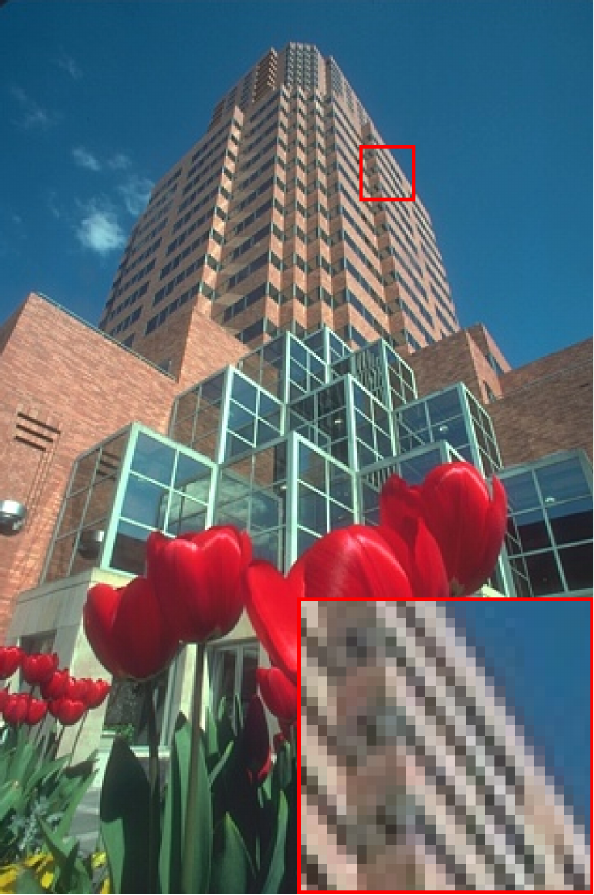}\hspace{-0.32cm} &
\includegraphics[height=1.9in,width=1.4in,angle=0]{./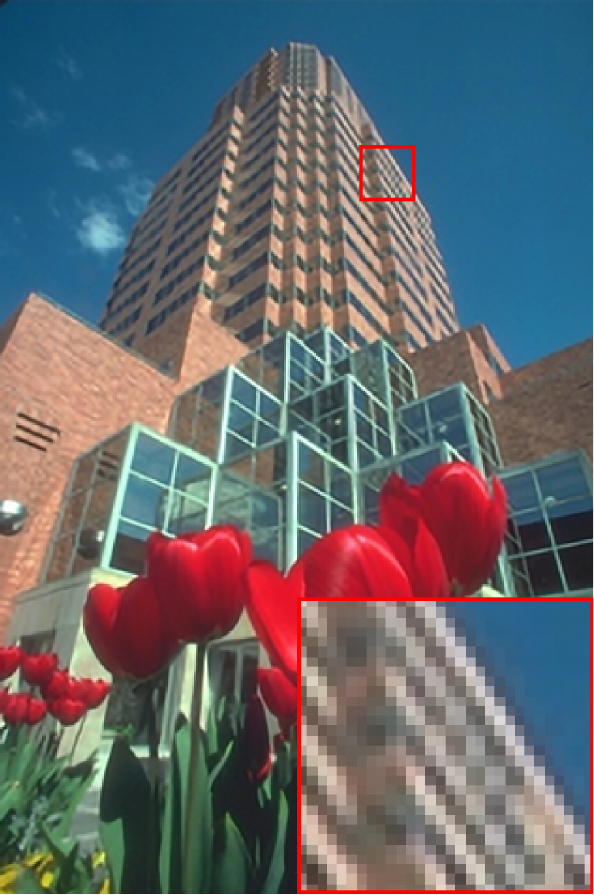} \hspace{-0.43cm} &
\includegraphics[height=1.9in,width=1.4in,angle=0]{./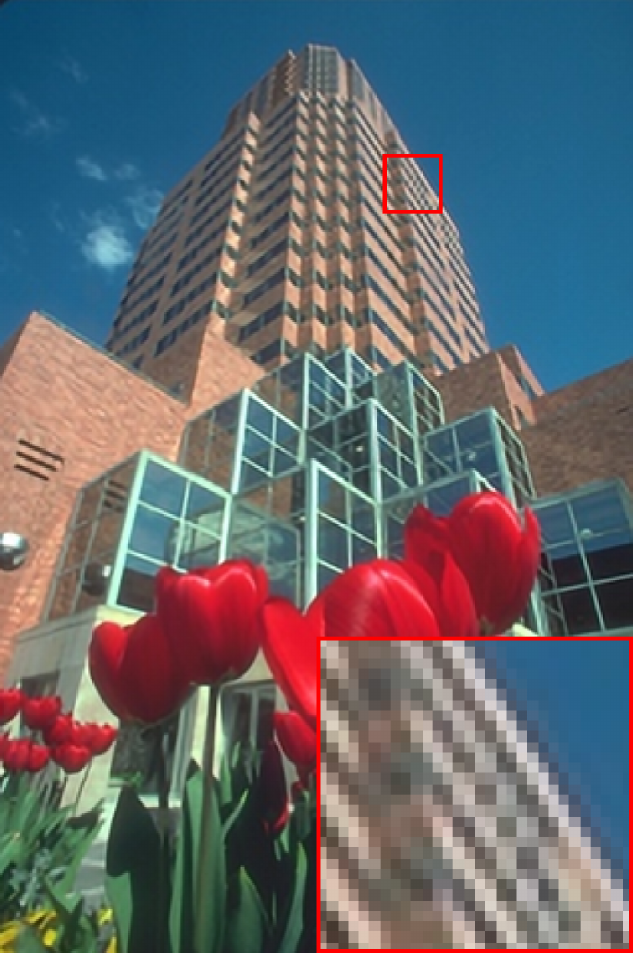}
\hspace{-0.43cm} &
\includegraphics[height=1.9in,width=1.4in,angle=0]{./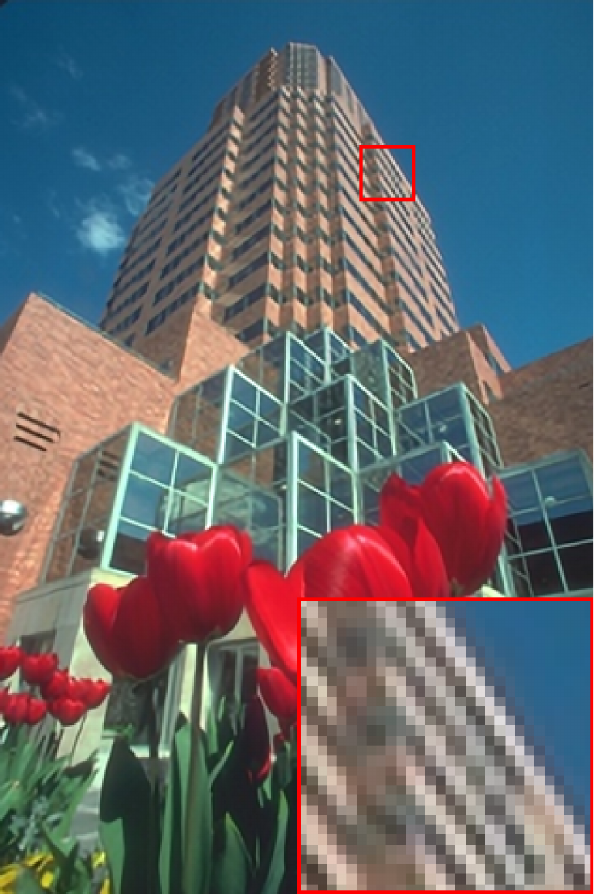}
\hspace{-0.43cm} &
\includegraphics[height=1.9in,width=1.4in,angle=0]{./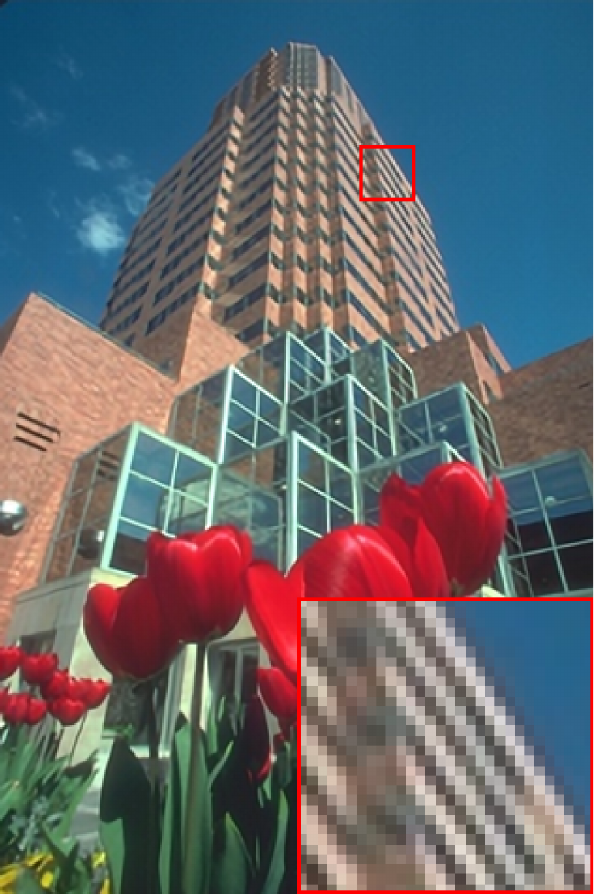}
\vspace{-0.07cm}
\\
{\sz{(PSNR/SSIM)}} & {\sz{(31.17/0.9348)}} & {\sz{(31.29/0.9358)}} & {\sz{(31.64/0.9394)}} & {\sz{(31.89/0.9441)}}\\
\end{tabu}
\end{center}
\caption{Visual super-resolution results of {\texttt{VDSR-f13}}, {\texttt{VDSR-f13+ILT}}, {\texttt{VDSR-f13+Distil}} and {\texttt{VDSR}}. First row: the super-resolution results for image '78004' from BSD100 dataset when scaling factor is $2$. Second row: the super-resolution results for image '86000' from BSD100 dataset when scaling factor is $2$.}
\label{fig:VDSR-f13}
\end{figure*}

\subsection{Ablation study}\label{subsec:ablation}
In this part, we mainly focus on demonstrating the effect of the proposed adaptive importance learning scheme and the importance initialization scheme, the difference between the proposed learning scheme and the knowledge distillation and the convergence of Algorithm~\ref{alg:bpl}. To this end, we adopt {\texttt{VDSR}} as the teacher network $\mathcal{T}$ and obtain the corresponding lightweight network by reducing the amount of feature maps in each convolution layer with a fixed ratio $\rho=0.8$ as Section~\ref{sec:light}. Concretely, the amount of feature maps in each convolution layer of the lightweight network is reduced from $64$ to $13$, viz., the parameters and the computational complexity is only $4\%$ of that in {\texttt{VDSR}}, shown as Table~\ref{table:AIL_Complexity}.

\subsubsection{Effect of adaptive importance learning}\label{subsubsec:eff_AIL}
We propose the adaptive importance learning scheme to train a given lightweight network with an easy-to-complex principle as well as gradually enhance the network generalization capacity. To demonstrate this point, we train the given lightweight network above with Algorithm~\ref{alg:bpl} and evaluate it on three test datasets (e.g., Set14, BSD100 and Urban100). For simplicity, we term the obtained network {\texttt{VDSR-f13+\-AIL}} where {\texttt{-f13}} denotes the amount of feature maps in each convolution layer of the given lightweight network. The performance (e.g., PSNR and SSIM) curves of {\texttt{VDSR-f\-13+AIL}} within $T=10$ iterations are depicted in Figure~\ref{fig:convergence}. It can be seen that on each dataset both the PSNR and SSIM measures of {\texttt{VDSR-f13+AIL}} are gradually increased with the proceeding of iterations. To further clarify this point, we implement two variants of {\texttt{VDSR-f13+AIL}} by training the same lightweight network with Algorithm~\ref{alg:bpl} but initializing the importance $W$ as zeros and random values, respectively. For simplicity, we term these two variants {\texttt{VDSR-f13+AI\-L+init\_0}} and {\texttt{VDSR-f13+AIL+init\_r}}. The corresponding performance curves for these two variants are also provided in Figure~\ref{fig:convergence}. We can find that the adaptive importance learning scheme always gradually enhance the super-resolution performance with the proceeding of iterations, which is robust to the initialization of importance. This is because that in Eq.~\eqref{eq:eq8} the importance is gradually increased based on its previous value, which enables to expose the network with more and more complex pixels. In addition, the final numerical results of these three methods on four test datasets are reported in Table~\ref{table:Effect_Initial}. To illustrate their superiority, we also implement a baseline method, {\texttt{VDSR-f13}}, which is obtained by training the given lightweight network with the traditional learning scheme in Eq.~\eqref{eq:eq1}. It can be seen that {\texttt{VDSR-f13+AIL}} and the other two variants obviously outperforms {\texttt{VDSR-f13}} in all cases. This demonstrates that the proposed easy-to-complex learning strategy can better exploit the super-resolution capacity of the given lightweight network than the traditional learning scheme in Eq.~\eqref{eq:eq1}.

In summary, we can conclude that the proposed adaptive importance learning scheme is able to gradually enhance the capacity of the given lightweight network and ultimately obviously improve the super-resolution performance, which, furthermore, is robust to the importance initialization.

\subsubsection{Effect of importance initialization from teacher}
In the proposed adaptive importance learning scheme as Algorithm~\ref{alg:bpl}, we initialize the importance $W$ by distilling knowledge from a given teacher network as Eq.~\eqref{eq:eq4}. It is noticeable that this is not the unique way for importance initialization. As mentioned in Section~\ref{subsubsec:eff_AIL}, the importance can be simply initialized as zeros or random values. To illustrate the effectiveness of the proposed importance initialization scheme, we compare {\texttt{VSDR-f13+AIL}} with its two variants, namely {\texttt{VDSR-f13+AIL+init\_0}} and {\texttt{VDSR-f13+AIL+init\_r}}. Their performance curves and the numerical comparison results can be found in Figure~\ref{fig:convergence} and Table~\ref{table:Effect_Initial}. As shown in Figure~\ref{fig:convergence}, importance initialization from a teacher network in {\texttt{VSDR-f13+AIL}} leads to much better initial capacity of network in the first iteration than that from both the zero and the random importance initialization in other two variants. For example, on Urban100 dataset, the superiority of {\texttt{VSDR-f13+AIL}} over other two variants is up to $0.2$db. Moreover, with the proceeding of iterations, {\texttt{VSDR-f13+AIL}} obviously outperforms the other two variants in all cases and {\texttt{VDSR-f13+AIL+init\_r}} often surpasses {\texttt{VDSR-f13+AI\-L+init\_0}}. Similar results also occur on the numerical results of these three methods, shown as Table~\ref{table:Effect_Initial}. The reason for their performance difference comes from the following two aspects. On one hand, when the importance $W$ is initialized as zeros, no examples will be chosen to train the network in the {\textit{Importance initialization from teacher}} step of Algorithm~\ref{alg:bpl}, and the network with randomly initialized weights will be directly fed into the {\textit{Adaptive importance learning}} step to update the importance based on its reconstruction error. Thus, the resulted importance will render the learning scheme deviating from starting with easy pixels and the following training procedures are prone to be trapped into a bad local minima. On the other hand, when $W$ is randomly initialized, the learning scheme is also prone to deviate from the principle of starting with easy pixels. In contrast to the case with zero-initialized $W$, randomly initialized $W$ enables to train the network in the {\textit{Importance learning from teacher}} step of Algorithm~\ref{alg:bpl} with some selected pixels, which leads to better initial network capacity as well as the final results, shown as the results of {\texttt{VDSR-f13+AIL+i\-nit\_r}} and {\texttt{VDSR-f13+AIL+init\_0}} in Figure~\ref{fig:convergence} and Table~\ref{table:Effect_Initial}. In this study, the proposed importance initialization from teacher enables the network to start with easy pixels, thus producing the best performance. Therefore, we can conclude that importance initialization from teacher can benefit providing better initial capacity of network as well as the ultimate super-resolution performance.

\subsubsection{Comparison with knowledge distillation}
The proposed adaptive importance learning scheme initializes the importance from a given teacher network, which is similar to the prevailing knowledge distillation scheme~\citep{hinton2015distilling}. Both of them distil specific knowledge from a given teacher model to train the student model for better generalization capacity. The difference is that the knowledge distillation scheme forces the student network to mimic the soften output of the given teacher network, whereas the proposed scheme distils the importance from the teacher network to guide the lightweight network focusing on handling easy pixels at beginning. To further clarify their difference, we implement a variant of {\texttt{VSDR-f13+AIL}} by training the same lightweight network with the knowledge distillation scheme~\citep{hinton2015distilling} as
\begin{equation}\label{eq:eq9}
\begin{aligned}
\min\limits_{\mathbf{\theta}} \frac{1}{n}\sum\nolimits^n_{i=1}\left[l\left(\mathbf{y}_i, \mathcal{S}(\mathbf{x}_i, \mathbf{\theta})\right) + \beta \cdot l\left(\mathcal{T}(\mathbf{x}_i), \mathcal{S}(\mathbf{x}_i, \mathbf{\theta})\right)\right]
\end{aligned}
\end{equation}
where $\beta$ is set as $0.1$ for the best performance. The numerical results of this variant (i.e., termed {\texttt{VSDR-f13+Distil}}), {\texttt{VSDR-f13+AIL}} and {\texttt{VSDR-f13}} on four test datasets are provided in Table~\ref{table:Effect_Distil}. It can be found that {\texttt{VSDR-f13+Distil}} only gives comparable results to that of the baseline {\texttt{VDSR-f13}} and is far inferior to {\texttt{VSDR-f13+AIL}}. To further clarify this point, we depict some visual results of these three networks in Figure~\ref{fig:VDSR-f13}. We can find that compared with {\texttt{VSDR-f13}} and {\texttt{VSDR-f13+Distil}}, {\texttt{VDSR-f13+AIL}} recovers more image details and the produced results are even close to that of {\texttt{VDSR}} with full parameters. The reason is intuitive. In~\citep{hinton2015distilling}, knowledge distillation scheme is utilized in the classification problem where the soften output of the teacher network can provide more valuable information than the discrete labels in ground truth. However, SISR is a regression problem where the ground truth is inherently continuous. Thus, the output of teacher network fails to provide more valuable information than the ground truth. In contrast, the proposed scheme enables to train the lightweight network with an easy-to-complex paradigm, which can enhance the generalization capacity of network.

\subsubsection{Convergence}
In Algorithm~\ref{alg:bpl}, the network training and the importance learning are conducted in an alternative way. Thus, it is necessary to analysis the convergence of Algorithm~\ref{alg:bpl}. In addition to the theoretical illustration in Section~\ref{sec:opt}, we further depict the PSNR and SSIM curves of {\texttt{VSDR-f13+AIL}} within $T=10$ iterations on three test datasets in Figure~\ref{fig:convergence}. It can be found that {\texttt{VSDR-f13+AIL}} gradually improves the performance and ultimately converges with the proceeding of iterations.

\begin{table}\footnotesize%
\caption{Parameters, computation complexity (e.g., FLOPS) and average running time (e.g., seconds) of different scales of lightweight networks for {\texttt{VDSR}}. The running time is evaluated over Set5 dataset with scaling factor $3$ on a single CPU.}
\renewcommand{\arraystretch}{1.2}
\begin{center}
\begin{tabular}{l|c|c|c}
\hline
Method & Parameters & Complexity & Time\\
\hline
VDSR & 665$K$ & 2491$M$ &  5.26\\
VDSR-f32/+AIL & 166$K$ & 642$M$ &  2.13\\
VDSR-f22/+AIL & 79$K$ & 311$M$ &  1.46\\
VDSR-f16/+AIL & 42$K$ & 170$M$ &  0.96\\
VDSR-f13/+AIL & 28$K$ & 115$M$ &  0.77\\
\hline
DRRN & 297$K$ & 483337$M$ & 65.98\\
DRRN-f25/+AIL & 12$K$ & 18463$M$ & 4.21\\
\hline
\end{tabular}
\end{center}
\label{table:AIL_Complexity}
\end{table}

\begin{table*}\footnotesize%
\caption{Average PSNR/SSIM of {\texttt{VDSR-f16}}, {\texttt{VDSR-f16+AIL}} and {\texttt{VDSR}} on four test datasets. {\blue{$\uparrow$PSNR/SSIM}} and {{$\downarrow$PSNR/SSIM}} denote the performance increase and decrease over VDSR-f16, respectively.}
\renewcommand{\arraystretch}{1.2}
\begin{center}
\begin{tabular}{l|c|c|cc|c}
\hline
Dataset & scale & VDSR-f16 & \multicolumn{2}{c|}{VDSR-f16+AIL} & VDSR\\
\hline
\multirow{3}{*}{Set5} & $\times$2 & 37.23/0.9581 & 37.51/0.9594 & {\blue{$\uparrow$0.28}}/{\blue{$\uparrow$0.0013}} & 37.59/0.9596\\
& $\times$3 & 33.21/0.9174 & 33.45/0.9205 & {\blue{$\uparrow$0.25}}/{\blue{$\uparrow$0.0031}} & 33.69/0.9227\\
& $\times$4 & 30.77/0.8731 & 31.05/0.8805 & {\blue{$\uparrow$0.29}}/{\blue{$\uparrow$0.0073}} & 31.34/0.8846\\
\hline
\multirow{3}{*}{Set14} & $\times$2 & 32.85/0.9115 & 33.05/0.9131 & {\blue{$\uparrow$0.19}}/{\blue{$\uparrow$0.0017}} & 33.08/0.9135\\
& $\times$3 & 29.63/0.8288 & 29.79/0.8318 & {\blue{$\uparrow$0.15}}/{\blue{$\uparrow$0.0030}} & 29.90/0.8339\\
& $\times$4 & 27.78/0.7608 & 27.96/0.7665 & {\blue{$\uparrow$0.18}}/{\blue{$\uparrow$0.0058}} & 28.10/0.7699\\
\hline
\multirow{3}{*}{BSD100} & $\times$2 & 31.65/0.8931 & 31.82/0.8955 & {\blue{$\uparrow$0.17}}/{\blue{$\uparrow$0.0024}} & 31.87/0.8961\\
& $\times$3 & 28.61/0.7926 & 28.72/0.7956 & {\blue{$\uparrow$0.10}}/{\blue{$\uparrow$0.0029}} & 28.81/0.7979\\
& $\times$4 & 27.05/0.7175 & 27.17/0.7222 & {\blue{$\uparrow$0.11}}/{\blue{$\uparrow$0.0047}} & 27.26/0.7253\\
\hline
\multirow{3}{*}{Urban100} & $\times$2 & 30.06/0.9056 & 30.53/0.9120 & {\blue{$\uparrow$0.47}}/{\blue{$\uparrow$0.0064}} & 30.65/0.9135\\
& $\times$3 & 26.58/0.8132 & 26.83/0.8210 & {\blue{$\uparrow$0.25}}/{\blue{$\uparrow$0.0077}} & 27.08/0.8273\\
& $\times$4 & 24.68/0.7327 & 24.93/0.7439 & {\blue{$\uparrow$0.24}}/{\blue{$\uparrow$0.0112}} & 25.15/0.7520\\
\hline
\end{tabular}
\end{center}
\label{table:AIL_f16}
\end{table*}

\begin{table*}\footnotesize%
\caption{Average PSNR/SSIM of {\texttt{VDSR-f22}}, {\texttt{VDSR-f22+AIL}} and {\texttt{VDSR}} on four test datasets. The results of {\texttt{VDSR-f32+AIL}} comparable to or over {\texttt{VDSR}} are in bold. {\blue{$\uparrow$PSNR/SSIM}} and {{$\downarrow$PSNR/SSIM}} denote the performance increase and decrease over {\texttt{VDSR-f22}}, respectively.}
\renewcommand{\arraystretch}{1.2}
\begin{center}
\begin{tabular}{l|c|c|cc|c}
\hline
Dataset & scale & VDSR-f22 & \multicolumn{2}{c|}{VDSR-f22+ILT} & VDSR\\
\hline
\multirow{3}{*}{Set5} & $\times$2 & 37.34/0.9586 & {\textbf{37.59/0.9597}} & {\blue{$\uparrow$0.25}}/{\blue{$\uparrow$0.0011}} & 37.59/0.9596\\
& $\times$3 & 33.32/0.9190 & 33.54/0.9211 & {\blue{$\uparrow$0.22}}/{\blue{$\uparrow$0.0022}} & 33.69/0.9227\\
& $\times$4 & 30.87/0.8759 & 31.14/0.8818 & {\blue{$\uparrow$0.27}}/{\blue{$\uparrow$0.0059}} & 31.34/0.8846\\
\hline
\multirow{3}{*}{Set14} & $\times$2 & 32.94/0.9123 & {\textbf{33.11/0.9136}} & {\blue{$\uparrow$0.17}}/{\blue{$\uparrow$0.0013}} & 33.08/0.9135\\
& $\times$3 & 29.71/0.8304 & 29.82/0.8324 & {\blue{$\uparrow$0.11}}/{\blue{$\uparrow$0.0020}} & 29.90/0.8339\\
& $\times$4 & 27.86/0.7632 & 28.00/0.7676 & {\blue{$\uparrow$0.14}}/{\blue{$\uparrow$0.0044}} & 28.10/0.7699\\
\hline
\multirow{3}{*}{BSD100} & $\times$2 & 31.73/0.8942 & {\textbf{31.87/0.8961}} & {\blue{$\uparrow$0.14}}/{\blue{$\uparrow$0.0019}} & 31.87/0.8961\\
& $\times$3 & 28.67/0.7942 & 28.75/0.7964 & {\blue{$\uparrow$0.08}}/{\blue{$\uparrow$0.0022}} & 28.81/0.7979\\
& $\times$4 & 27.10/0.7194 & 27.19/0.7231 & {\blue{$\uparrow$0.09}}/{\blue{$\uparrow$0.0037}} & 27.26/0.7253\\
\hline
\multirow{3}{*}{Urban100} & $\times$2 & 30.28/0.9086 & {\textbf{30.64/0.9133}} & {\blue{$\uparrow$0.36}}/{\blue{$\uparrow$0.0048}} & 30.65/0.9135\\
& $\times$3 & 26.72/0.8170 & 26.92/0.8233 & {\blue{$\uparrow$0.19}}/{\blue{$\uparrow$0.0063}} & 27.08/0.8273\\
& $\times$4 & 24.78/0.7368 & 24.99/0.7464 & {\blue{$\uparrow$0.22}}/{\blue{$\uparrow$0.0096}} & 25.15/0.7520\\
\hline
\end{tabular}
\end{center}
\label{table:AIL_f22}
\end{table*}

\begin{table*}\footnotesize%
\caption{Average PSNR/SSIM of {\texttt{VDSR-f32}}, {\texttt{VDSR-f32+AIL}} and {\texttt{VDSR}} on four test datasets. The results of {\texttt{VDSR-f32+AIL}} comparable to or over {\texttt{VDSR}} are in bold. {\blue{$\uparrow$PSNR/SSIM}} and {{$\downarrow$PSNR/SSIM}} denote the performance increase and decrease over {\texttt{VDSR-f32}}, respectively.}
\renewcommand{\arraystretch}{1.2}
\begin{center}
\begin{tabular}{l|c|c|cc|c}
\hline
Dataset & scale & VDSR-f32 & \multicolumn{2}{c|}{VDSR-f32+AIL} & VDSR\\
\hline
\multirow{3}{*}{Set5} & $\times$2 & 37.49/0.9593 & {\textbf{37.68/0.9601}} & {\blue{$\uparrow$0.19}}/{\blue{$\uparrow$0.0008}} & 37.59/0.9596\\
& $\times$3 & 33.35/0.9191 & {\textbf{33.68/0.9227}} & {\blue{$\uparrow$0.32}}/{\blue{$\uparrow$0.0036}} & 33.69/0.9227\\
& $\times$4 & 31.01/0.8783 & 31.26/0.8840 & {\blue{$\uparrow$0.25}}/{\blue{$\uparrow$0.0057}} & 31.34/0.8846\\
\hline
\multirow{3}{*}{Set14} & $\times$2 & 33.03/0.9130 & {\textbf{33.19/0.9144}} & {\blue{$\uparrow$0.16}}/{\blue{$\uparrow$0.0014}} & 33.08/0.9135\\
& $\times$3 & 29.71/0.8306 & {\textbf{29.89/0.8339}} & {\blue{$\uparrow$0.18}}/{\blue{$\uparrow$0.0032}} & 29.90/0.8339\\
& $\times$4 & 27.94/0.7654 & 28.08/0.7695 & {\blue{$\uparrow$0.14}}/{\blue{$\uparrow$0.0041}} & 28.10/0.7699\\
\hline
\multirow{3}{*}{BSD100} & $\times$2 & 31.81/0.8953 & 31.93/0.8970 & {\blue{$\uparrow$0.12}}/{\blue{$\uparrow$0.0017}} & 31.87/0.8961\\
& $\times$3 & 28.67/0.7943 & 28.80/0.7979 & {\blue{$\uparrow$0.13}}/{\blue{$\uparrow$0.0035}} & 28.81/0.7979\\
& $\times$4 & 27.15/0.7212 & 27.24/0.7248 & {\blue{$\uparrow$0.09}}/{\blue{$\uparrow$0.0037}} & 27.26/0.7253\\
\hline
\multirow{3}{*}{Urban100} & $\times$2 & 30.51/0.9116 & {\textbf{30.84/0.9155}} & {\blue{$\uparrow$0.32}}/{\blue{$\uparrow$0.0040}} & 30.65/0.9135\\
& $\times$3 & 26.74/0.8177 & 27.05/0.8270 & {\blue{$\uparrow$0.31}}/{\blue{$\uparrow$0.0093}} & 27.08/0.8273\\
& $\times$4 & 24.88/0.7412 & 25.10/0.7506 & {\blue{$\uparrow$0.21}}/{\blue{$\uparrow$0.0094}} & 25.15/0.7520\\
\hline
\end{tabular}
\end{center}
\label{table:AIL_f32}
\end{table*}

\begin{figure*}[htp]
\setlength{\abovecaptionskip}{0pt}
\begin{center}
\begin{tabu} to 1\textwidth{ccccc}
Ground truth & VDSR-f16 & VDSR-f16+AIL & VDSR\\
\includegraphics[height=1.2in,width=1.7in,angle=0]{./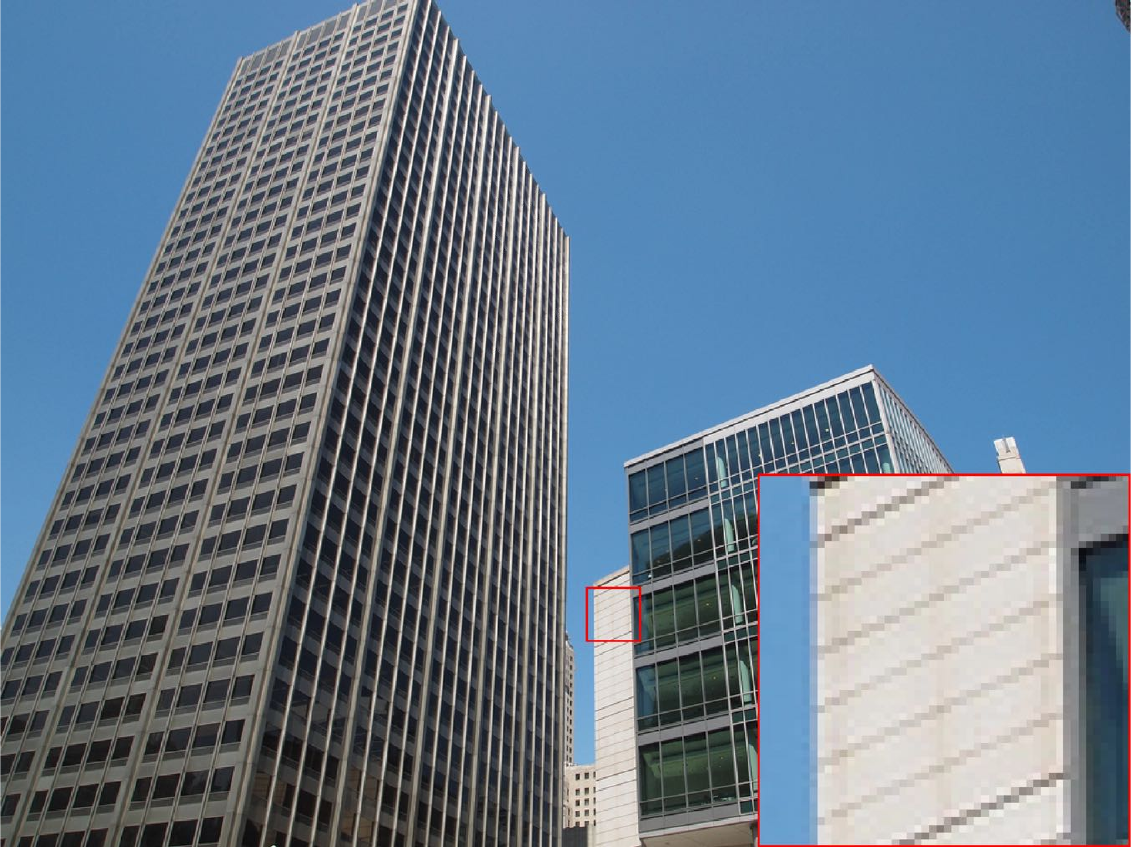}
\hspace{-0.45cm} &
\includegraphics[height=1.2in,width=1.7in,angle=0]{./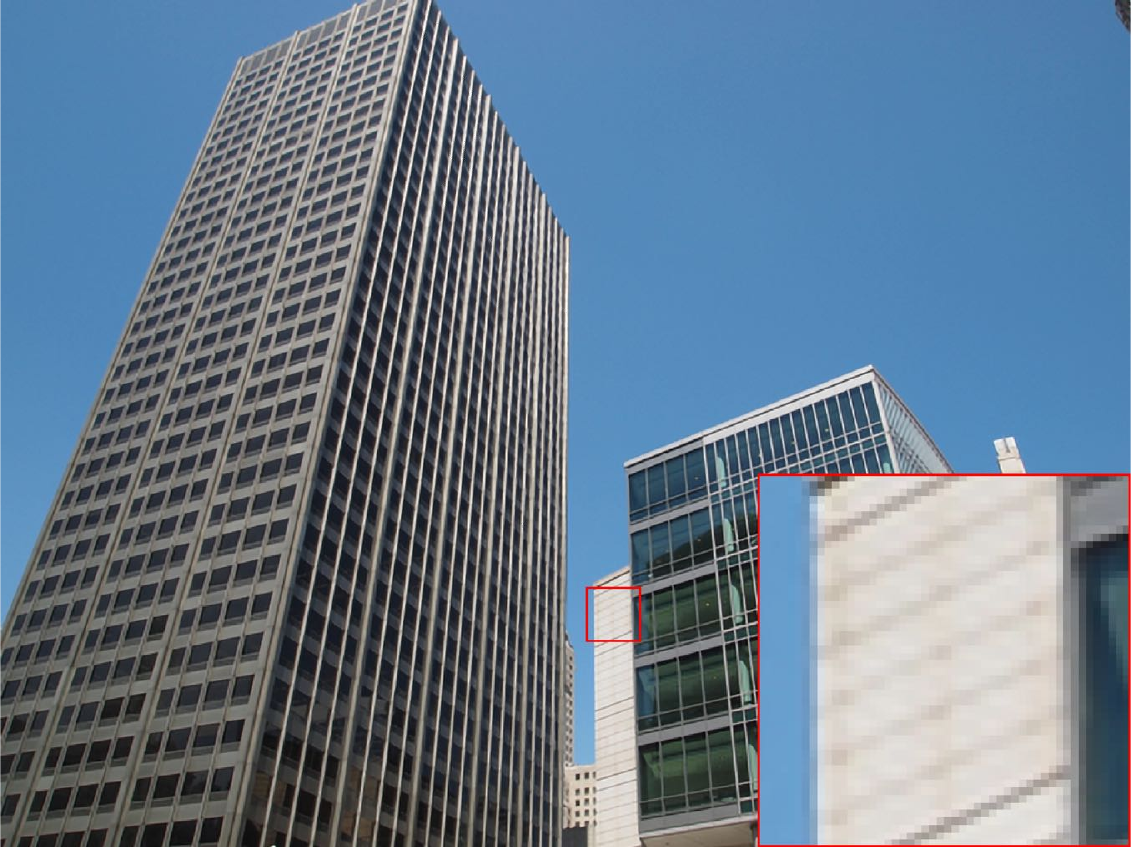} \hspace{-0.45cm} &
\includegraphics[height=1.2in,width=1.7in,angle=0]{./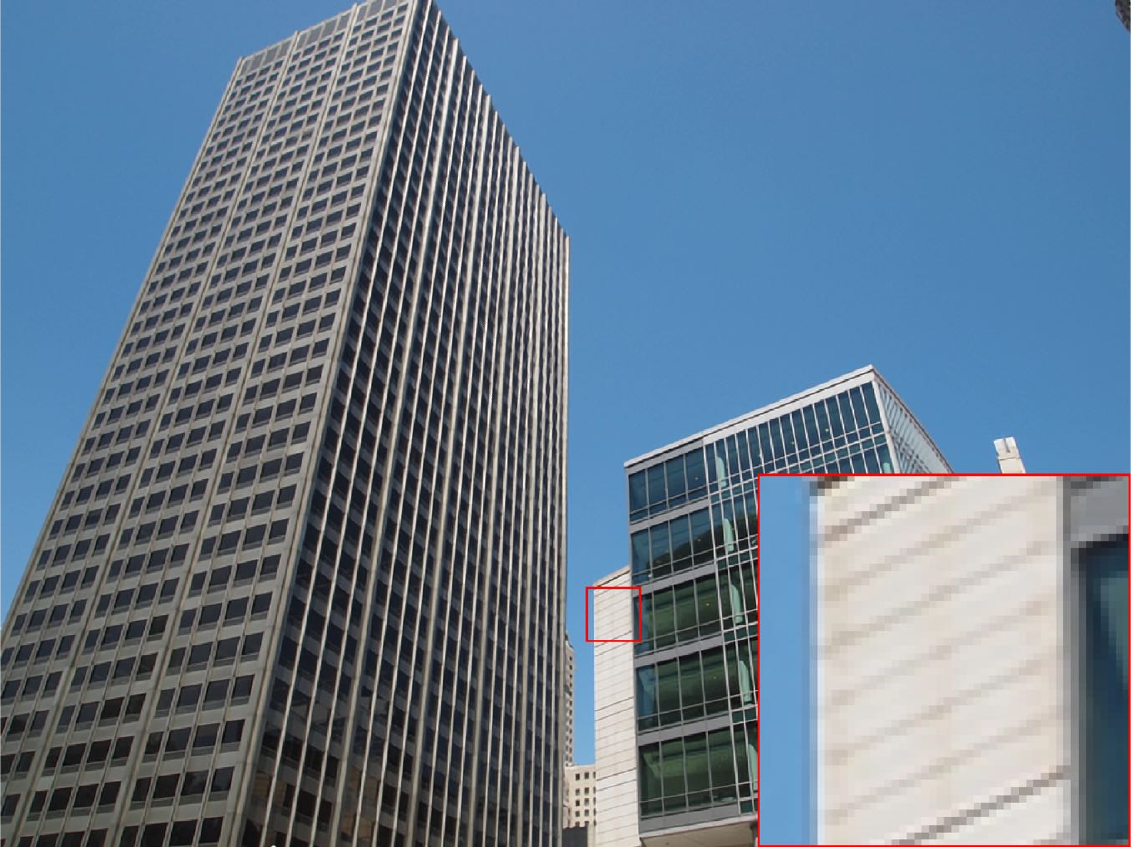}
\hspace{-0.45cm} &
\includegraphics[height=1.2in,width=1.7in,angle=0]{./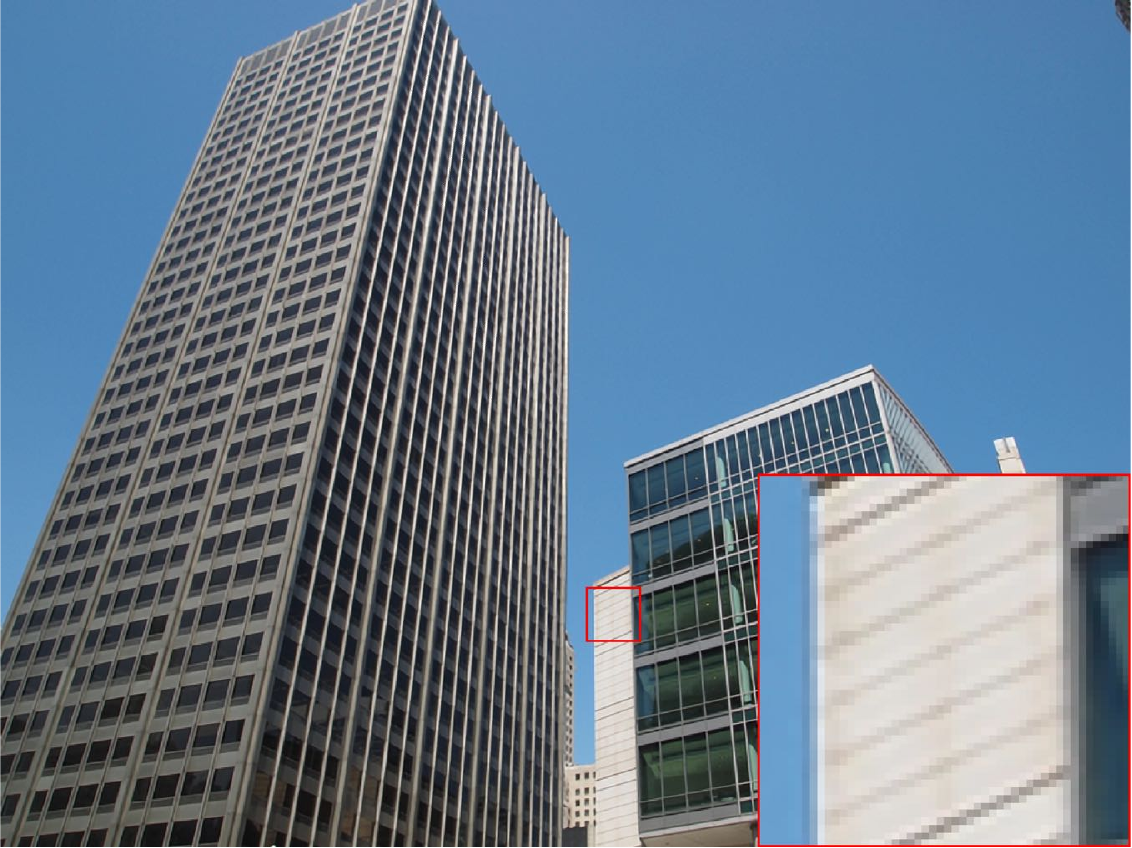}
\vspace{-0.07cm}
\\
{\sz{(PSNR/SSIM)}} & {\sz{(29.29/0.9459)}} & {\sz{(30.21/0.9569)}} & {\sz{(30.58/0.9593)}}\\
\includegraphics[height=1.2in,width=1.7in,angle=0]{./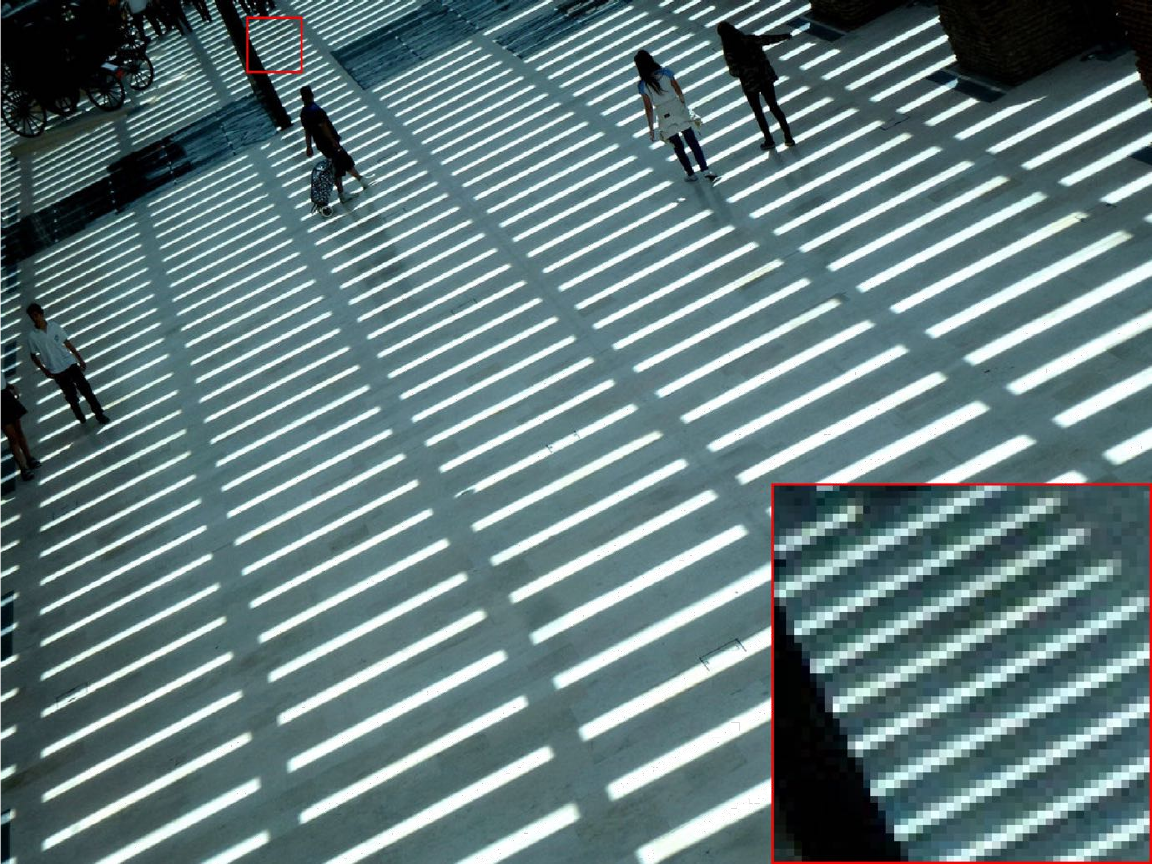}
\hspace{-0.45cm} &
\includegraphics[height=1.2in,width=1.7in,angle=0]{./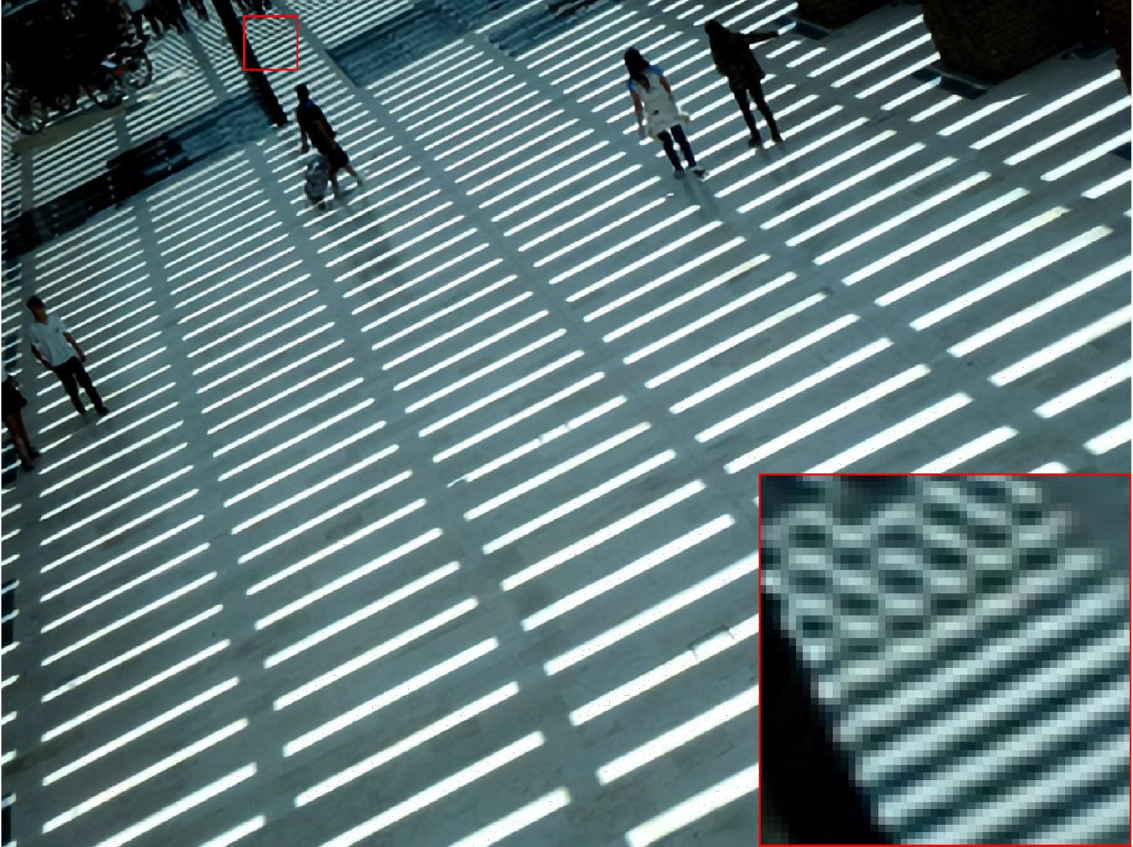} \hspace{-0.45cm} &
\includegraphics[height=1.2in,width=1.7in,angle=0]{./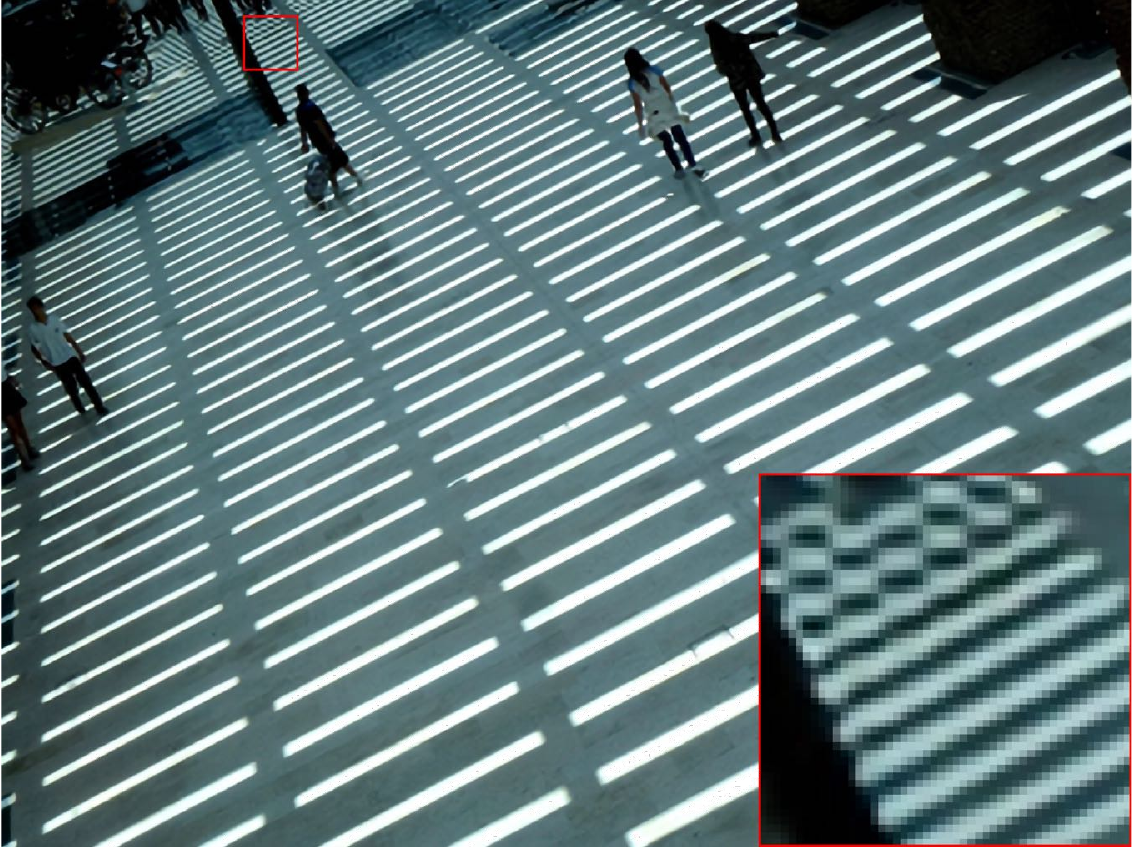}
\hspace{-0.45cm} &
\includegraphics[height=1.2in,width=1.7in,angle=0]{./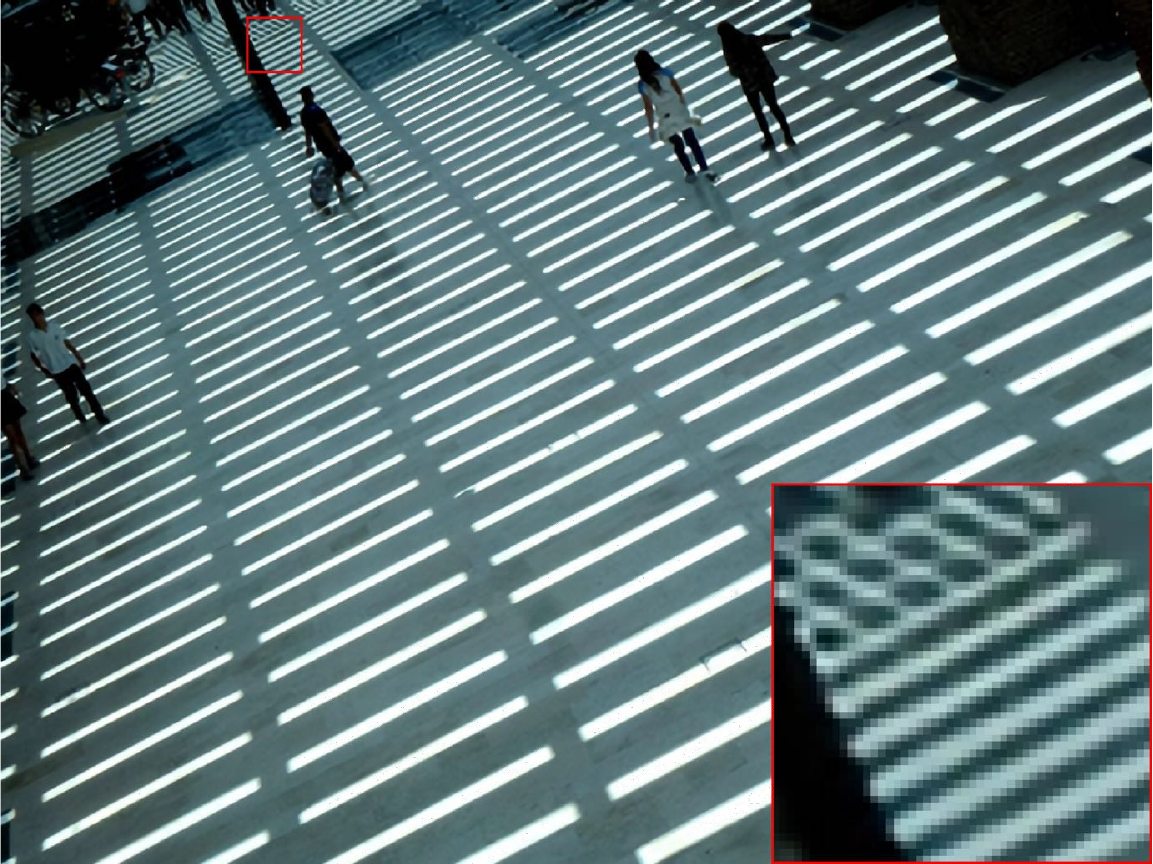}
\vspace{-0.07cm}
\\
{\sz{(PSNR/SSIM)}} & {\sz{(30.70/0.9465)}} & {\sz{(31.33/0.9505)}} & {\sz{(31.34/0.9517)}}\\
\end{tabu}
\end{center}
\caption{Visual super-resolution results of {\texttt{VDSR-f16}}, {\texttt{VDSR-f16+AIL}} and {\texttt{VDSR}}. First row: the super-resolution results for image '96' from Urban100 dataset when scaling factor is $2$. Second row: the super-resolution results for image '93' from Urban100 dataset when scaling factor is $3$.}
\label{fig:VDSR-f16}
\end{figure*}

\begin{figure*}[htp]
\setlength{\abovecaptionskip}{0pt}
\begin{center}
\begin{tabu} to 1\textwidth{ccccc}
Ground truth & VDSR-f22 & VDSR-f22+AIL & VDSR\\
\includegraphics[height=1.7in,width=1.7in,angle=0]{./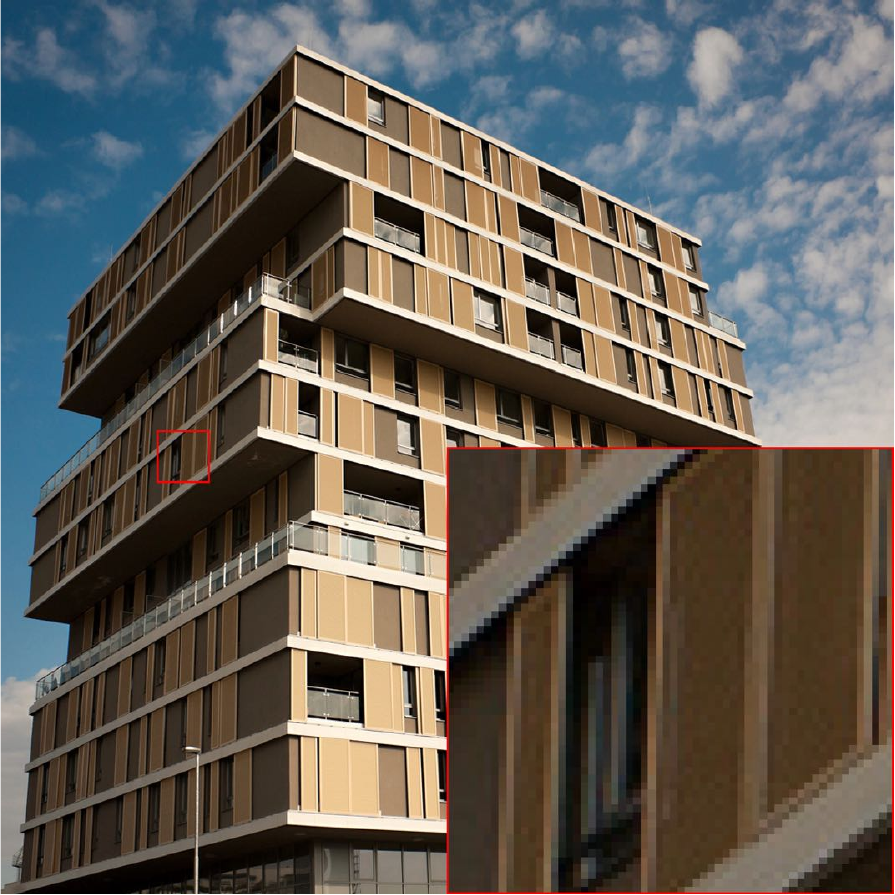}\hspace{-0.45cm} &
\includegraphics[height=1.7in,width=1.7in,angle=0]{./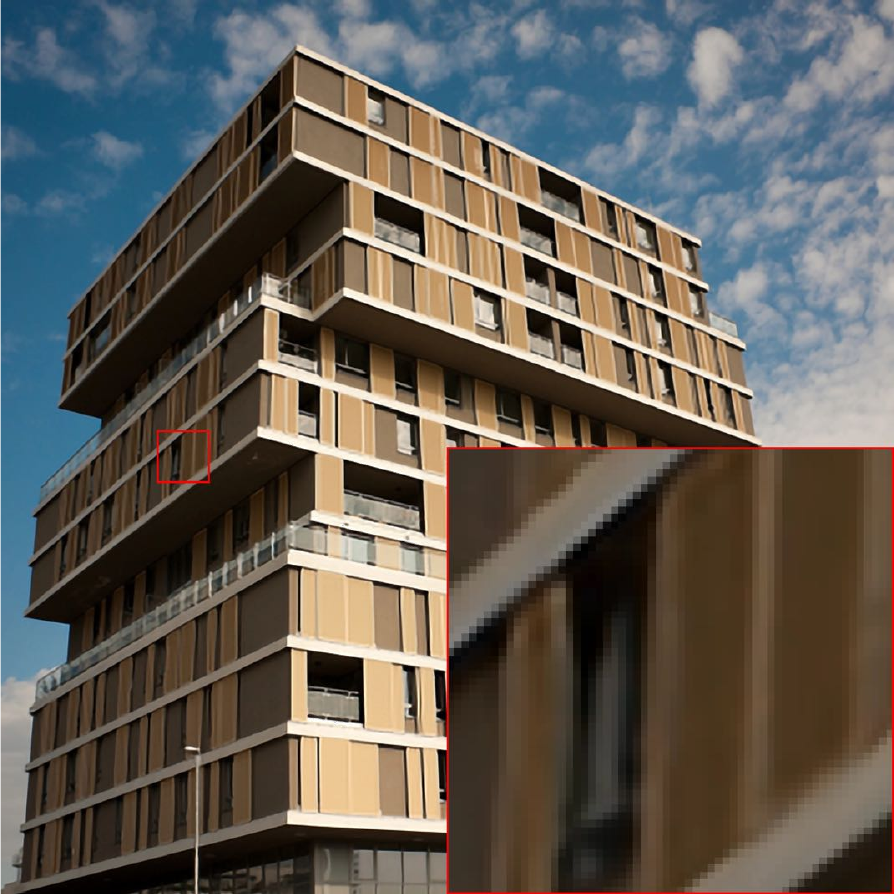} \hspace{-0.45cm} &
\includegraphics[height=1.7in,width=1.7in,angle=0]{./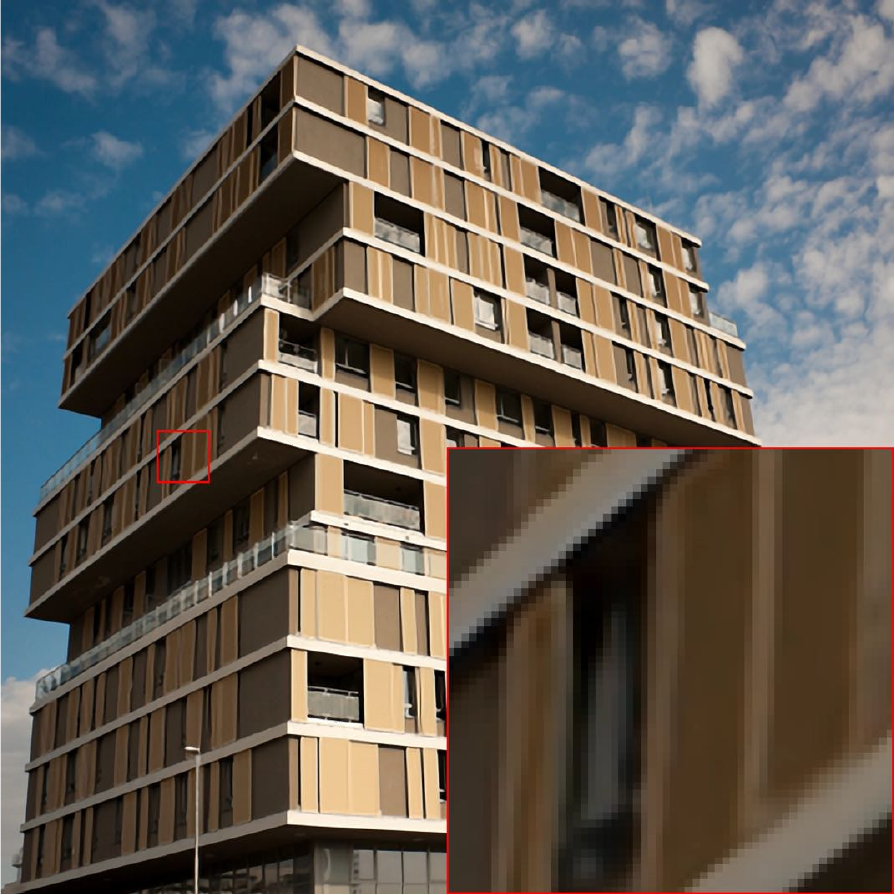}
\hspace{-0.45cm} &
\includegraphics[height=1.7in,width=1.7in,angle=0]{./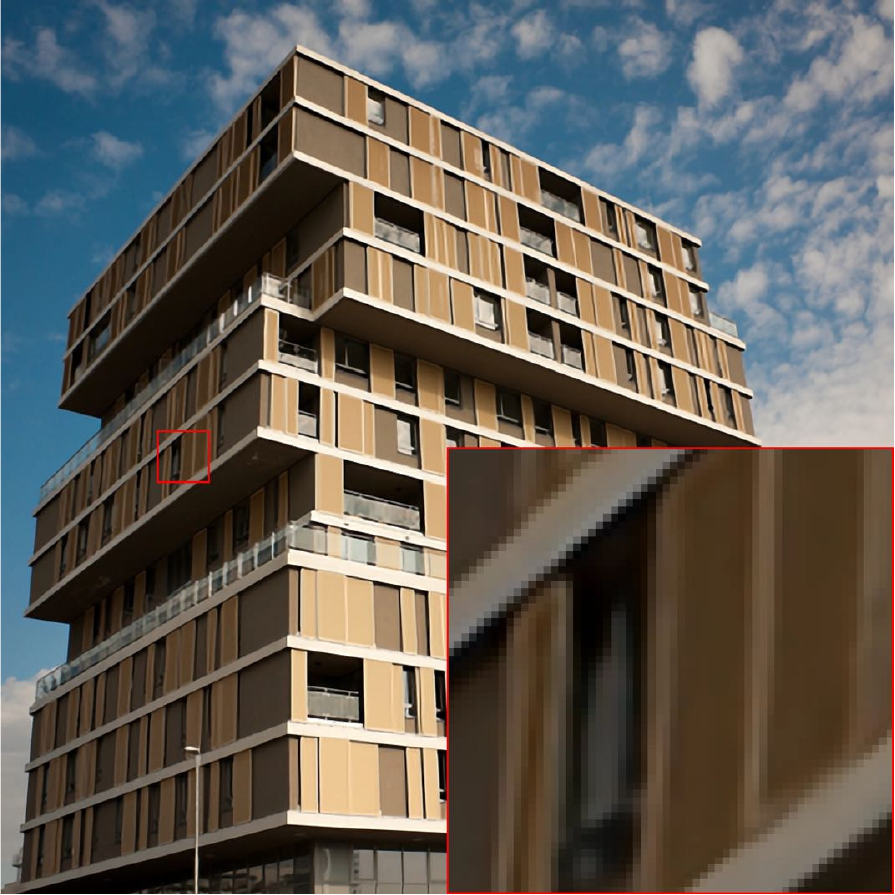}
\vspace{-0.07cm}
\\
{\sz{(PSNR/SSIM)}} & {\sz{(27.31/0.8833)}} & {\sz{(27.84/0.8950)}} & {\sz{(28.24/0.9029)}}\\
\includegraphics[height=1.7in,width=1.7in,angle=0]{./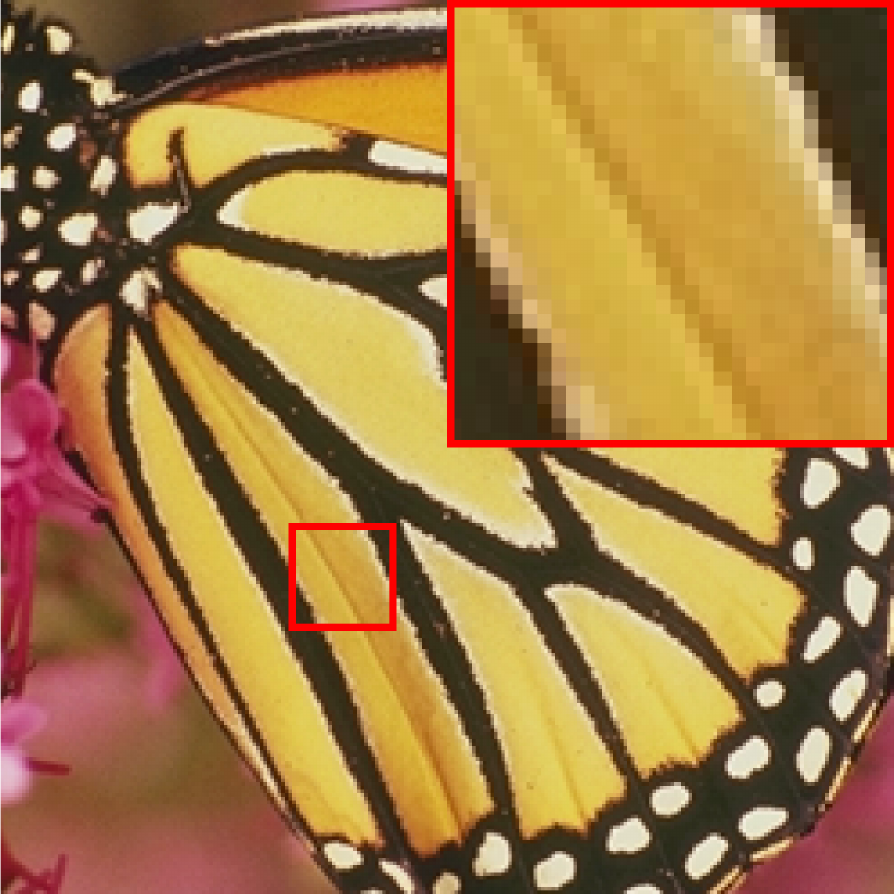}\hspace{-0.45cm} &
\includegraphics[height=1.7in,width=1.7in,angle=0]{./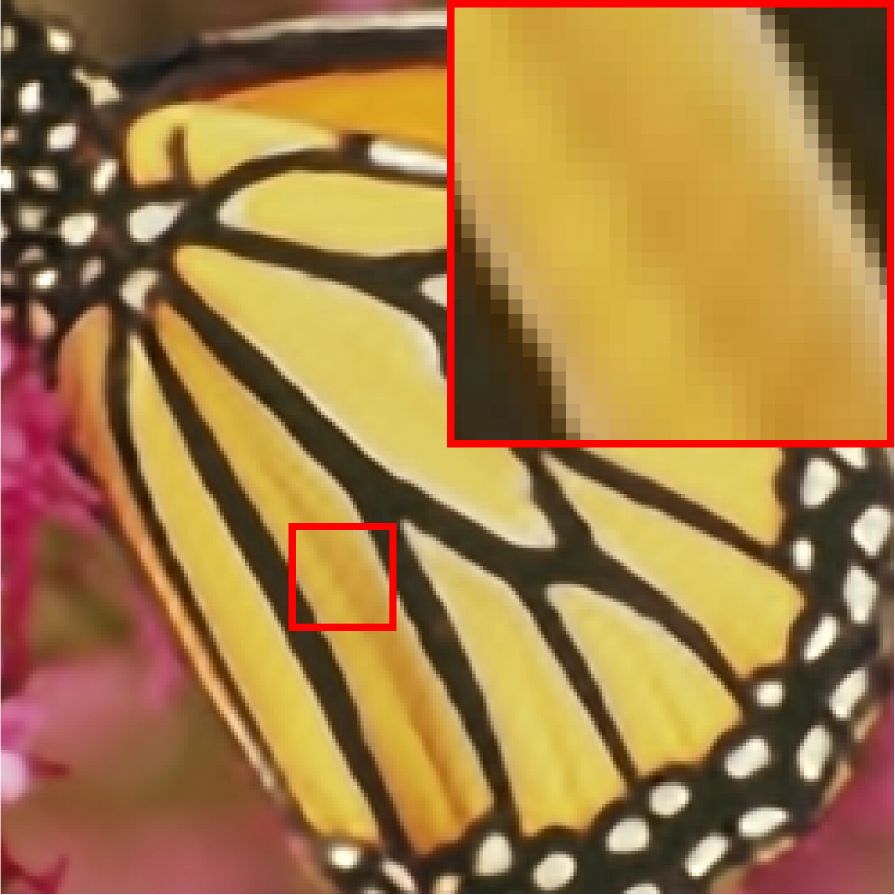} \hspace{-0.45cm} &
\includegraphics[height=1.7in,width=1.7in,angle=0]{./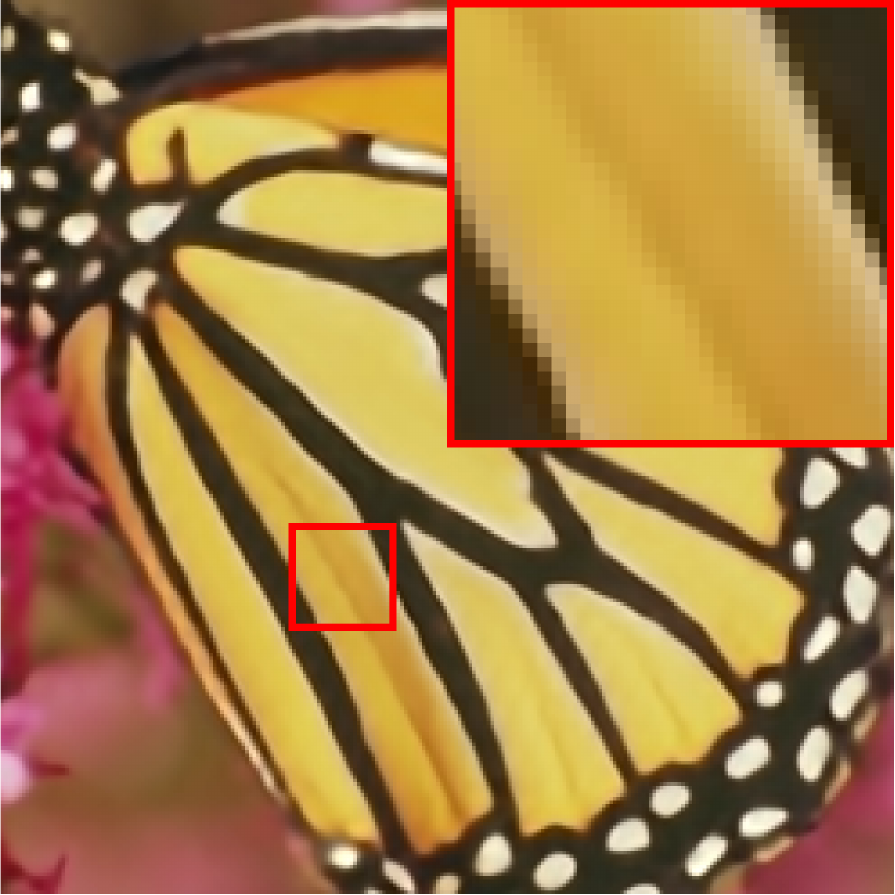}
\hspace{-0.45cm} &
\includegraphics[height=1.7in,width=1.7in,angle=0]{./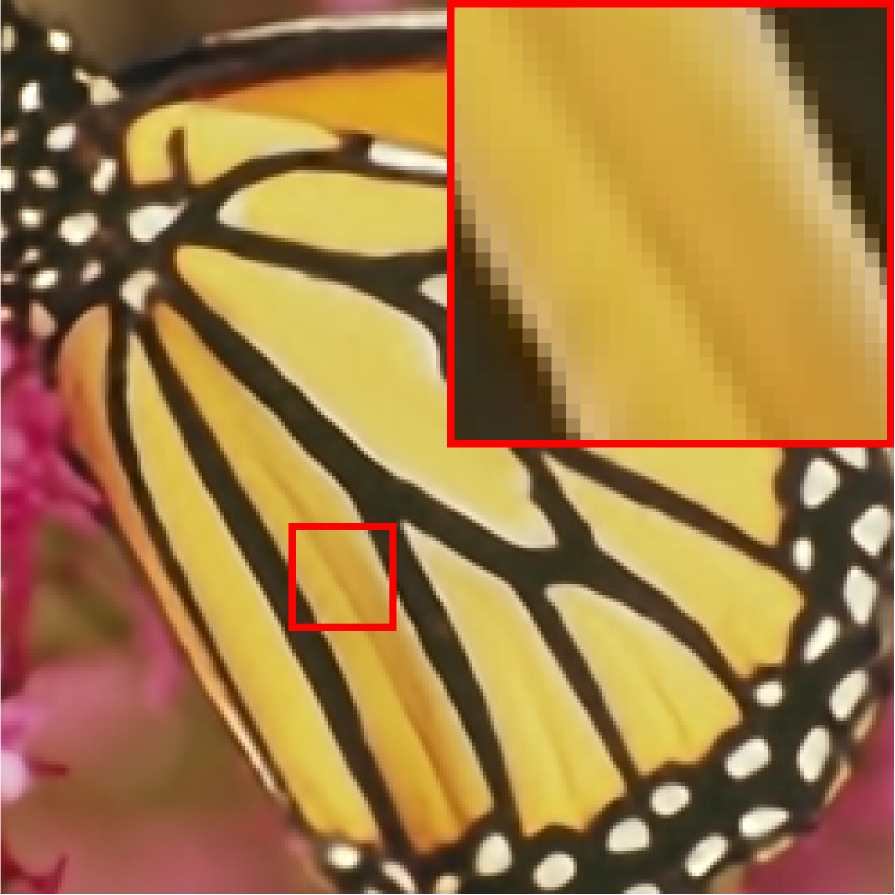}
\vspace{-0.07cm}
\\
{\sz{(PSNR/SSIM)}} & {\sz{(26.27/0.8886)}} & {\sz{(26.84/0.9019)}} & {\sz{(27.08/0.9061)}}\\
\end{tabu}
\end{center}
\caption{Visual super-resolution results of {\texttt{VDSR-f22}}, {\texttt{VDSR-f22+AIL }}and {\texttt{VDSR}}. First row: the super-resolution results for image '87' from Urban100 dataset when scaling factor is $3$. Second row: the super-resolution results for image 'butterfly' from Set5 dataset when scaling factor is $4$.}
\label{fig:VDSR-f22}
\end{figure*}

\begin{figure*}[htp]
\setlength{\abovecaptionskip}{0pt}
\begin{center}
\begin{tabu} to 1\textwidth{ccccc}
Ground truth & VDSR-f32 & VDSR-f32+AIL & VDSR\\
\includegraphics[height=1.1in,width=1.7in,angle=0]{./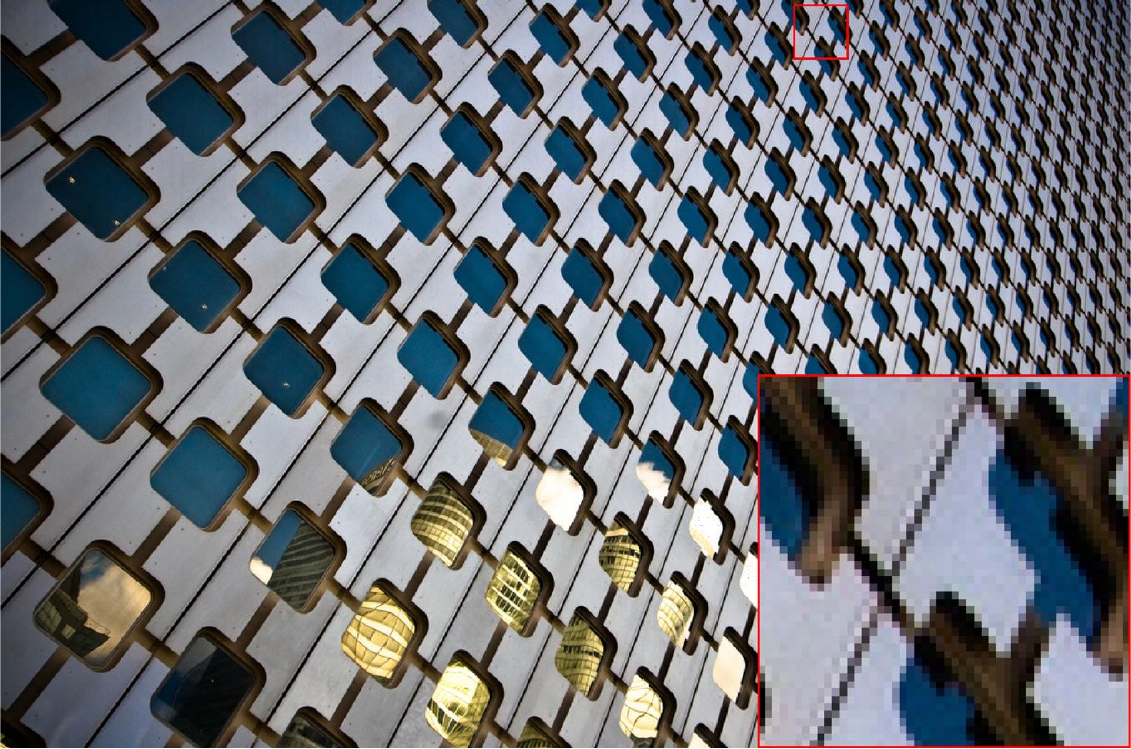}\hspace{-0.45cm} &
\includegraphics[height=1.1in,width=1.7in,angle=0]{./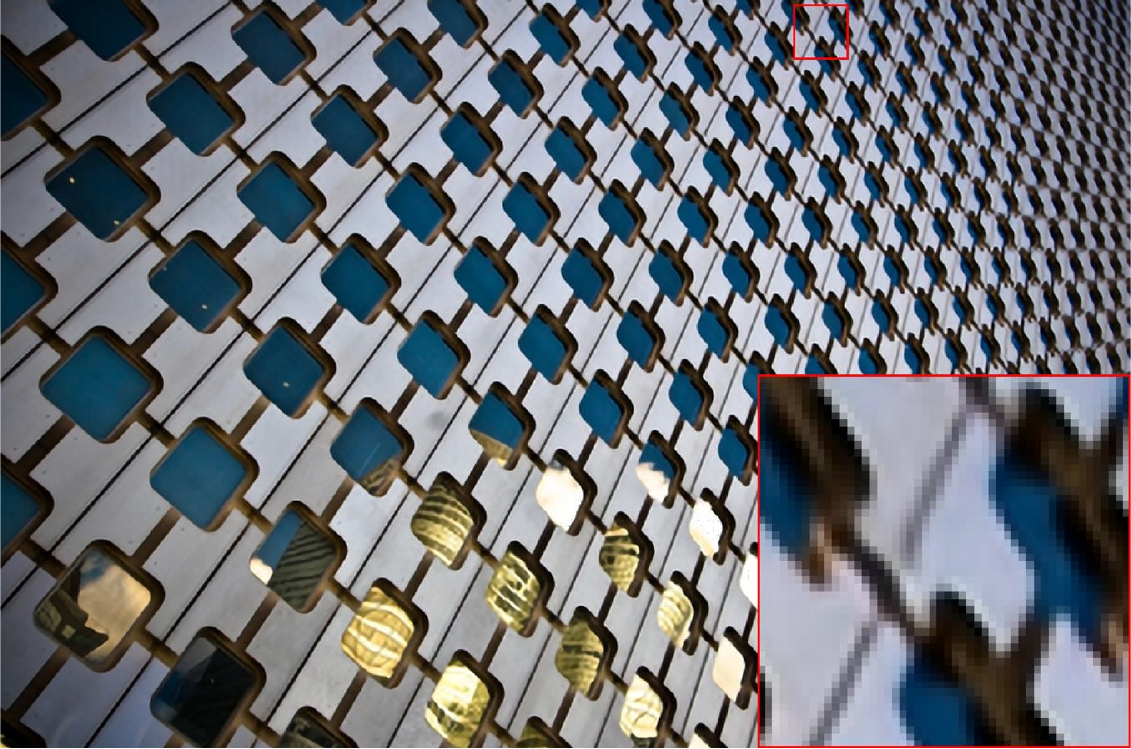} \hspace{-0.45cm} &
\includegraphics[height=1.1in,width=1.7in,angle=0]{./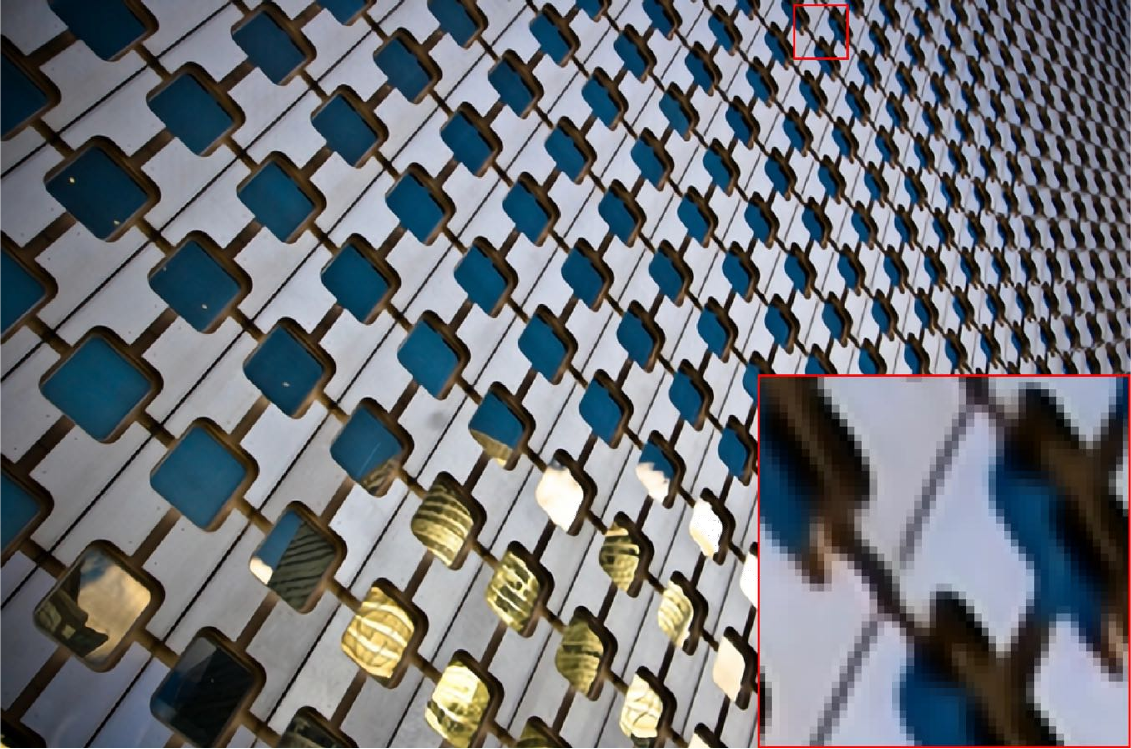}
\hspace{-0.45cm} &
\includegraphics[height=1.1in,width=1.7in,angle=0]{./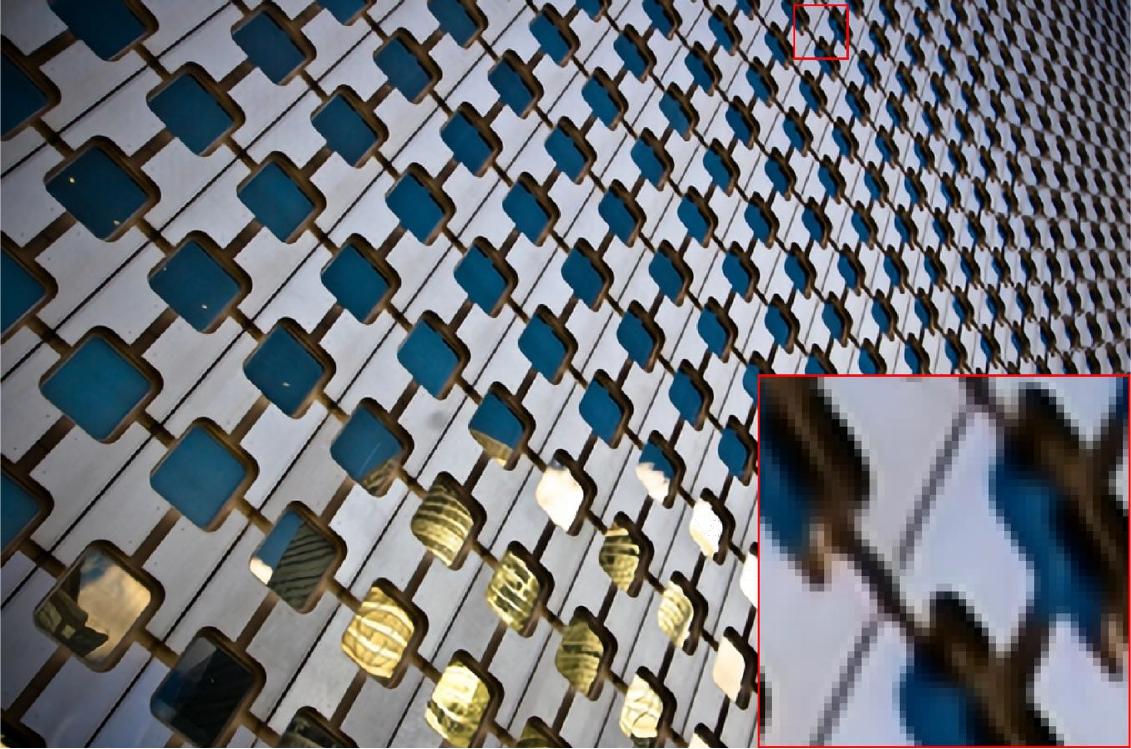}
\vspace{-0.07cm}
\\
{\sz{(PSNR/SSIM)}} & {\sz{(26.69/0.8925)}} & {\sz{(27.39/0.9058)}} & {\sz{(27.38/0.9050)}}\\
\includegraphics[height=1.2in,width=1.7in,angle=0]{./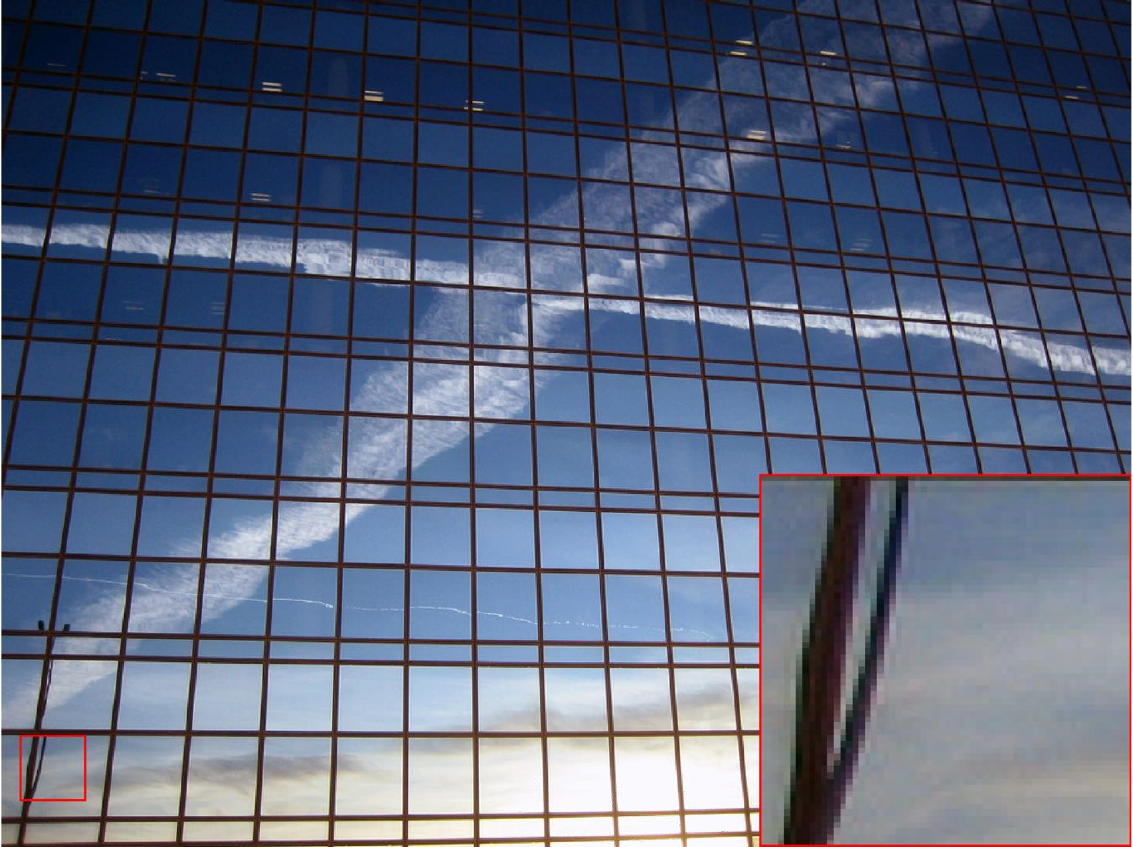}
\hspace{-0.45cm} &
\includegraphics[height=1.2in,width=1.7in,angle=0]{./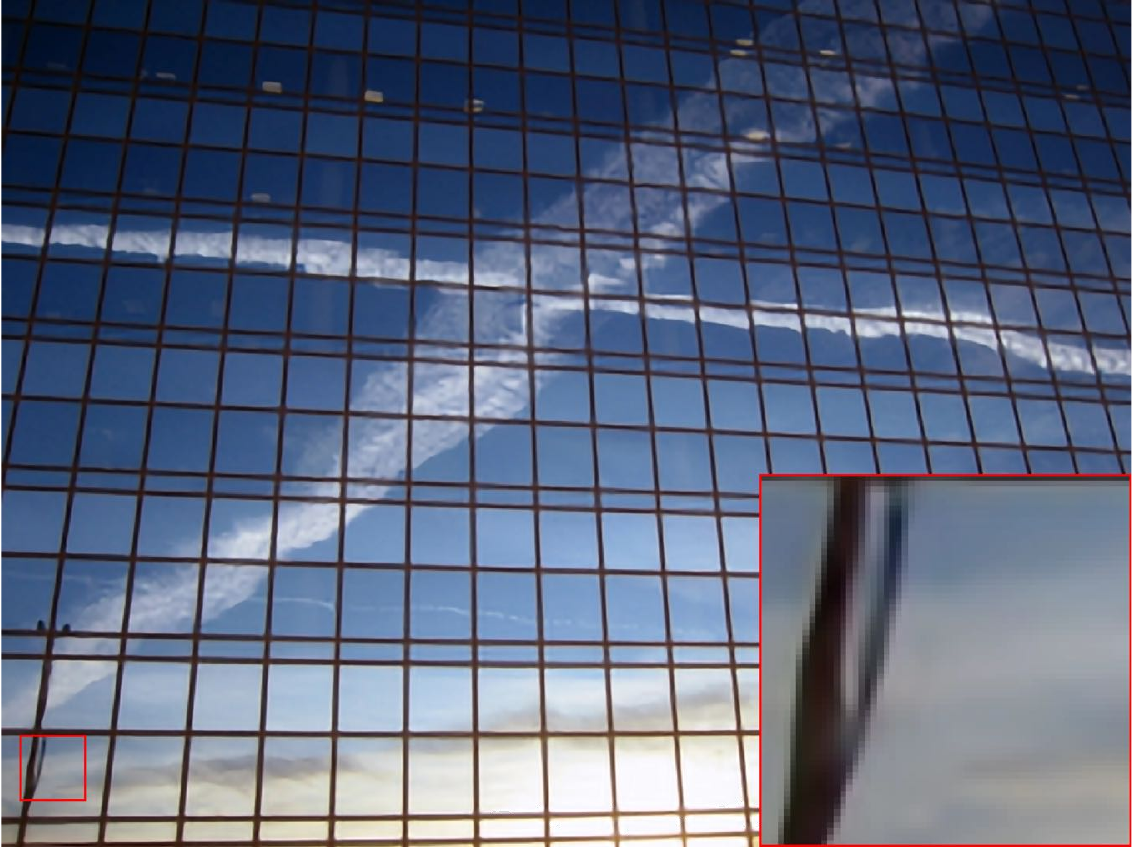} \hspace{-0.45cm} &
\includegraphics[height=1.2in,width=1.7in,angle=0]{./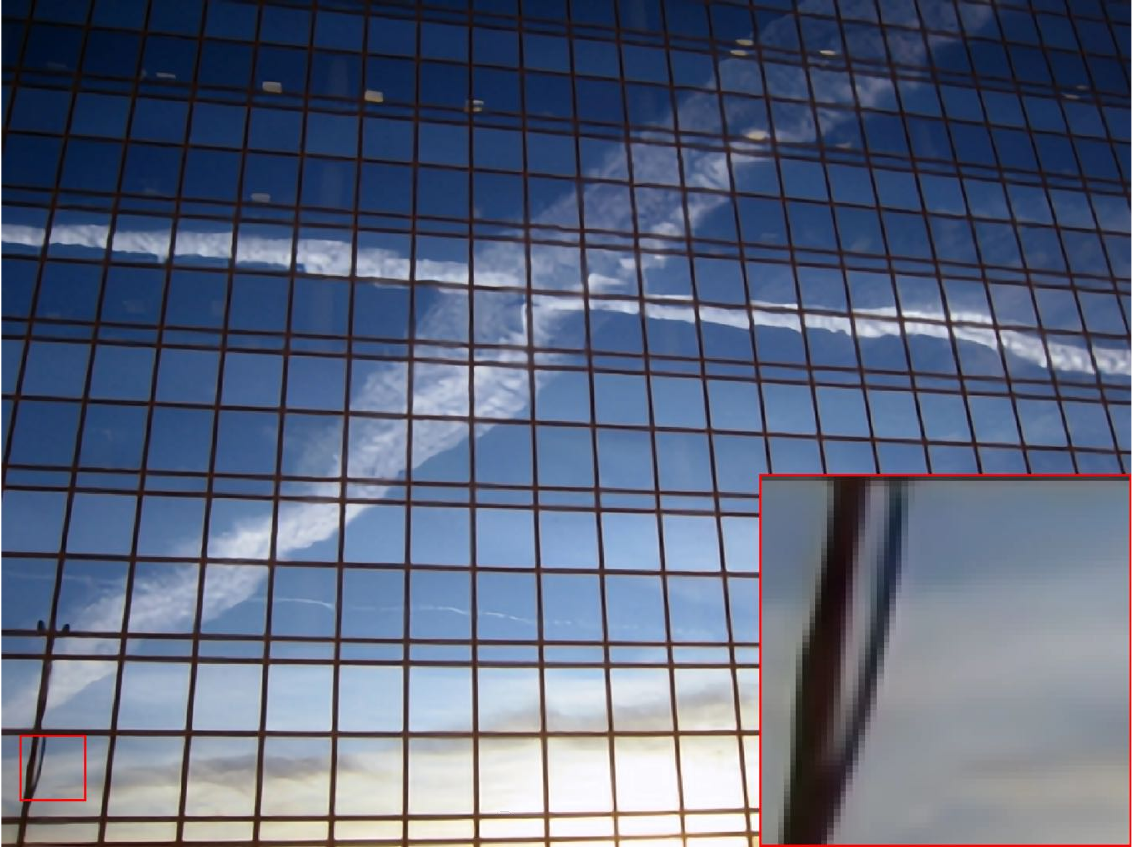}
\hspace{-0.45cm} &
\includegraphics[height=1.2in,width=1.7in,angle=0]{./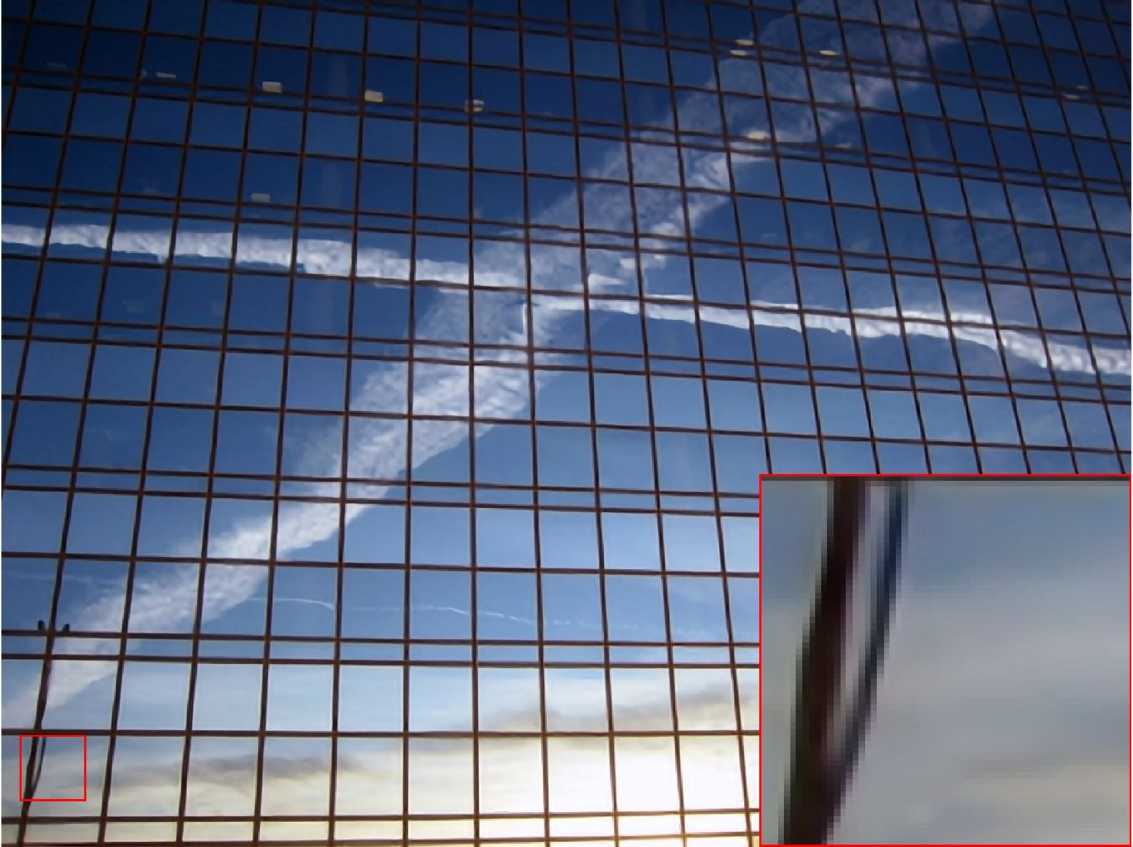}
\vspace{-0.07cm}
\\
{\sz{(PSNR/SSIM)}} & {\sz{(30.29/0.9152)}} & {\sz{(31.19/0.9305)}} & {\sz{(31.39/0.9321)}}\\
\end{tabu}
\end{center}
\caption{Visual super-resolution results of {\texttt{VDSR-f32}}, {\texttt{VDSR-f32+AIL}} and {\texttt{VDSR}}. First row: the super-resolution results for image 'ppt3' from Set14 dataset when scaling factor is $3$. Second row: the super-resolution results for image '55' from Urban100 dataset when scaling factor is $4$.}
\label{fig:VDSR-f32}
\end{figure*}

\subsection{Enhancing different scales of lightweight networks}\label{subsec:VDSR}
In this part, we employ the proposed learning scheme to enhance the capacity of different scales of lightweight networks for the given {\texttt{VDSR}} teacher network. Specifically, we implement three different scales of lightweight networks with $16$ (i.e., $\rho=0.75$), $22$ (i.e., $\rho=0.66$) and $32$ (i.e., $\rho=0.5$) features maps in each convolution layer. Similar as experiments above, we separately train each lightweight network with the traditional learning scheme in Eq.~\eqref{eq:eq1} and the proposed one in Algorithm~\ref{alg:bpl}. The resulted networks are termed with the same naming way as Section~\ref{subsec:ablation}. For example, two trained lightweight networks with $16$ feature maps are termed {\texttt{VDSR-f16}} and {\texttt{VDSR-f16+AIL}}, respectively. {\texttt{VDSR-f16}} denotes the baseline method.

Before discussing the performance of each network, we first analysis their amount of parameters as well as the computational complexity. Providing that the testing image is of size $256\times 256$, the parameters and theoretical computational complexity of these lightweight networks as well as the teacher network {\texttt{VDSR}} are given in Table~\ref{table:AIL_Complexity}. For example, the amount of parameters as well as the computational complexity of {\texttt{VDSR-f32}} and {\texttt{VDSR-f32+AIL}} are only $25\%$ of that for {\texttt{VDSR}}.

Under the same experimental settings, the quantitative results of all networks on four test datasets are provided in Table~\ref{table:AIL_f16}, Table~\ref{table:AIL_f22} and Table~\ref{table:AIL_f32}. It can be found that the proposed adaptive importance learning scheme enhances the performance of lightweight networks obviously. For example, in Table~\ref{table:AIL_f16}, when the scaling factor is $2$ on the Set5 dataset, the superiority of {\texttt{VDSR-f16+AIL}} over {\texttt{VDSR-f16}} in PSNR and SSIM is up to $0.28$db and $0.0013$, respectively. Moreover, the superiority of {\texttt{VDSR-f32+AIL}} is more obvious on the more challenging dataset. For example, when the scaling factor is $2$ on the Urban100 dataset, the superiority of {\texttt{VDSR-f16+AIL}} over {\texttt{VDSR-f16}} in PSNR and SSIM is even up to $0.47$db and $0.0064$, respectively. In addition, we find that the proposed learning scheme performs the best on scaling factor $2$ among three scaling factors. For example, as shown in Table~\ref{table:AIL_f22} and Table~\ref{table:AIL_f32}, {\texttt{VDSR-f22+AIL}} produces comparable results on four test datasets to that of {\texttt{VDSR}}, and {\texttt{VDSR-f32+AIL}} even outperforms {\texttt{VDSR}}, especially on the Urban100 dataset on which the superiority is up to $0.19$db in PSNR. The reason is intuitive. Compared with other two scaling factors, the SISR task on scaling factor $2$ is relatively easier and contains many pixels that cannot be well reconstructed by the baseline network (e.g., {\texttt{VDSR-f16}}) but may be well reconstructed when the capacity of the lightweight network is maximized. Thus, with the easy-to-complex learning paradigm, the proposed scheme is able to improve the performance more obviously. In contrast, the SISR task on other two scaling factors contains extensive complex pixels beyond the maximum capacity of the network, which cannot be well reconstructed even with the easy-to-hard learning paradigm. According to these results, we can conclude that the proposed adaptive importance learning scheme is able to enhance the performance of different scales of lightweight networks in SISR. More evidence in visual results can be found in Figure~\ref{fig:VDSR-f16},~\ref{fig:VDSR-f22} and~\ref{fig:VDSR-f32}. 

\begin{table*}\footnotesize%
\caption{Average PSNR/SSIM of {\texttt{DRRN-f25}}, {\texttt{DRRN-f25+AIL}} and {\texttt{DRRN}} on four test datasets. {\blue{$\uparrow$PSNR/SSIM}} and {{$\downarrow$PSNR/SSIM}} denote the performance increase and decrease over {\texttt{DRRN-f25}}, respectively.}
\renewcommand{\arraystretch}{1.2}
\begin{center}
\begin{tabular}{l|c|c|cc|c}
\hline
Dataset & scale & DRRN-f25 & \multicolumn{2}{c|}{DRRN-f25+AIL} & DRRN\\
\hline
\multirow{3}{*}{Set5} & $\times$2 & 37.02/0.9575 & 37.52/0.9593 & {\blue{$\uparrow$0.50}}/{\blue{$\uparrow$0.0018}} & 37.69/0.9602\\
& $\times$3 & 33.26/0.9181 & 33.64/0.9214 & {\blue{$\uparrow$0.37}}/{\blue{$\uparrow$0.0033}} & 34.02/0.9257\\
& $\times$4 & 30.92/0.8740 & 31.28/0.8832 & {\blue{$\uparrow$0.36}}/{\blue{$\uparrow$0.0092}} & 31.69/0.8899\\
\hline
\multirow{3}{*}{Set14} & $\times$2 & 32.80/0.9110 & 33.11/0.9135 & {\blue{$\uparrow$0.31}}/{\blue{$\uparrow$0.0025}} & 33.31/0.9152\\
& $\times$3 & 29.64/0.8290 & 29.83/0.8325 & {\blue{$\uparrow$0.19}}/{\blue{$\uparrow$0.0035}} & 30.05/0.8369\\
& $\times$4 & 27.82/0.7606 & 28.05/0.7682 & {\blue{$\uparrow$0.23}}/{\blue{$\uparrow$0.0076}} & 28.35/0.7752\\
\hline
\multirow{3}{*}{BSD100} & $\times$2 & 31.58/0.8926 & 31.84/0.8955 & {\blue{$\uparrow$0.26}}/{\blue{$\uparrow$0.0030}} & 32.04/0.8984\\
& $\times$3 & 28.60/0.7929 & 28.76/0.7960 & {\blue{$\uparrow$0.16}}/{\blue{$\uparrow$0.0030}} & 28.96/0.8015\\
& $\times$4 & 27.03/0.7167 & 27.21/0.7234 & {\blue{$\uparrow$0.18}}/{\blue{$\uparrow$0.0067}} & 27.42/0.7299\\
\hline
\multirow{3}{*}{Urban100} & $\times$2 & 29.95/0.9048 & 30.58/0.9123 & {\blue{$\uparrow$0.63}}/{\blue{$\uparrow$0.0075}} & 31.22/0.9195\\
& $\times$3 & 26.62/0.8140 & 26.96/0.8233 & {\blue{$\uparrow$0.34}}/{\blue{$\uparrow$0.0093}} & 27.57/0.8390\\
& $\times$4 & 24.72/0.7331 & 25.04/0.7474 & {\blue{$\uparrow$0.32}}/{\blue{$\uparrow$0.0143}} & 25.57/0.7668\\
\hline
\end{tabular}
\end{center}
\label{table:AIL_DRRNf25}
\end{table*}

\begin{figure*}[htp]
\setlength{\abovecaptionskip}{0pt}
\begin{center}
\begin{tabu} to 1\textwidth{ccccc}
\renewcommand{\arraystretch}{1.2}
Ground truth & DRRN-f25 & DRRN-f25+AIL & DRRN\\
\includegraphics[height=1.1in,width=1.7in,angle=0]{./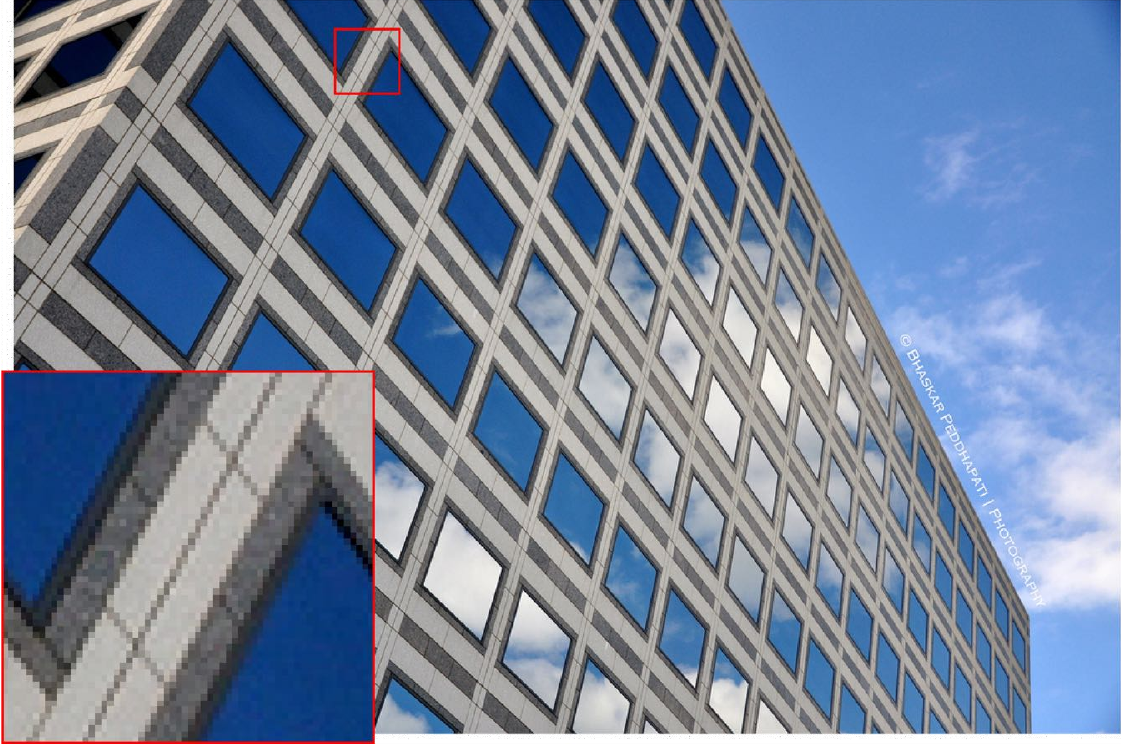}\hspace{-0.45cm} &
\includegraphics[height=1.1in,width=1.7in,angle=0]{./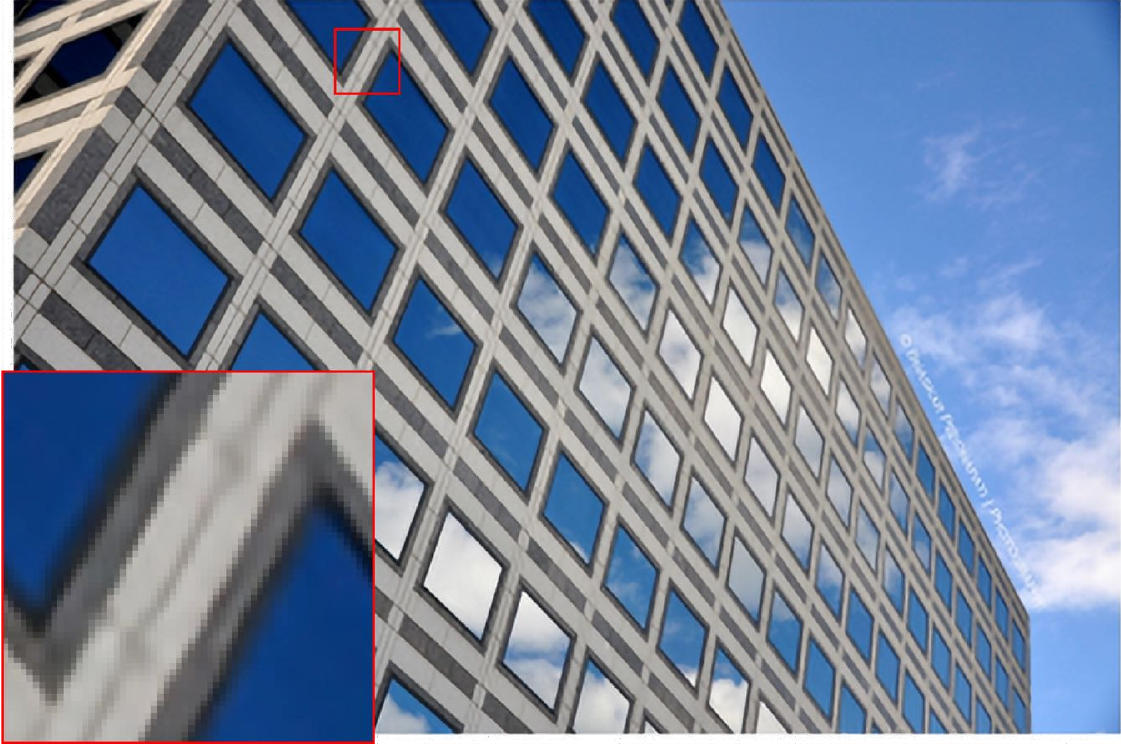} \hspace{-0.45cm} &
\includegraphics[height=1.1in,width=1.7in,angle=0]{./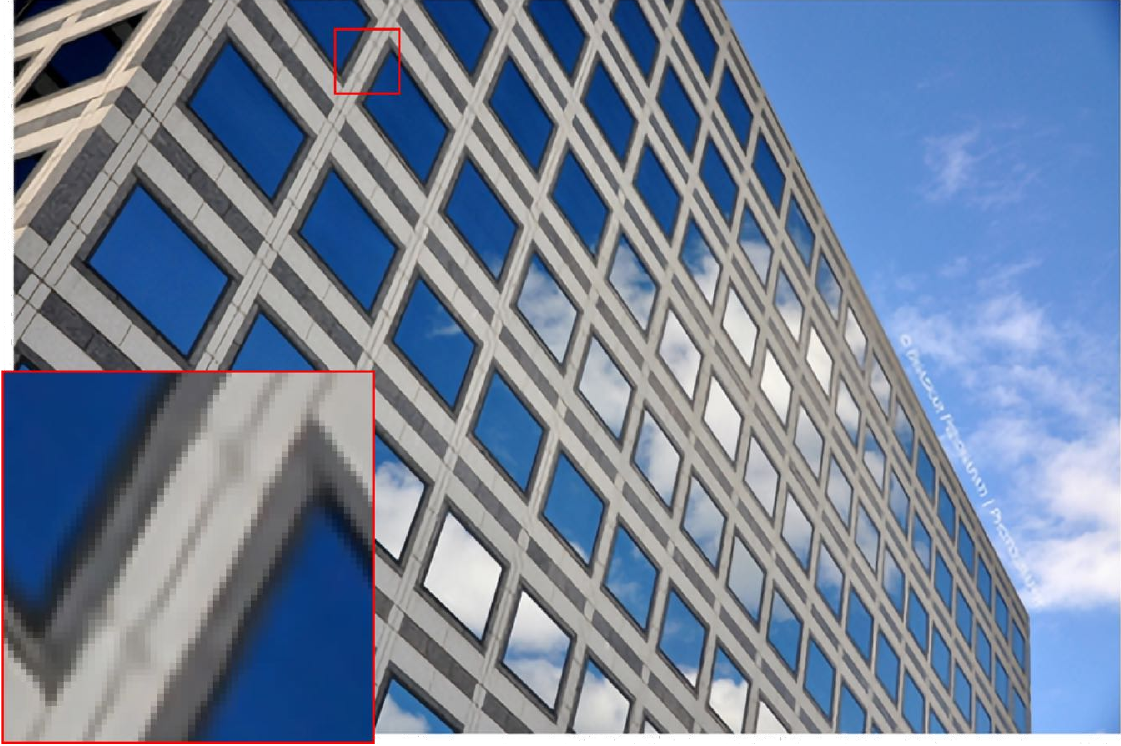}
\hspace{-0.45cm} &
\includegraphics[height=1.1in,width=1.7in,angle=0]{./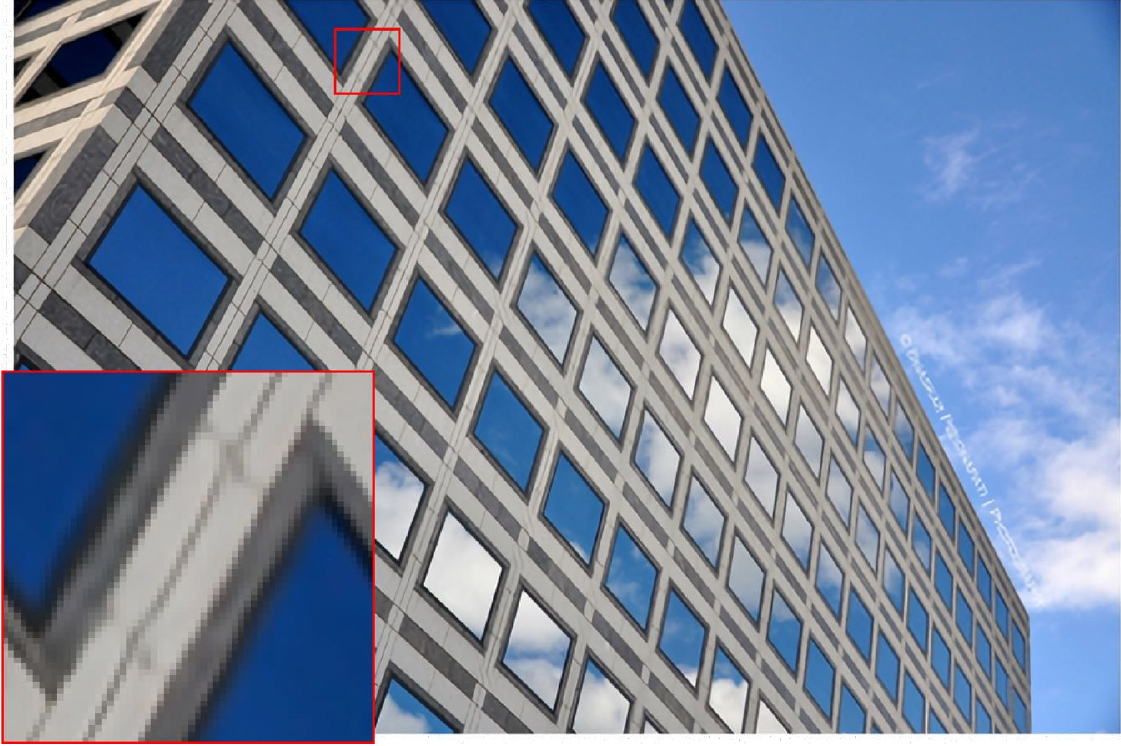}
\vspace{-0.07cm}
\\
{\sz{(PSNR/SSIM)}} & {\sz{(29.31/0.8816)}} & {\sz{(30.05/0.8958)}} & {\sz{(31.18/0.9136)}}\\
\includegraphics[height=1in,width=1.7in,angle=0]{./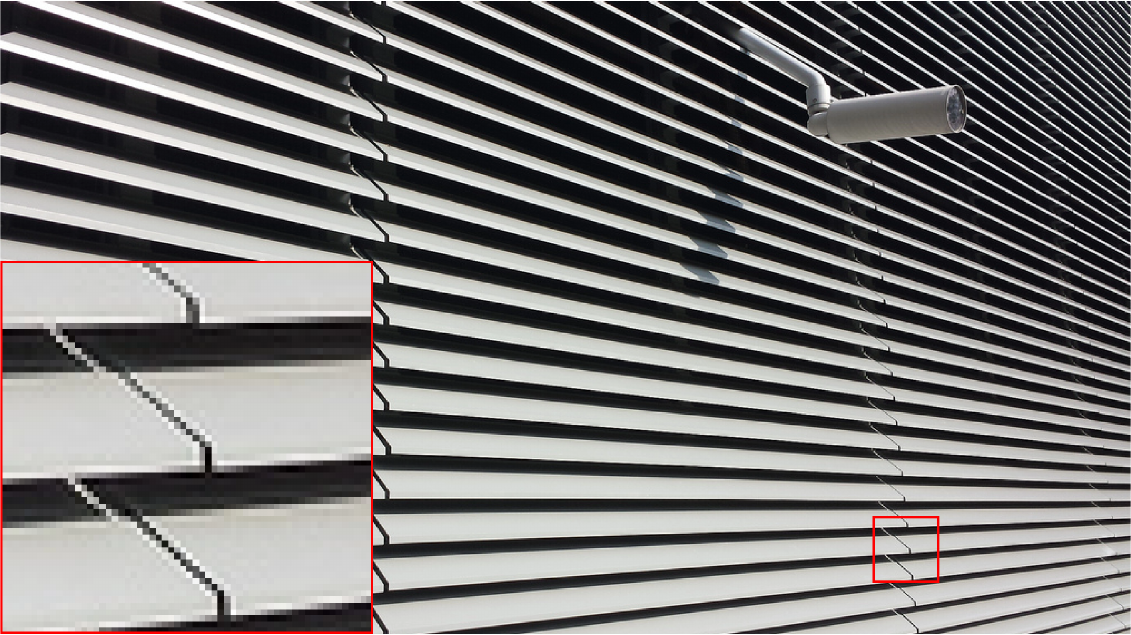}
\hspace{-0.45cm} &
\includegraphics[height=1in,width=1.7in,angle=0]{./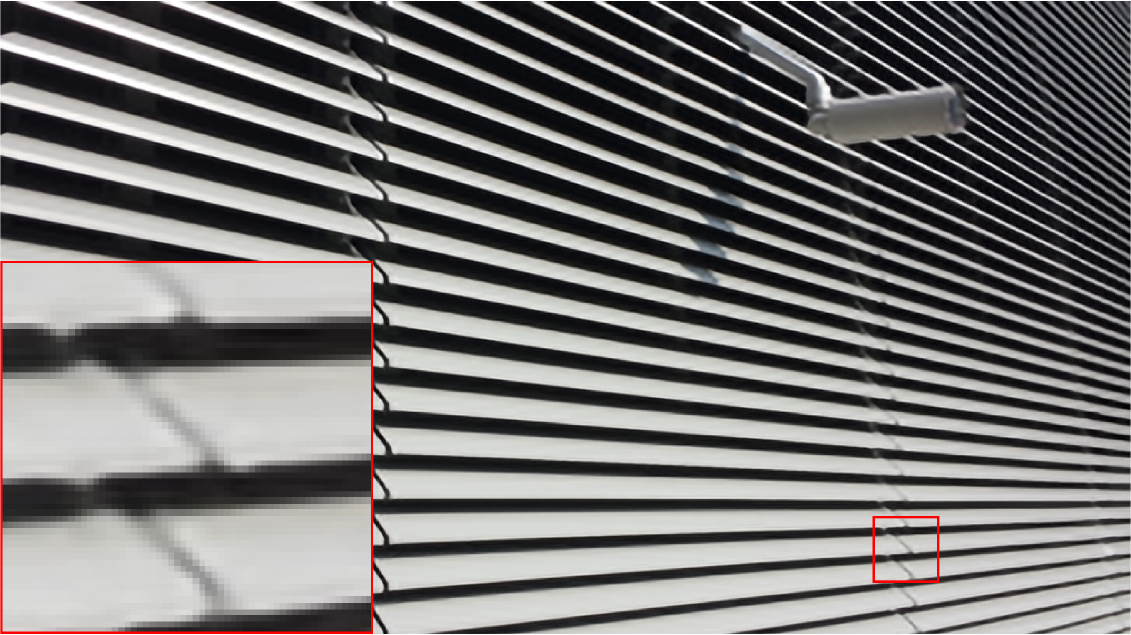} \hspace{-0.45cm} &
\includegraphics[height=1in,width=1.7in,angle=0]{./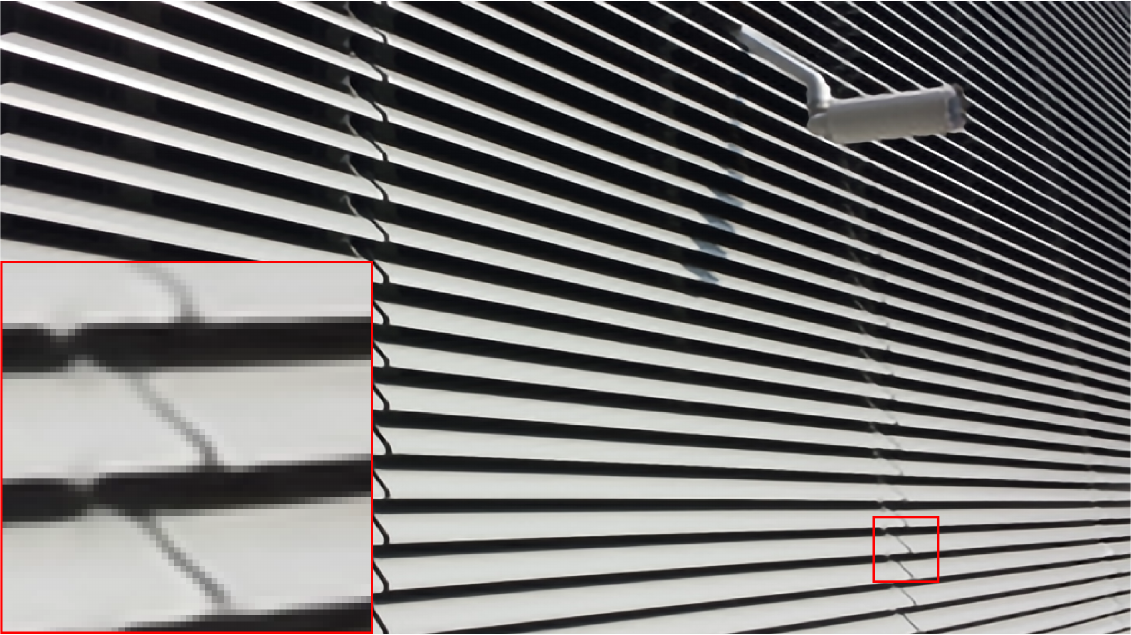}
\hspace{-0.45cm} &
\includegraphics[height=1in,width=1.7in,angle=0]{./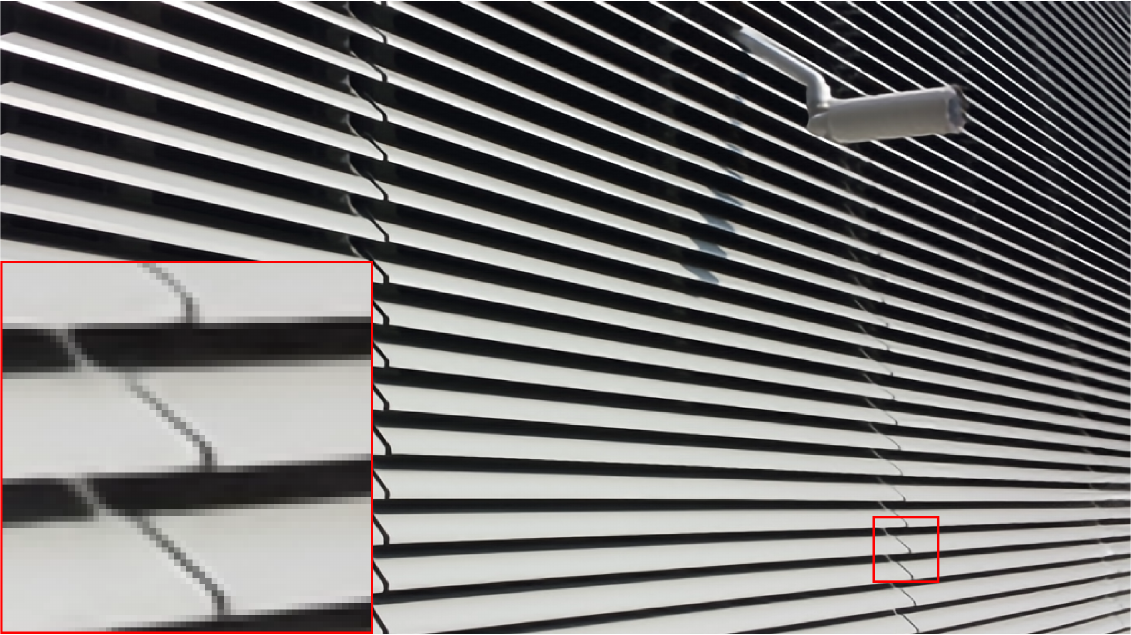}
\vspace{-0.07cm}
\\
{\sz{(PSNR/SSIM)}} & {\sz{(24.59/0.9023)}} & {\sz{(26.10/0.9308)}} & {\sz{(26.99/0.9414)}}\\
\end{tabu}
\end{center}
\caption{Visual super-resolution results of {\texttt{DRRN-f25}}, {\texttt{DRRN-f25+AIL}} and {\texttt{DRRN}}. First row: the super-resolution results for image '35' from Urban100 dataset when scaling factor is $3$. Second row: the super-resolution results for image '40' from Urban100 dataset when scaling factor is $4$.}
\label{fig:DRRN-f25}
\end{figure*}

\begin{table}\footnotesize%
\caption{Average PSNR/SSIM of {\texttt{FSRCNN}} and {\texttt{FSRCNN+AIL}} on four test datasets. {\blue{$\uparrow$PSNR/SSIM}} and {{$\downarrow$PSNR/SSIM}} denote the performance increase and decrease over {\texttt{FSRCNN}}, respectively.}
\renewcommand{\arraystretch}{1.2}
\begin{center}
\begin{tabular}{l|c|c|cc}
\hline
Dataset & scale & FSRCNN & \multicolumn{2}{c}{FSRCNN+AIL}\\
\hline
\multirow{3}{*}{Set5} & $\times$2 & 37.01/0.9570 & {\textbf{37.41/0.9587}} & {\blue{$\uparrow$0.40}}/{\blue{$\uparrow$0.0017}}\\
& $\times$3 & 33.01/0.9143 & {\textbf{33.33/0.9188}} & {\blue{$\uparrow$0.32}}/{\blue{$\uparrow$0.0045}}\\
& $\times$4 & 30.66/0.8699 & {\textbf{30.98/0.8776}} & {\blue{$\uparrow$0.31}}/{\blue{$\uparrow$0.0077}}\\
\hline
\multirow{3}{*}{Set14} & $\times$2 & 32.74/0.9103 & {\textbf{33.00/0.9123}} & {\blue{$\uparrow$0.26}}/{\blue{$\uparrow$0.0020}}\\
& $\times$3 & 29.56/0.8262 & {\textbf{29.73/0.8300}} & {\blue{$\uparrow$0.17}}/{\blue{$\uparrow$0.0038}}\\
& $\times$4 & 27.71/0.7583 & {\textbf{27.92/0.7643}} & {\blue{$\uparrow$0.20}}/{\blue{$\uparrow$0.0061}}\\
\hline
\multirow{3}{*}{BSD100} & $\times$2 & 31.55/0.8921 & {\textbf{31.72/0.8941}} & {\blue{$\uparrow$0.18}}/{\blue{$\uparrow$0.0020}}\\
& $\times$3 & 28.55/0.7904 & {\textbf{28.67/0.7938}} & {\blue{$\uparrow$0.12}}/{\blue{$\uparrow$0.0034}}\\
& $\times$4 & 27.02/0.7162 & {\textbf{27.13/0.7204}} & {\blue{$\uparrow$0.11}}/{\blue{$\uparrow$0.0042}}\\
\hline
\multirow{3}{*}{Urban100} & $\times$2 & 29.77/0.9010 & {\textbf{30.24/0.9076}} & {\blue{$\uparrow$0.47}}/{\blue{$\uparrow$0.0066}}\\
& $\times$3 & 26.43/0.8071 & {\textbf{26.70/0.8158}} & {\blue{$\uparrow$0.27}}/{\blue{$\uparrow$0.0088}}\\
& $\times$4 & 24.61/0.7279 & {\textbf{24.83/0.7393}} & {\blue{$\uparrow$0.23}}/{\blue{$\uparrow$0.0115}}\\
\hline
\end{tabular}
\end{center}
\label{table:AIL_FSRCNN}
\end{table}

\begin{figure*}[htp]
\setlength{\abovecaptionskip}{0pt}
\begin{center}
\begin{tabu} to 1\textwidth{ccccc}
Ground truth & FSRCNN & FSRCNN+AIL\\
\includegraphics[height=1.2in,width=1.7in,angle=0]{./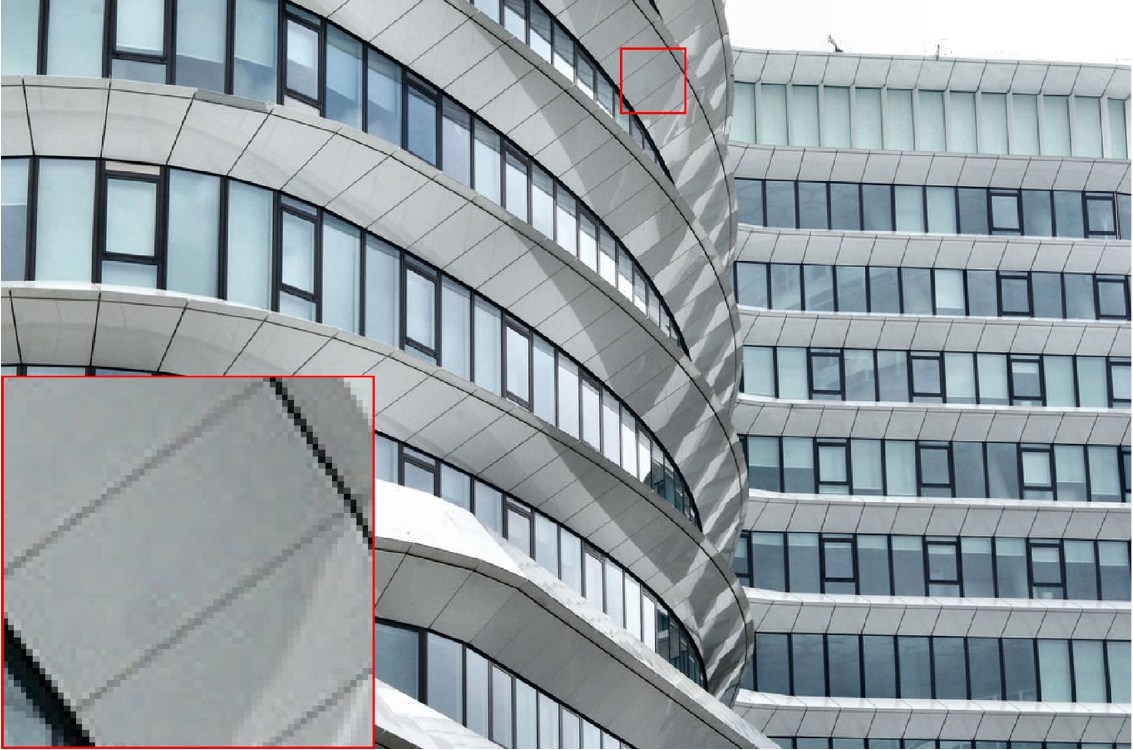}
\hspace{-0.45cm} &
\includegraphics[height=1.2in,width=1.7in,angle=0]{./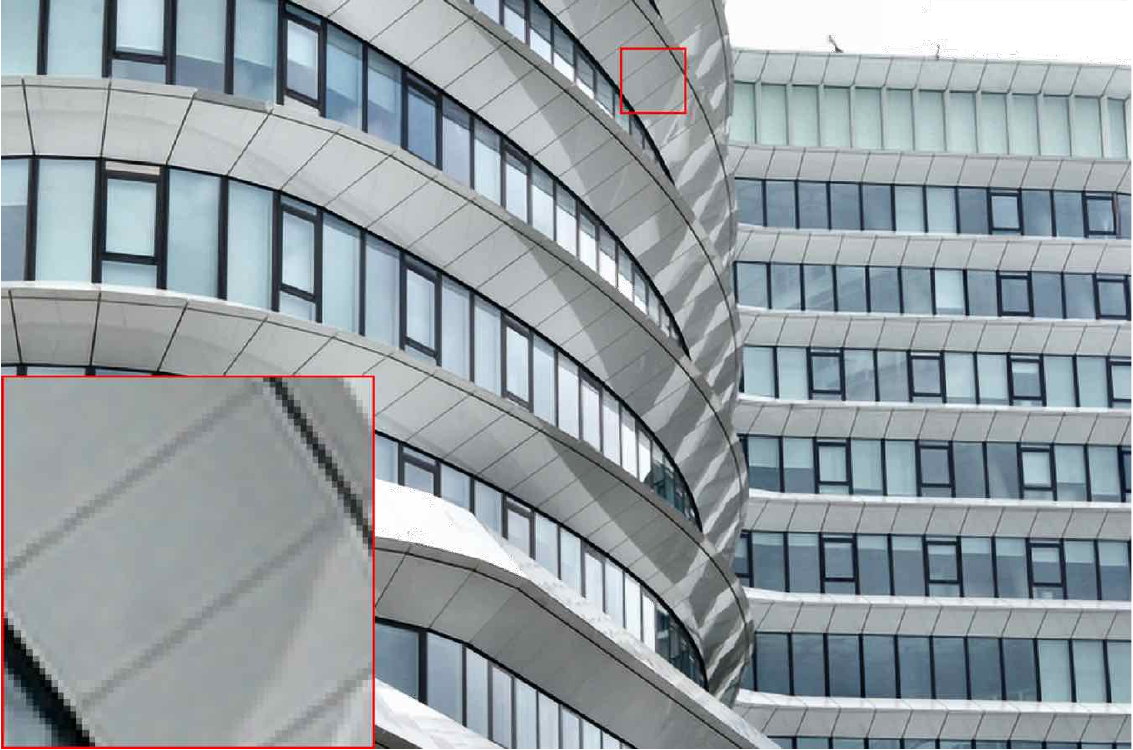} \hspace{-0.45cm} &
\includegraphics[height=1.2in,width=1.7in,angle=0]{./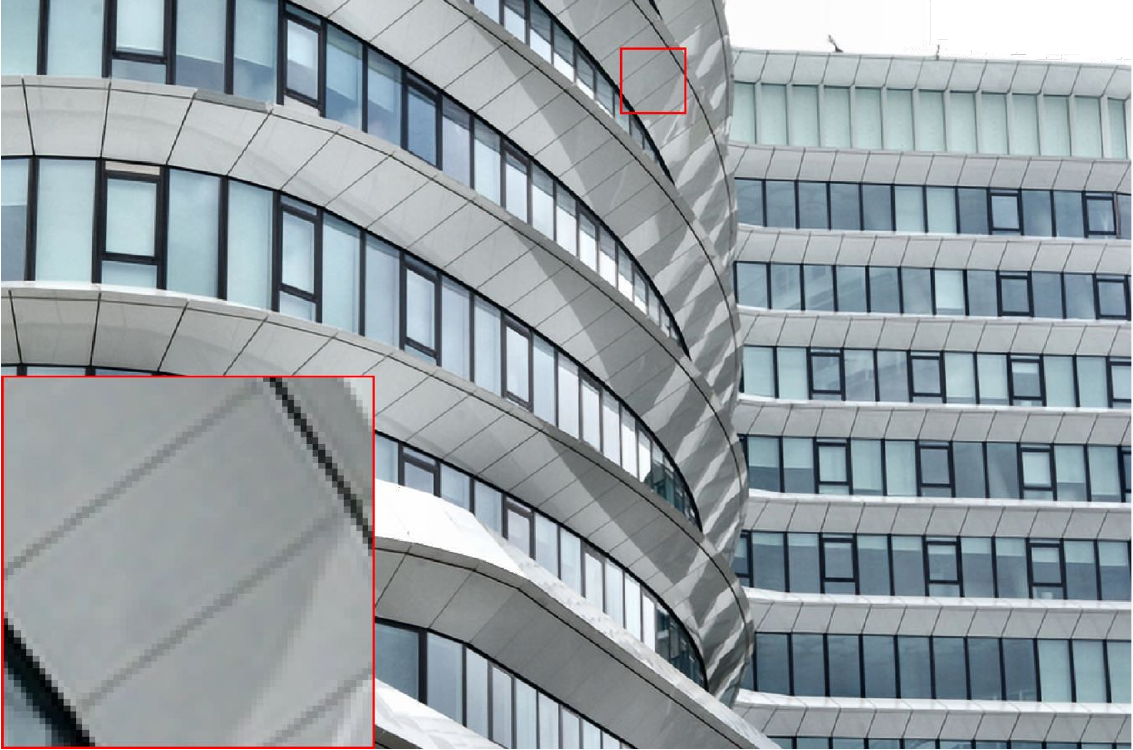}
\vspace{-0.07cm}
\\
{\sz{(PSNR/SSIM)}} & {\sz{(31.93/0.9522)}} & {\sz{(32.89/0.9615)}}%
\\
\includegraphics[height=2.1in,width=1.7in,angle=0]{./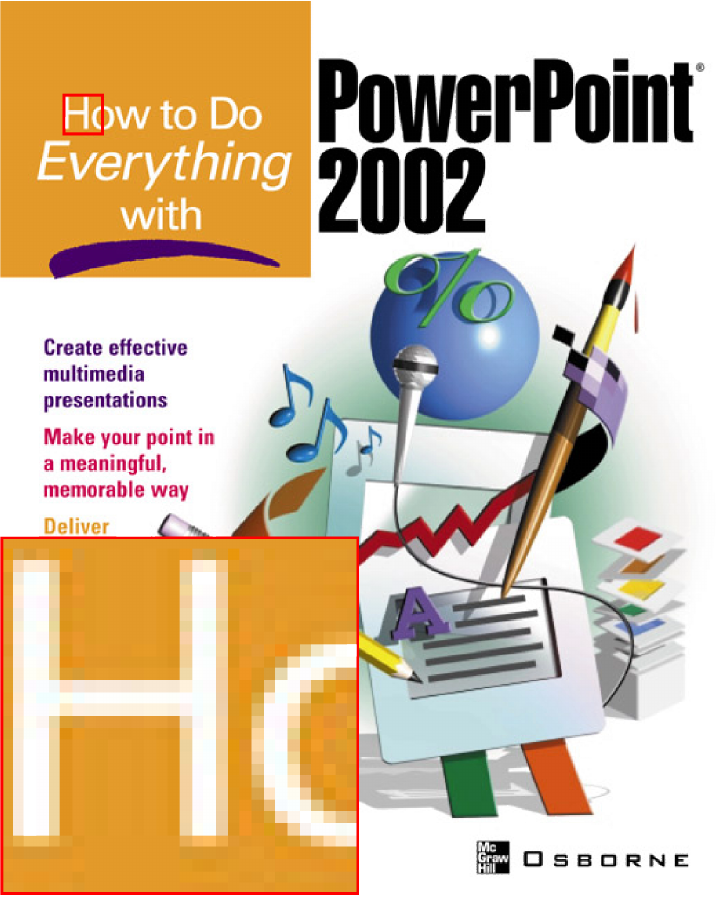}\hspace{-0.45cm} &
\includegraphics[height=2.1in,width=1.7in,angle=0]{./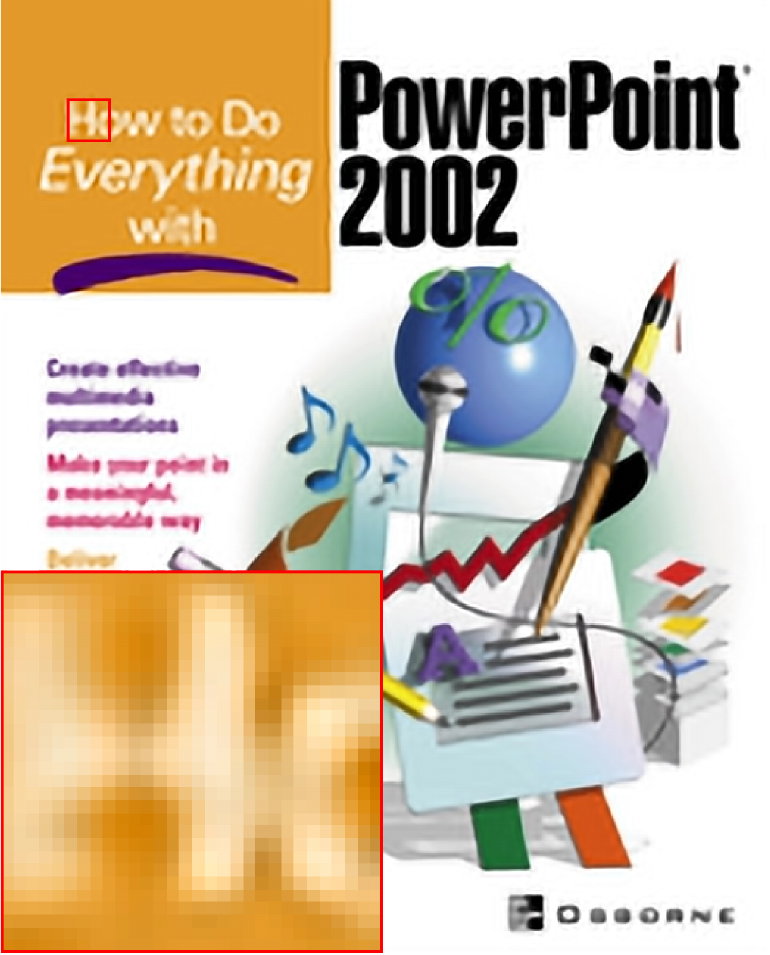} \hspace{-0.45cm} &
\includegraphics[height=2.1in,width=1.7in,angle=0]{./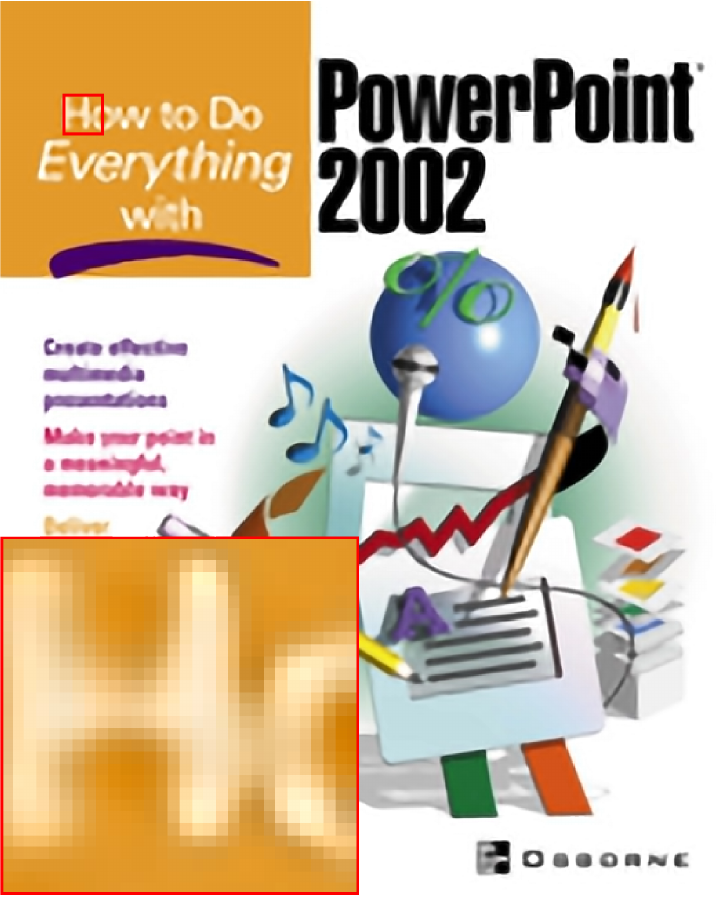}
\vspace{-0.07cm}
\\
{\sz{(PSNR/SSIM)}} & {\sz{(24.82/0.8957)}} & {\sz{(25.24/0.9175)}} %
\\
\end{tabu}
\end{center}
\caption{Visual super-resolution results of {\texttt{FSRCNN}} and {\texttt{FSRCNN+AIL}}. First row: the super-resolution results for image '52' from Urban100 dataset when scaling factor is $2$. Second row: the super-resolution results for image 'ppt3' from Set14 dataset when scaling factor is $4$.}
\label{fig:FSRCNN}
\end{figure*}

\subsection{Enhancing lightweight network with other architectures}
Due to not involving modifying the architecture of network, the proposed learning scheme can be directly applied to any lightweight DCNN based SISR methods. To demonstrate this point, we further evaluate the proposed learning scheme on another seminal network for SISR, {\texttt{DRRN}}~\citep{tai2017image}. Specifically, we implement a lightweight network with $25$ feature maps (i.e., $\rho=0.8$) in each convolution layer. The corresponding parameters as well computational complexity can be found in Table~\ref{table:AIL_Complexity}. Then, we train this lightweight network with the traditional learning scheme in Eq.~\eqref{eq:eq1} and the proposed adaptive importance learning scheme as Algorithm~\ref{alg:bpl}. In the proposed learning scheme, the pre-trained {\texttt{DRRN}} is utilized to initialized the importance. The obtained two networks are termed {\texttt{DRRN-f25}} and {\texttt{DRRN-f25+AIL}}, respectively. Similar as that in Section~\ref{subsec:VDSR}, the quantitative and visual results of these two networks are provided in Table~\ref{table:AIL_DRRNf25} and Figure~\ref{fig:DRRN-f25}. We can find that the propose learning scheme can obviously improve the performance of the corresponding lightweight network. For example, when the scaling factor is $2$ on the Urban100 dataset, {\texttt{DRRN-f25+AIL}} outperforms {\texttt{DRRN-f25}} in PSNR and SSIm by $0.51$db and $0.0059$, respectively. In Figure~\ref{fig:DRRN-f25}, {\texttt{DRRN-f25+ILT}} and {\texttt{DRR\-N-f25+AIL}} produces more sharp and clear results than that of {\texttt{DRRN-f25}}.

In previous experiments, we customize all lightweight networks by reducing the amount of filters in each convolution layer from a given teacher network. As mentioned in Section~\ref{sec:light}, there are some other choices~\cite{dong2016accelerating,shi2016real} that focus on investigating new architecture. To further demonstrate the effectiveness of the proposed learning scheme on those network with specialized lightweight architectures, we employ it to train the {\texttt{FSRCNN}}~\citep{dong2016accelerating} which exhibits a hourglass-shape structure. Similar as previous experiments, given the lightweight network, we train it separately with the traditional learning scheme as Eq.~\eqref{eq:eq1} and the proposed adaptive importance learning in Algorithm~\ref{alg:bpl}. The learned networks are termed {\texttt{FSRCNN}} and {\texttt{FSRCNN+AIL}}, respectively. For training {\texttt{FSRCNN+AIL}}, we adopt the pre-trained {\texttt{VDSR}} as the teacher network for importance initialization. The numerical results of these two networks on four test datasets are reported in Table~\ref{table:AIL_FSRCNN}. Since we adopt a larger training dataset, the performance of the {\texttt{FSRCNN}} is slightly higher that in~\cite{dong2016accelerating}. In Table~\ref{table:AIL_FSRCNN}, we can find that {\texttt{FSRCNN+AIL}} surpasses {\texttt{FSRCNN}} clearly in all cases. For example, when the scaling factor is $2$ on both Set5 and Urban100 datasets, {\texttt{FSRCNN+AIL}} improves the PSRN of {\texttt{FSRCNN}} at least by $0.4$db. More visual evidence can be found in Figure~\ref{fig:FSRCNN}. 

Therefore, we can conclude that the proposed adaptive importance learning scheme is a general SISR learning scheme and can be applied to any given lightweight network architectures for performance enhancement.

\section{Conclusion}
In this study, we present an easy-to-complex learning strategy, termed adaptive importance learning scheme, to enhance the fitting capacity of a given lightweight SISR network architecture. The propose learning scheme integrates network training and pixel-wise importance learning into a joint optimization framework, which can be well addressed in an alternative way. Through dynamically updating the importance of image pixels, the network starts with learning to reconstruct easy pixel at the beginning, and then are exposed to more and more complex pixels for training. By doing this, the fitting capacity can be gradually enhanced and ultimately maximized when the learning scheme converges. In addition, the learning scheme enables seamlessly assimilating the knowledge from a more powerful teacher network to initialize the importance of image pixels, which leads to better initial capacity of the network as well as the ultimate super-resolution performance. Extensive experimental results on four benchmark datasets demonstrate that the proposed learning strategy is able to enhance the super-resolution performance of a given lightweight network with different architectures or scales.

It is noteworthy that the proposed adaptive importance learning is general learning paradigm for enhancing the light\-weight regression networks. In the future, we will further exploit its potential benefits in other regression problems, e.g., image denoising, image deblurring and image inpainting etc.

{
\bibliographystyle{spbasic}\small
\bibliography{refs}
}

\end{document}